\documentclass[11pt]{article}

\usepackage[margin=1in]{geometry}
\usepackage[T1]{fontenc}
\usepackage[utf8]{inputenc} % harmless; optional on modern LaTeX
\usepackage{microtype}
\usepackage{authblk}
\usepackage{amsmath,amssymb,amsfonts,amsthm}
\usepackage{bm}

\usepackage{graphicx}
\usepackage{wrapfig}
\usepackage{subcaption}
\usepackage{booktabs}
\usepackage{multirow}

\usepackage{algorithm}
\usepackage{algorithmic}

\usepackage{enumitem}

\usepackage[numbers,sort&compress]{natbib}

\usepackage{xcolor}
\usepackage[
    colorlinks=true,
    linkcolor=blue,
    citecolor=blue,
    urlcolor=blue
]{hyperref}

\usepackage{tikz}
\usetikzlibrary{
    arrows.meta,
    positioning,
    calc,
    decorations.pathreplacing,
    shapes.geometric,
    fit,
    backgrounds
}

\usepackage{pgfplots}
\pgfplotsset{compat=1.18}

\newtheorem{theorem}{Theorem}
\newtheorem{proposition}[theorem]{Proposition}
\newtheorem{lemma}[theorem]{Lemma}

\newtheorem{remark}[theorem]{Remark}
\newtheorem{definition}[theorem]{Definition}
\newtheorem{assumption}[theorem]{Assumption}

\DeclareMathOperator*{\argmin}{arg\,min}

\usepackage{xcolor}         % colors

\title{Graph Representation via Elements of Discrete Morse and Cobordism Theories}

\author[1]{Jennifer Rozenblit}
\author[3]{Chenguang Yang}
\author[3]{Yuxin Liu}
\author[3]{Yuzhou Chen}
\author[2]{Yulia R. Gel}

\affil[1]{Department of Mathematics; University of Texas, Austin}
\affil[2]{Department of Statistics; Virginia Tech}
\affil[3]{Department of Statistics; University of California, Riverside}

\begin{document}
\maketitle

\begin{abstract}
Topology is, by its nature and design, suited to structure that is nonlinear, multiscale, and nonstationary - however, within machine learning, its use remains largely confined to topological data analysis. We advocate that tools from low-dimensional topology which have remained almost exclusively contained within the domain of pure mathematics (such as Morse theory and cobordism theory) offer a strong, complementary, and yet virtually unexplored perspective on the hidden structure of data-generating processes and learning tasks built upon them. Here we introduce concepts from cobordism theory and harness tools from discrete Morse theory to improve the performance of graph diffusion models. Further, we derive theoretical guarantees and  
sufficient conditions so that under a positive decision-gap, the Morse-theoretic tools and their application for induced diffusion guidance are stable under small perturbations. Finally, we illustrate the utility of cobordism and discrete Morse theory in application to graph diffusion models for spatio-temporal graph forecasting and graph regeneration, and argue that these applications are only a small window into the part of what low-dimensional topology can offer to the field of machine learning.

\end{abstract}

% =========================
% 1. Introduction
% =========================
\section{Introduction}
\label{sec:intro}
%\YGL{Need to start from topology not diffusion as even title is now on representation learning}
In recent years, the use of theory from topology in machine learning has seen rapid growth, yet remains largely focused on tools from topological data analysis~\cite{carlson, cosk}. We argue that cobordism and discrete Morse theory (which, with the exception of few studies on topological quantum field theory~\cite{tqft,marciano2024deepneuralnetworkssemiclassical}) remain almost exclusively confined within pure mathematics, but offer strong new complementary insights into learning latent data representations. To substantiate our claim, we show the utility of discrete Morse and cobordism theories via their application to graph diffusion.

Graph diffusion has recently emerged as a new %models are a strong 
powerful paradigm for graph generation, with applications ranging from new material design to traffic forecasting~\cite{cao2024survey, liu2023generative, shou2026graph, yang2024survey_ts_st_diffusion}.
%\citep{vignac2022digress, jo2022gdss, huang2022graphdf}.
However, prevailing graph diffusion approaches often struggle to preserve {\it global structural organization} such as cycles and higher-order motifs, particularly when trained with purely local objectives~\cite{wang2025graphdiffusionmodelsaccurately}. In turn, these often overlooked fundamental (or independent)
features convey critical information on the underlying latent organization and functionality of the complex system, and %potentially 
play %a particularly 
an important role in both spatiotemporal graph learning and graph regeneration. 
For example, in transportation systems, fundamental cycles can describe independent alternative traffic loops or determine robustness of traffic flow against disruptions. %Similarly, fundamental cycles in power grid networks can describe how electricity can be transmitted along the power lines, preventing unintended power flow onto another cycle. Understanding these cycles can therefore critically reveal redundant power lines and a network’s resilience to attacks and cascading failures.
Moreover, because graphs are discrete combinatorial structures whose global topology can shift abruptly with even minor local modifications, denoising trajectories are often susceptible to what we call ``{\it topological drift}''.
Recent approaches on higher-order guided diffusion such as HOG-Diff~\cite{huang2025hogdiffhigherorderguideddiffusion} aim to address this gap by considering a coarse-to-fine curriculum based on cell-complex filtering. Although compelling, the resulting guidance is still effectively heuristic, as the diffusion process itself is still driven by 1-skeleton information, and higher-order structure is still used transiently as an ordering signal. 

%Motivated by the ideas of the higher-order guided diffusion, 
Instead, we choose to look at graph diffusion %generation 
%process 
through a prism of
intermediate structures derived from higher-order representations. Adopting a coarse-to-fine viewpoint, we ask a complementary question: \emph{what intermediate object most directly encodes topological events and guides future steps?} To answer this question, we propose a discrete Morse skeleton that takes a graph $G = (V,E)$ and {\it explicitly} identifies critical cells (i.e., cells that characterize the fundamental topological structure or changes) and the gradient-respecting connectivity structure on $G$ between them. We then use these two structures, together which form a \textit{topological skeleton}, as the conditioning signal for diffusion.

%\paragraph{Relation to higher-order guided diffusion.}
% Recent higher-order guided diffusion methods motivate graph generation through intermediate structures derived from higher-order representations. We adopt this coarse-to-fine viewpoint and ask a complementary question: \emph{what intermediate object most directly encodes topological events?} We propose a discrete Morse skeleton that explicitly identifies critical cells (cells that characterize the fundamental topological structure or changes) and gradient-respecting connectivity (i.e., how structure can be refined without backtracking). We then use this topological skeleton as the conditioning signal for diffusion.

%\paragraph{Cobordism Viewpoint: Generation as Topology-Changing Trajectories}
To track fundamental structures and their associated topological changes, we invoke \emph{cobordism theory} from low-dimensional topology (see \cite{milnor1963morse} for background). Rather than studying spaces in isolation, cobordisms study the \emph{process by which one space transforms into another}. This topological framing provides a pathway to address a key limitation of existing GDMs: their current treatment of generation as a sequence of purely \textit{local} denoising steps, without explicitly modeling how global structure emerges, merges, or dissipates over time.

Intuitively, a cobordism can be viewed as a long-exposure photograph of a person walking across a room. Rather than capturing a single frozen moment, the image records the entire journey as a blurred light trail from start to finish. In this sense, a cobordism encodes an object \emph{changing through time}; not a static snapshot, but the full trajectory itself (see Fig.~\ref{fig:human_cobord}). Where classical geometry studies objects at fixed moments, cobordisms study the transformation—the morphism—connecting one state to another. Crucially, not every point along this trajectory is equally significant, as certain moments, such as changes in direction or pauses before motion, mark meaningful transitions. Formally, a cobordism is a higher-dimensional object whose boundary consists of two lower-dimensional spaces, encoding an entire topology-changing trajectory between them (see Fig.~\ref{fig:triv-cob} for a nontrivial cobordism between two trivial spaces).

\begin{figure}[t]
    \centering
    \begin{subfigure}[t]{0.48\linewidth}
        \centering
        \includegraphics[width=\linewidth]{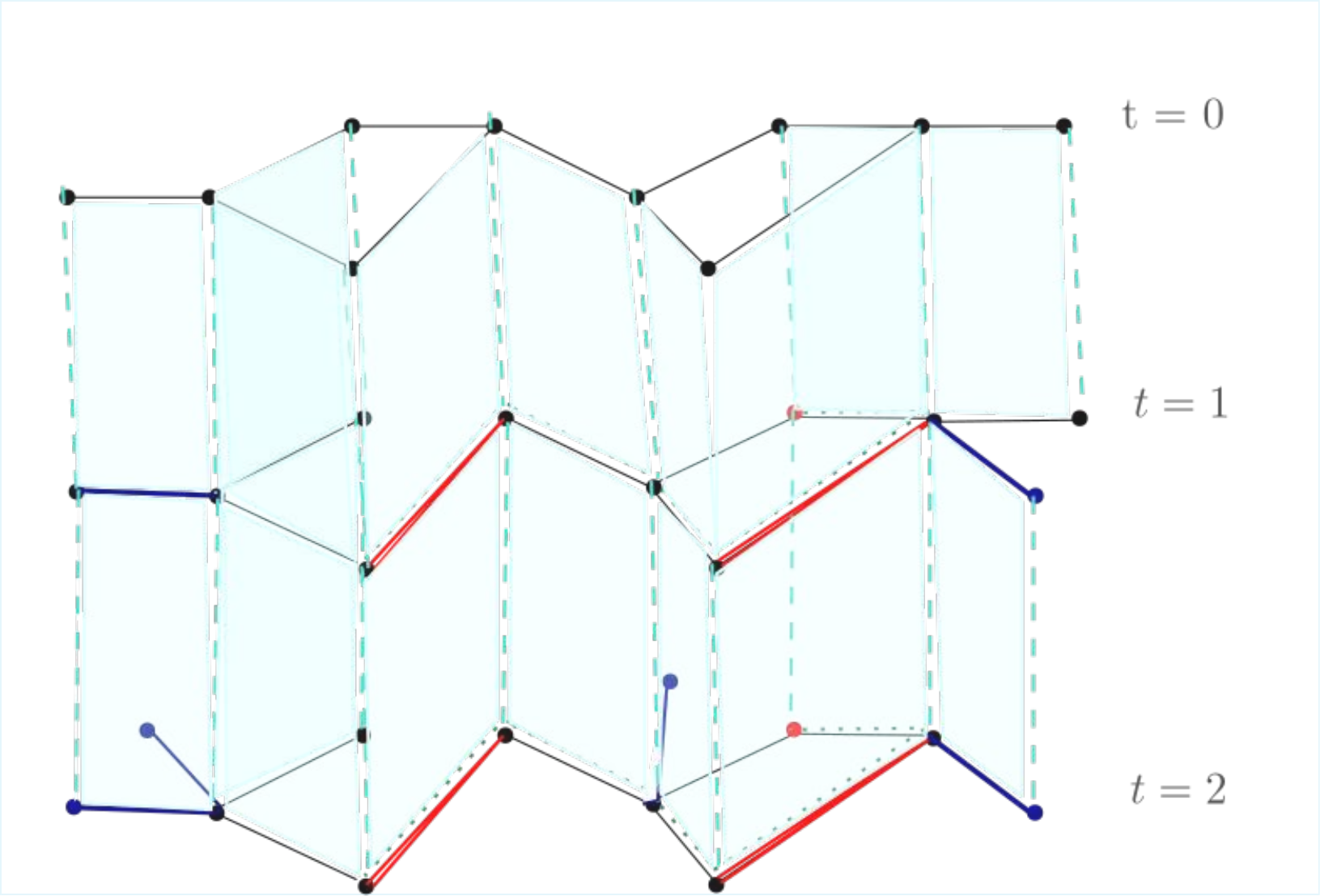} 
        \caption{\textbf{Iterative perturbation as discrete cobordism sampling.} 
        Once the Morse skeleton (critical cells and their descending manifolds) is fixed, successive perturbations generate a sequence of graphs $G_0 \to G_1 \to G_2 \to \cdots$ sharing identical Morse-theoretic structure. 
        Each perturbation modifies only edges outside the skeleton (red and dotted), preserving the topological backbone. We can think of the process as sampling from time slices of a discrete cobordism $W$: all $G_i$ are cobordant via the identity on their shared Morse complex, realizing topology-preserving graph generation as traversal through a cobordism cylinder.}
        \label{fig:cobordism-sampling}
    \end{subfigure}%
    \hfill
    \begin{subfigure}[t]{0.48\linewidth}
        \centering
        \includegraphics[width=.8\linewidth]{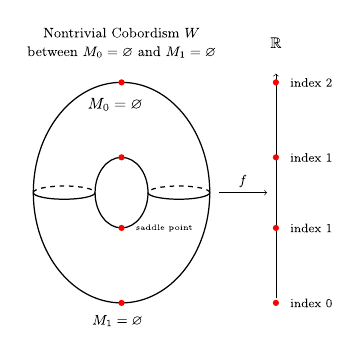}
        \caption{\textbf{Cobordism depiction.} The torus $T^2$ represents a nontrivial cobordism $W$ between $M_0 = \varnothing$ and $M_1 = \varnothing$. As we sweep through the Morse function $f$, the topology of level sets evolves: passing the index-0 critical point births a circle from the void, the first index-1 saddle splits this circle into two disjoint loops, the second saddle re-merges into one, and finally the index-2 maximum collapses the circle back to empty. This sequence of topological transitions (birth, splitting, merging, death) demonstrates how a cobordism can interpolate between boundary conditions while traversing rich intermediate topology.}
        \label{fig:triv-cob}
    \end{subfigure}
    \label{fig:cobordism-pair}
\end{figure}

We introduce this perspective into graph generation and graph diffusion; rather than viewing diffusion as a sequence of unrelated noise-removal steps, we interpret generation as a \emph{discrete cobordism} (see Fig.~\ref{fig:cobordism-sampling}), where critical cells $\mathcal{C}$ represent fundamental topological events and gradient-respecting paths $\mathcal{D}$ between those cells represent the graph's fundamental skeleton $\mathcal{S}(G) = (\mathcal{C}, \mathcal{D})$. Discrete Morse theory makes this viewpoint fully combinatorial, where the cobordism is encoded by critical simplices and the directed structure induced by a discrete gradient vector field.
% \begin{figure}[t]
%     \centering
%     \includegraphics[width=0.45\linewidth]{png pls_cropped.pdf}
%     \caption{\textbf{Iterative perturbation as discrete cobordism sampling.} 
%     Once the Morse skeleton (critical cells and their descending manifolds) is fixed, successive perturbations generate a sequence of graphs $G_0 \to G_1 \to G_2 \to \cdots$ sharing identical Morse-theoretic structure. 
%     Each perturbation modifies only edges outside the skeleton (red and dotted), preserving the topological backbone. 
%     We can think of process as sampling  from time slices of a discrete cobordism $W$: all $G_i$ are cobordant via the identity on their shared Morse complex, realizing topology-preserving graph generation as traversal through a cobordism cylinder.}
%     \label{fig:cobordism-sampling}
% \end{figure}
This cobordism-based view explains why our intermediate object must encode more than persistence or filtration order in topological data analysis~\citep{carlsson2009topology}. Persistence records \emph{when} topological features appear and disappear, but not \emph{how} they are connected by gradient flow. In particular, multiple discrete gradient vector fields may be compatible with the same filtration and induce identical persistence diagrams, while
exhibiting distinct descending-manifold structures (further shown in  Fig.~\ref{fig:ph_vs_flow} and Appx.~\ref{sec:ph_counterexamples}). This motivates the need for an intermediate object that captures both event timing and event connectivity; cobordism-aware structure retains precisely this missing information. Conditioning diffusion on such a structure therefore aligns the generative process with the true topology-changing process, rather than treating topology as an emergent side effect of local denoising.
Armed with the cobordism perspective, we introduce \textbf{Morse-Guided Diffusion (MG-Diff)}, which extracts a \emph{discrete Morse skeleton} from each graph and conditions a diffusion model on this skeleton. MG-Diff then yields an interpretable, sparse diffusion guide that explicitly encodes intra- and inter-graph topological events while also creating both constraints and freedom on admissible structural refinement steps.
% Each perturbation step to the fundamental structures of $G$ can be viewed as an elementary cobordism 
% that either preserves or changes graph topology. This perspective is analogous to how Khovanov homology \textbf{REFERNECE }tracks topology-changing 
% saddle cobordisms between link resolutions, though we do not construct an 
% explicit chain complex over the space of trajectories. Rather, we use the 
% skeleton as a conditioning signal to bias the distribution toward trajectories 
% that realize the correct critical structure. We note that diffusion trajectories 
% need not be monotone in any topological invariant; edges may appear and disappear 
% non-monotonically during denoising. The Morse skeleton nonetheless provides a 
% useful conditioning signal because it identifies the \emph{stable} topological 
% structure that persists in the final graph, even if intermediate states exhibit 
% transient features.
As Fig.~\ref{fig:cobordism-sampling} shows, each perturbation step to $G$ can be viewed as an elementary cobordism that either preserves or changes the graph's topology \cite{miller2023singularity}. %Though we do not construct an explicit chain complex over the space of trajectories, 
We use the skeleton as a conditioning signal to bias the distribution toward trajectories that realize correct critical connectivity structure. We note that diffusion trajectories need not be monotone in any topological invariant, as edges may appear and disappear non-monotonically during denoising~\cite{jo2022gdss}. %Still, 
The main utility of the Morse skeleton is providing a useful conditioning signal by identifying points both within and between the changing topology of the graph that should remain stable within the topological structure of the final graph.
In short, cobordism theory provides a mathematically principled answer to 
the question that diffusion models implicitly pose but do not explicitly 
address: \emph{what is the right way to think about a trajectory through 
state space that changes the topology of the object being generated?}
%While we do not fully exploit these structure here, 
%We conclude by noting that despite the important insights the cobordism theory can offer to graph representation, generative models and other, in general, it remains
%We view MG-Diff as a first step toward cobordism-aware generative models, including holistic approaches to topological uncertainty quantification, and we believe the broader program of importing low-dimensional topology into ML (machine learning) holds substantial promise.

\medskip

\textbf{Novelty of our contributions can be outlined as follows:}
\begin{itemize}[leftmargin=*]        
\item We introduce a cobordism theory to ML, bringing a largely untapped body of low-dimensional topology from pure mathematics to graph diffusion.

%into M, particularly for graph diffusion. 

%We introduce a cobordism theory and discrete Morse-guided approach to graph diffusion, bringing largely untapped low-dimensional topology machinery from mathematical theory  into ML, particularly for graph diffusion. 
% the machinery of cobordism theory and, more generally, elements of low-dimensional topology which remain largely unexplored outside pure mathematics, to machine learning and graph diffusion, in particular.
\item We provide an end-to-end implementation of Morse scaffold extraction and show how it integrates into a coarse-to-fine diffusion %pipeline 
for topology-aware graph generation.
% \item \item Among other theoretical results, we show that skeleton extraction is locally stable: if the relevant decision gaps are separated by a positive margin, then the resulting skeleton (and the guidance it induces) unchanged under small perturbations.
\item Among other theoretical frameworks, we model skeleton extraction as a finite-comparison deterministic \emph{extractor} yields a local stability result: when the relevant decision gaps are separated by a positive margin, the extracted skeleton (and its induced guidance) remains unchanged under small perturbations.
% \item Among other theoretical results, we derive stability guarantees by modeling skeleton extraction as a finite-comparison deterministic \emph{extractor}: under a positive decision-gap (margin) condition, the skeleton and its induced guidance are invariant to small perturbations.
\item We illustrate the introduced concepts from cobordism and discrete Morse theories in application to graph diffusion for spatio-temporal forecasting and regeneration. Our results suggest that MG-Diff  does not only yield competitive performance but shows a high promise  in terms of uncertainty quantification on all but one dataset, achieving relative CRPS gains of 6\%-16\%.  %point forecasts on all datasets but substantially outperforms all other DGMs in terms of uncertainty quantification on all but one dataset, achieving relative CRPS gains of 6\%-16\%. 

%the utility of Morse-Guided Diffusion (MG-Diff) in application to spatio-temporal graph forecasting and demonstrate that MG-Diff does not only yield competitive point forecasts on all datasets but substantially outperforms all other DGMs in terms of uncertainty quantification on all but one dataset, achieving relative CRPS gains of 6\%-16\%. 

\end{itemize}
\section{Elements of the Discrete Morse Theory}
\label{sec:prelim}
% \begin{wrapfigure}[18]{r}[0pt]{7cm}
% \vspace*{-5mm}
%   %\begin{center}
% %\begin{figure}%[t]
%     \centering
%     \includegraphics[width=1\linewidth]{Figures/testing0.pdf}
%     \caption{\textbf{MG-Diff %pipeline 
%     (left) vs.\ higher-order guided diffusion (right).} 
%     {\bf MG-Diff}: (1)~identify critical cells (red), (2)~compute descending manifolds (dotted), (3)~perturb outside these regions (blue). 
%     {\bf Higher-order diffusion} may preserve only 2-cell boundaries (teal triangles, right), missing Morse-critical structure (e.g., critical 1-cells not bounding 2-cells) that governs topological connectivity.}
%     \label{fig:pipeline-comparison}
% %\end{figure}
% \end{wrapfigure}

\begin{figure}[t]
    \centering
    \begin{minipage}{0.45\linewidth}
        \centering
        \includegraphics[width=\linewidth]{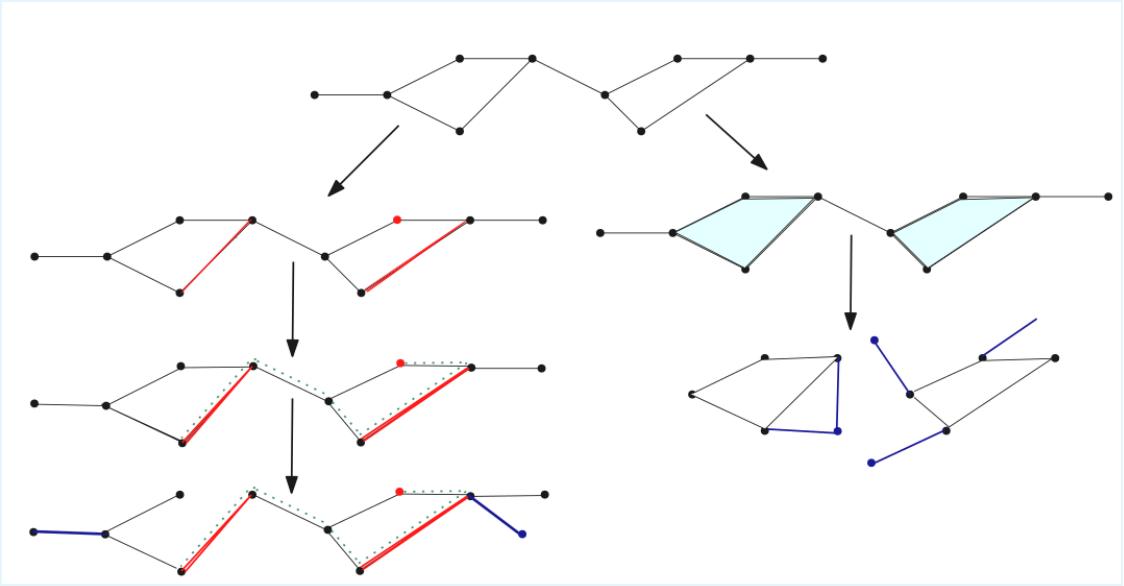}
    \end{minipage}\hfill
    \begin{minipage}{0.52\linewidth}
        \caption{\textbf{MG-Diff (left) vs.\ higher-order guided diffusion (right).} 
        MG-Diff identifies critical cells (red), computes descending manifolds (dotted), and perturbs outside these regions (blue). Higher-order diffusion preserves only 2-cell boundaries (teal triangles), missing critical 1-cells that govern topological connectivity.}
        \label{fig:pipeline-comparison}
    \end{minipage}
\end{figure}
\textbf{From Classical to Discrete Morse Theory.} Classical Morse theory studies smooth real-valued functions on manifolds and has been %successfully 
used in TDA to compute persistent homology in applications such as statistical sciences~\cite{bubenik2015statisticaltopologicaldataanalysis} and neuroscience signal analysis~\cite{giusti2015neuro}. 
% However, it fundamentally relies on smoothness assumptions that typically fail for simplicial complexes arising from graphs. In particular, the clique complex of a weighted or unweighted graph generally admits no smooth structure, preventing the direct application of classical Morse theory.
Discrete Morse theory, introduced by Forman~\cite{forman1998morse,forman2002users}, provides a combinatorial non-smooth analogue applicable to cell and simplicial complexes. Beyond encoding higher-dimensional topology, it enables substantial simplification via Morse matchings while preserving homology, often yielding significant computational savings in persistent homology for dense or large-scale complexes. \cite{mischaikow2013morse}formalized these ideas for filtered complexes, showing that persistence is preserved under filtration-compatible discrete Morse reductions.

\textbf{Morse-Theoretic Filtrations for Networks. }Persistent homology depends on a choice of filtration, since the ordering of simplices determines the birth and death of topological features~\cite{zomorodian2005computing}. For network clique complexes, naive filtrations (e.g., dimension-only) can obscure higher-order structure --- while they may reproduce heavy-tailed degree distributions, they often fail to capture heterogeneity in higher-order clique organization observed in real scale-free networks~\cite{xiao2007cliquedegree}.

We therefore adopt a \emph{discrete Morse filtration}~\cite{forman1998morse,forman2002users}, assigning real-valued weights to simplices subject to Morse consistency. Building on~\cite{kannan2019persistent}, a discrete Morse function is defined on the clique complex 
% of an 
% unweighted, undirected network
and induces a filtration via sublevel sets, with critical simplices governing topological events. Compared to early dimensional filtrations~\cite{horak2009persistent}, Morse-guided filtrations reveal finer-scale structure, provide theoretical lower bounds on the number of critical cells needed to represent homology, and achieve near-optimal reductions in practice. Although discrete Morse-based persistence has been applied in imaging~\cite{guenther2012efficient,mischaikow2013morse} and weighted simplicial complexes~\cite{Wu_2020}, its systematic use for (un)weighted complex networks remains limited. We leverage these tools within graph diffusion models to provide topology-aware guidance. 

We now briefly recall the core definitions due to Forman~\cite{forman2002users} (see Appx.~\ref{sec:discrete_morse_appendix}).

\begin{definition}[Discrete Morse function]
\label{def:MorseFun}
 Let $\Lambda$ be a simplicial complex and let $f : \Lambda \to \mathbb{R}$ assign a real value to each simplex. For a $p$-simplex $\alpha \in \Lambda^p$, define
\begin{eqnarray}
    U_f(\alpha) &=&
\{\beta \in \Lambda^{p+1} \mid \alpha < \beta,\ f(\beta) \le f(\alpha)\}\\
V_f(\alpha)
&=&
\{\gamma \in \Lambda^{p-1} \mid \gamma < \alpha,\ f(\gamma) \ge f(\alpha)\}.\nonumber
\end{eqnarray}

The function $f$ is a \emph{discrete Morse function} if
$
|U_f(\alpha)| \le 1$ and
$|V_f(\alpha)| \le 1$ for all $\alpha \in \Lambda$.
%for all $\alpha \in K$. 
\end{definition}

\begin{definition}[Critical Simplex]
\label{def:CritSimp}
A simplex $\alpha$ is \emph{critical} if
% \begin{equation}
% \label{critical_cond}
$|U_f(\alpha)| = |V_f(\alpha)| = 0.$
% \end{equation}
Equivalently, all cofaces of $\alpha$ have strictly larger function values and all faces have strictly smaller values.
\end{definition}

Critical simplices play the role of critical points in smooth Morse theory~\cite{forman1998morse}, encoding the irreducible topological features of $\Lambda$ and forming the basis for Morse reductions and topology-aware filtrations in persistent homology~\cite{kannan2019persistent}. We write that the set of all critical cells of dimension $p$ for a choice of Morse function $f$ as $C_p(f) = \{\alpha^p \in \Lambda\text{ s.t. } \alpha \text{ is critical}\}.$

\section{Methodology: Morse-Guided Diffusion}
\label{sec:method}

Higher-order guided diffusion provides a useful coarse-to-fine factorization of graph generation; for example, in protein structure modeling, it first constructs rigid secondary structure motifs ($\alpha$-helices, $\beta$-sheets as higher-order simplices), then adds flexible loops and sidechains while preserving fold topology~\cite{ingraham2019generative}.
However, in existing formulations the intermediate guidance signals do not explicitly encode \emph{intra-topological shifts}~\cite{huang2025hogdiffhigherorderguideddiffusion}, which leads to a number of fundamental limitations. First, prevailing methods often under-specify which cycles are promoted to higher-dimensional cells and how the corresponding attaching maps are chosen, leading to loss of valuable Morse-theoretic information (see for example Fig.~\ref{fig:pipeline-comparison}). Moreover, higher-order information is frequently reduced only to an ordering over edges despite diffusion itself proceeding on $1$-skeleton quantities. Finally, the presumed hierarchy can be difficult to justify on sparse or near-tree graphs, where cycle structure is weak or absent and the induced ordering may be unstable~\cite{moutsinas2021graph}. For example, in sparse transportation networks like rural road systems, minor perturbations (e.g., road closures) can drastically reorder persistent features due to the lack of redundant cycles.

To address these fundamental questions, we harness the machinery of low-dimensional topology
and introduce the notion of the {\it Morse-guided diffusion} (MG-Diff). The key idea is to replace heuristic intermediate objects with a principled Morse-theoretic skeleton $\mathcal{S}$ of the clique complex $Cl(G)$ that {\it explicitly} identifies topological junctions within and between graph states as critical cells, then to encode the influence of these junctions via a fixed representative path obtained by following a filtration-compatible discrete gradient along incidence relations.
We can then condition our graph diffusion on the resulting Morse skeleton of $G$:
$p(G) \approx p(\mathcal{S})\, p(G\mid \mathcal{S})$. While discrete Morse theory was defined on a simplicial complex, we note that the clique complex of a graph is a simplicial complex whose simplices are exactly the cliques.

\subsection{Morse Skeleton Representation of Graphs}
%\textbf{Morse skeleton representation of graphs}
To obtain a systematic \emph{compressed} representation of discrete Morse flow structure, we start from introducing the notion of \emph{Morse skeleton} on graphs. 

\begin{definition}[Morse Skeleton Representation of Graphs]
Let $\mathcal{C}(f)$ be the set of critical cells for a graph $G$ with a set of vertices $V$ ($N = |V|$), set of edges $E$ ($M = |E|$), with clique complex $Cl(G)$ and some discrete Morse
function $f$. Rather than retaining the full descending manifold of each critical cell,
which may be large and unstable, %under benign perturbations,
we associate to each $\alpha \in \mathcal{C}(f)$ a \emph{single representative
descending path} $\pi(\alpha)$ 
obtained by following a deterministic,
gradient-respecting walk in the directed Hasse diagram (see Appx.~\ref{sec:hasse} for more details.) %definition and construction
    Then, the %resulting 
    Morse skeleton (MS) is the finite set $S(G):=(\mathcal{C,D})= (\mathcal{C},\{ (\alpha, \pi(\alpha)) : \alpha \in \mathcal{C}(f) \}),
$
which encodes the essential information on both the locations of topological events (critical cells)
and a minimal, canonical description of how combinatorial flow propagates
from them.
\end{definition}
We will sometimes view the skeleton as the output of an extractor map. 
Let $\mathbb{S}$ denote the space of inputs the extractor consumes 
(the graph $G$ together with vertex scores $g$ and any auxiliary flags 
used by Algorithms~\ref{alg:discrete_morse_function}--\ref{alg:bfs}); 
write $\mathrm{Skel}: \mathbb{S} \to \mathfrak{S}$ for the deterministic 
map sending a state $s \in \mathbb{S}$ to its Morse skeleton 
$\mathcal{S}(G) \in \mathfrak{S}$, where $\mathfrak{S}$ is the (finite) 
set of skeletons of $\mathrm{Cl}(G)$. We now have machinery to explore how, given $G$, we can define a \emph{suitable discrete Morse function} $f$ and the associated sets of \emph{critical cells} $\mathcal{C}$ and an \emph{descending paths} $\mathcal{D}$. This task can be approached using the elements of Forman's discrete Morse theory~\cite{forman1998morse} (see Appx.~\ref{sec:discrete_morse_appendix} for discussion). 

%Armed with the discrete Morse theory, we describe the process and  definitions established by \cite{forman1998morse} (defined in Appx.~\ref{sec:discrete_morse_appendix}) to extract a small set of \emph{topological  events} \emph{critical cells} $\mathcal{C}$ and an associated \emph{descending path} $\mathcal{D}$, which together form what we call a \emph{Morse skeleton} $\mathcal{S}(G) = (\mathcal{C,D})$. We then subsequently use $\mathcal{S}(G)$ to guide the generative diffusion process.

%\YGL{Jen, I see that the detailed skeleton discussion is below but we need something on skeleton here (at least, basic)} \Jen{Fixed above}
%We now condition our graph diffusion on the resulting Morse skeleton of $G$: $p(G) \approx p(\mathcal{S})\, p(G\mid \mathcal{S})$.

%We now condition our graph diffusion on the resulting Morse skeleton of $G$: $p(G) \approx p(\mathcal{S})\, p(G\mid \mathcal{S})$.
% ============================================================
% Methodology: MG-Diff (Step 1: Morse-guided event extraction)
% ============================================================

\subsection{Construction of the Discrete Morse Function  
%Morse-Guided Diffusion Model
}
\label{sec:method}

%\YGL{Jen, this paragraph below seems to be somewhat repetitive. Morse skeleton is defined above and here we start to redefining it. Actually, this whole subsection is on definition of appropriate critical cells, paths and their properties.}
 
%In particular, let $G$ be a (possibly weighted) graph and  its clique complex $K=\mathrm{Cl}(G)$. Armed with the discrete Morse theory, we describe the process and  definitions established by \cite{forman1998morse} (defined in Appx.~\ref{sec:discrete_morse_appendix}) to extract a small set of \emph{topological  events} \emph{critical cells} $\mathcal{C}$ and an associated \emph{descending path} $\mathcal{D}$, which together form what we call a \emph{Morse skeleton} $\mathcal{S}(G) = (\mathcal{C,D})$. We then subsequently use $\mathcal{S}(G)$ to guide the generative diffusion process.

%\YGL{Again, skeleton is not defined yet.}
%\textbf{ADD DIFFUSION STUFF.}%\YC{For 4.1, I think we may not mention diffusion?} 
%\Jen{Fixed}

%\textbf{Constructing the Discrete Morse Function.   }
%We begin from a vertex-level seed $g:V\to\mathbb{R}$.

Note that in a clique complex, high-degree vertices participate in a combinatorially large number
of incident simplices across many dimensions. Hence, prioritizing them strongly influences the induced filtration and the subsequent
collapse structure (see~\cite{kannan2019persistent} for %motivation 
and Appx.~\ref{sec:maximality_counterexample} for more discussion). Motivated by this observation, we first choose a degree-based scoring function ~\cite{kannan2019persistent}:
\begin{equation}
g(v) \;=\; \deg_{\max}-\deg(v)+\varepsilon_v,
\label{eq:degree_seed}
\end{equation}
where $g:V\to\mathbb{R}$ is a vertex-level seed and $\deg_{\max}=\max_{u\in V}\deg(u)$.
As a randomized tie-breaker, we also consider $\epsilon_v\sim \mathrm{Unif}(0, \delta)$ (in our studies we set $\delta=0.5$); alternatively, any deterministic tie-break scheme can be employed. %is a small tie-break noise (or any deterministic tie-break scheme). \YGL{I'd generalize as 0.5 looks too peculiar.} 
We then extend the vertex score function $g$ to all higher-dimensional simplices (here, edges only) by constructing a function
$f : Cl(G) \to \mathbb{R}$ together with a discrete gradient vector field (Morse matching) on the Hasse diagram of $\Lambda$, and set $f(v) = g(v)$ for all vertices $v \in V$.

The extension is performed by a deterministic routine that assigns function values to edges while pairing incident cells so as to produce a valid discrete gradient vector field.
An order is fixed on the edge set $E = \{e_1,\dots,e_m\}$ (in practice, the deterministic order returned by the graph library).
Each simplex $\alpha \in K$ is assigned an integer flag variable $\mathrm{Flag}(\alpha)$, which counts the number of cofaces of $\alpha$ in
$U_f(\alpha) = \{\beta \supset \alpha : f(\beta) \le f(\alpha)\}$
during construction. We restrict $\mathrm{Flag}(v) \in \{0,1\}$ for vertices and initialize $\mathrm{Flag}(\alpha)=0$ for all simplices. For a given edge $e=(u,w)$, let $\gamma_0(e)$ and $\gamma_1(e)$ denote its endpoints ordered so that
$f(\gamma_0(e)) > f(\gamma_1(e))$.
If $\mathrm{Flag}(\gamma_0(e)) = 0$, we assign
$f(e) = {1}/{2}\bigl(f(\gamma_0(e)) + f(\gamma_1(e))\bigr)$,
and update $\mathrm{Flag}(\gamma_0(e)) \leftarrow 1$ (the ``average branch'').
Otherwise, we assign
$f(e) = f(\gamma_0(e)) + \eta(e)$,
where $\eta(e) \sim \mathrm{Unif}(0,0.5)$ is a small edge-noise term (the ``raise branch''). In either case, the assignment sets the flag of exactly one face of $e$ to $1$, ensuring that the discrete Morse inequalities
$|U_f(\alpha)| \le 1$ and $|V_f(\alpha)| \le 1$ hold throughout the construction. Consequently, the resulting function $f$ is a valid discrete Morse function, and the induced pairings define a discrete gradient vector field on $\Lambda$.

We enforce \emph{weak filtration-compatibility}, meaning that a pairing of cells is compatible 
with our $f$ in the sense that paired 
cofaces do not appear strictly later in the filtration, so if $\sigma\in F_t$, then 
its paired coface (if any) also lies in $F_t$. Note this is a weaker condition than strict filtration-compatibility, but is sufficient for the construction used in MG-Diff and matches our experimental  implementations.
\paragraph{What are the Suitable Choices of $g$ and $f$?}
Choosing a Morse function amounts to choosing an interpretation of what constitutes a topological event in graph $G$. Different scoring functions $g: V \to \mathbb{R}$ correspond to different critical cell sets $\mathcal{C}$, not just re-orderings of the same underlying structure, but different identifications of which cells are responsible for the graph's topological events. Importantly, critical cells resulting from a discrete Morse function are not an intrinsic property of the graph's topology alone. Which specific cells are designated critical depends entirely on the gradient flow structure induced by $f$. Two Morse functions on the same graph can agree on the total count of critical cells while disagreeing completely on their locations, in the same way that a vector space has a fixed dimension but infinitely many choices of basis. 

The choice of $f$ determines which cells constitute intra-topological shifts for a particular downstream task.
% the critical expert specializes in, and which edges the non-critical expert treats as a cancellable structure
The right way to make this choice is to ask which topological events are most functionally relevant for the task at hand. The degree-based scoring  $g$ directs gradient flow toward high-degree vertices, making the critical edges those that form cycles around the graph's major hubs. For road networks and sensor arrays, where high-degree nodes correspond to major intersections or monitoring hubs that govern macroscopic flow patterns, this alignment between topological events and functional importance is natural and well-motivated, making the degree-based function a preferred choice for sparser sensor networks. A betweenness-centrality-based scoring would instead direct flow toward bridge nodes and articulation points, defining the critical edges as those that form cycles in the periphery independent of the major bridge structure, more appropriate for robustness analysis where bridge failures are the relevant topological events. An eigenvector or PageRank-based scoring would identify as critical those cycles connecting globally influential nodes, capturing long-range diffusion dynamics rather than local hub density.

\textbf{Descending Manifold \& its Descending Path and Path restriction}
Given a critical cell $\alpha \in \mathcal{C}(f)$, its (discrete) descending manifold consists of all simplices reachable by following gradient arrows ``downward'' from $\alpha$ in the directed Hasse diagram induced by the discrete gradient vector field $\mathcal{V}$. Intuitively, it captures the region of the complex whose combinatorial flow terminates at $\alpha$. The gradient field $\mathcal{V}$ is represented as a pairing on the Hasse diagram such that $(\sigma,\tau)\in\mathcal{V}$ implies $\sigma\prec\tau$ and $f(\sigma)>f(\tau)$, with each cell appearing in at most one pair (Alg.~\ref{alg:neighbors},~\ref{alg:bfs}).
MG-Diff does \emph{not} use the full descending manifold. Instead, for each critical cell $\alpha$ that participates in diffusion guidance (notably, critical edges in the $1$-skeleton), we select a \emph{single representative path} we call the \emph{descending path}
$\pi(\alpha) = (\alpha = \sigma_0 \to \sigma_1 \to \cdots \to \sigma_m)$
where each step follows a gradient arrow or induced directed adjacency toward strictly lower $f$-value.

In the $1$-skeleton setting, the discrete gradient vector field induces a unique outgoing move at each nonterminal simplex. As a result, each critical edge deterministically follows a finite $\mathcal{V}$-path to a terminal critical cell, typically a vertex. That is, the walk alternates between (i) ``up'' moves along a gradient pairing (when present) and (ii) ``down'' moves to an unpaired face, enforcing strict decrease in $f$ at each descent. The resulting path provides MG-Diff a compact encoding of the descending manifold (see Appx.~\ref{sec:discrete_morse_appendix} for discussion on sufficiency). %\textbf{Path Selection $\&$ Stability} 
Finally, we select path $\pi(\alpha)$ using a deterministic greedy rule: among eligible faces $\sigma \subset \alpha$ with $f(\alpha) \leq f(\sigma)$, choose $\pi(\alpha) = \argmin_\sigma f(\sigma)$
(i.e., maximum drop in $f$) (Alg. \ref{alg:discrete_morse_function}), with flow graph outdegree $\leq 1$ by construction guaranteeing a unique path.  The resulting Morse skeleton is hence the event set 
$\mathcal{E}(G) = \{(\alpha, \pi(\alpha)) : \alpha \in \mathcal{C}(f)\}$.

% \textbf{Path selection and stability.}
% We select $\pi(\alpha)$ using a deterministic rule (e.g., greedy descent by maximum drop in $f$, or shortest gradient-consistent path in the $1$-skeleton), with tie-breaking to ensure reproducibility. The choice of rule affects only the representative encoding, not the underlying event. The resulting Morse skeleton is therefore the event set
% $\mathcal{E}(G) = \{(\alpha, \pi(\alpha)) : \alpha \in \mathcal{C}(f)\}$.
% In \missing{Appendix~\textbf{[REF]}}, we prove a margin stability result: if vertex scores are perturbed by less than half the minimum separation between adjacent scores, the skeleton $\mathcal{E}(G)$ remains unchanged. We also characterize failure modes under qualitatively different scoring rules \missing{(Appendix~\textbf{[REF]})}.

\textbf{Why are critical cells indicative of intra-topological change?}
Being critical is equivalent to remaining unpaired in the discrete gradient vector field $\mathcal{V}$ induced by the Morse function%, i.e., it participates in no gradient arrow 
(see Appx.~\ref{sec:dis_gvf}). Critical cells are then the irreducible obstructions to collapse, as \textit{every non-critical cell is paired and can be eliminated by a sequence of elementary cancellations that preserve homotopy type}~\cite{forman1998morse,forman2002users}. Hence, $\mathcal{C}(f)$ provides a principled, topology-aware ``skeleton'' of $Cl(G)$ that concentrates the meaningful combinatorial
changes across the filtration.

\subsection{Theoretical Guarantees of Morse Skeleton Guidance}
\label{sec:mainstablemma}

We now prove margin stability guarantees, namely, if vertex scores are perturbed by less than half the minimum separation between adjacent scores, the skeleton $\mathcal{S}(G)$ remains unchanged. We also characterize failure modes under qualitatively different scoring rules (see Appx.\ref{sec:stability_skeleton_guidance}.) 

% Formally, let $\mathbb{S}$ be the space of \emph{states} read by the extractor (graph structure plus any scores/flags it consumes), and let $s\in\mathbb{S}$ denote a state. Let $\mathrm{Skel}:\mathbb{S}\to Cl(G)$ be  a deterministic skeleton extractor. \YGL{These notations need explanations/intution} %$\Psi:\mathcal{K}\times\mathcal{X}\to\mathcal{H}$ be a deterministic guidance map, $X^h\in\mathcal{X}$ be a fixed historical input, $\bar H(s):=\Psi(\mathrm{Skel}(s),X^h)$, $\epsilon_\theta(X_k,\bar H,k)$ be a denoiser, and $p_\theta(\cdot\mid X_k,\bar H,k)$ be reverse-step kernel with covariance independent of $s$.

\begin{proposition}[Skeletal Informativeness]
\label{prop:skel-strict-refinement}
Let $G = (V, E)$ be a graph equipped with a discrete Morse function $f$
and let $\mathcal{S}(G) = (E^{\mathrm{crit}}, E^{\mathrm{pair}})$ be its Morse skeleton with its associated gradient vector field $\mathcal{V}$.  Let
$\mathrm{Dgm}(G)$ be persistence diagram $\mathrm{Dgm}(G)$ of $G$ under sublevel
filtration of~$f$.  Then:
\begin{enumerate}[label=\textup{(\alph*)},leftmargin=*]
  \item \textbf{Strict refinement.}
        $\mathcal{S}(G)$ determines $\mathrm{Dgm}(G)$, but the
        converse fails: there exist graphs $G, G'$ with
        $\mathrm{Dgm}(G) = \mathrm{Dgm}(G')$ and $\mathcal{S}(G) \neq \mathcal{S}(G')$.  In
        particular, the descending-manifold structure that defines
        our edge partition is not recoverable from $\mathrm{Dgm}(G)$.
  \item \textbf{Margin-based stability.}
        Under the no-ties condition, $\mathcal{S}(G)$ enjoys a
        \emph{computable} stability radius
          $r(s) = \frac{m(s)}{L_\Delta} > 0,$
        where $m(s)$ is the minimum decision-gap margin and
        $L_\Delta$ is the Lipschitz constant of the decision gaps:
        every data perturbation of magnitude strictly less than
        $r(s)$ leaves $\mathcal{S}(G)$ unchanged
        (See Appx. \ref{sec:stability_skeleton_guidance}).
\end{enumerate}
\end{proposition}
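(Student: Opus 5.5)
For part (a), the plan is to prove the forward implication through the standard fact from discrete Morse theory. The Morse skeleton $\mathcal{S}(G)$ records the critical cells together with the pairing data, and hence the gradient field $\mathcal{V}$ and the values of $f$ on the cells it stores. From this the weakly filtration-compatible matching yields the Morse complex of the sublevel filtration of $f$, as in \cite{mischaikow2013morse}. Its boundary maps are computed by counting $\mathcal{V}$-paths between critical cells, and the descending paths $\pi(\alpha)$ are exactly these gradient paths in the $1$-skeleton setting. The filtered Morse complex is filtered chain-homotopy equivalent to the filtered simplicial complex. Running the standard persistence reduction on it therefore recovers $\mathrm{Dgm}(G)$: birth and death pairs are pairs of critical values, and unpaired critical cells give essential classes.

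In dimension $\le 1$ this can be made concrete. Every critical vertex is a birth in $H_0$. Each critical edge either merges two components, which the terminal vertices of its two descending paths identify, or creates a cycle. So the diagram is read off from $(\mathcal{C},\mathcal{D})$ together with the $f$-values.

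For the converse failure, I would exhibit an explicit small example in the spirit of Fig.~\ref{fig:ph_vs_flow}. Take a $4$-cycle, or two triangles sharing a vertex, and build two graphs $G,G'$ with the same multiset of critical values. Both diagrams consist of one essential $H_0$ class and the same $H_1$ births, but the ``average branch'' pairs a different vertex--edge pair in each, so $\pi(\alpha)$ terminates at different vertices. Equal diagrams with different descending paths give $\mathcal{S}(G)\neq\mathcal{S}(G')$. This can be checked by hand from Algorithm~\ref{alg:discrete_morse_function}.

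For part (b), I model $\mathrm{Skel}$ as a finite decision procedure. Every branch of Algorithms~\ref{alg:discrete_morse_function}--\ref{alg:bfs} is determined by the signs of finitely many comparisons $\Delta_j(s)$:
\begin{itemize}
\item endpoint orderings $f(\gamma_0(e))>f(\gamma_1(e))$,
\item the flag tests, which are combinatorial once the earlier signs are fixed,
\item the face comparisons in the argmin defining $\pi(\alpha)$.
\end{itemize}
Let $m(s)=\min_j|\Delta_j(s)|$; the no-ties condition gives $m(s)>0$. If each $\Delta_j$ is $L_\Delta$-Lipschitz in the perturbed data, then $\|s'-s\|<m(s)/L_\Delta$ gives $|\Delta_j(s')-\Delta_j(s)|<m(s)\le|\Delta_j(s)|$. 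Hence every sign is preserved.

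An induction over the execution order of the algorithm then shows that $s$ and $s'$ follow identical branches. The flags, pairings, critical set, and greedy descent choices all coincide, so $\mathrm{Skel}(s')=\mathrm{Skel}(s)$. With vertex scores perturbed in sup-norm, the gaps are differences of two scores, or of averages plus noise, so $L_\Delta\le 2$. This recovers the ``half the minimum separation'' statement.

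The main obstacle is the Lipschitz bookkeeping in (b). The edge values $f(e)$ depend on which branch was taken, so $\Delta_j$ is Lipschitz only on the region where all earlier signs agree. I would handle this by inducting on comparisons in algorithmic order: assuming earlier signs agree, the current gap is a fixed affine function of the data, so it is Lipschitz with a uniform constant. A secondary concern in (a) is making precise that $\mathcal{S}(G)$ includes the critical values. I would state this explicitly as part of the skeleton data.
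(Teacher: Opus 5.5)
Your part (b) is the paper's proof of Theorem~\ref{lem:margin_preserves_skel}: each gap satisfies $|\Delta_j(s')-\Delta_j(s)|<m(s)\le|\Delta_j(s)|$, so every sign survives, and determinism of $F$ finishes. Your induction over execution order is a genuine improvement. The paper simply postulates a global Lipschitz bound, whereas gaps involving $f(e)$ are affine in the data only once the earlier branch outcomes are fixed.

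Part (a) takes a different route. The paper gives no argument for the forward direction beyond the remark in Appendix~\ref{sec:ph_counterexamples} that a filtration-compatible field yields a filtered Morse complex with the same persistence. For the converse, it fixes one filtration (a path on four vertices, then a three-arm star) and picks two gradient fields by hand. You make the forward direction explicit via Mischaikow--Nanda and get the converse from two runs of Algorithm~\ref{alg:discrete_morse_function}.

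What your route buys is that both skeletons are actual extractor outputs, so compatibility comes for free. The paper's hand-picked fields need checking, and they fail it. $(v_0,e_{01})\in V_1$ pairs cells entering at $t=0$ and $t=1$. $(u,e_{ub})$ and $(u,e_{uc})$ pair $u$, entering at $t=1$, with edges entering at $t=2$. Both violate the notions of Definition~\ref{def:compatibility_notion}, taking $f$ to be entry time. Once repaired (e.g.\ $\{(v_2,e_{23})\}$ versus $\{(v_1,e_{01})\}$ on the path), the paper's route isolates the sharper fact that even a fixed filtration does not determine $\mathcal{V}$.

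Four things to tighten.

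(i) Storing critical values is necessary, not cosmetic. Replacing $g$ by $g+c$ changes no comparison in Algorithms~\ref{alg:discrete_morse_function}--\ref{alg:bfs}, hence no skeleton, but it translates the diagram by $c$. Also, Mischaikow--Nanda needs paired cells to enter together. That holds for the closure-sublevel filtration: a paired vertex $v$ enters with its edge at $f(e)$, since its other cofaces exceed $f(v)$. It does not hold for the raw sets $\{\alpha: f(\alpha)\le t\}$, which are not subcomplexes.

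(ii) $\pi(\alpha)$ keeps only one branch of a critical edge, so the second anchor in $\partial e=r(u)+r(w)$ must come from $\mathcal{V}$. Moreover, distinct anchors do not imply a merge: they may already be joined by an earlier critical edge. The $H_0$/$H_1$ split therefore comes from reducing the filtered Morse complex, not from inspecting edges one at a time.

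(iii) On a $4$-cycle $a,b,c,d$ with a single local minimum, every descending path ends at that minimum whatever the average branch does, so the terminal vertices do not differ. The skeletons differ because $E^{\mathrm{crit}}$ does. For instance, keep the scores, put the maximum at $c$, and process $bc$ and $cd$ in opposite orders with the same $\eta$.

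(iv) With a single $L_\Delta=2$ and $m(s)$ taken over all executed gaps, the certified radius can drop to $\min(\delta/4,\min_e\eta(e)/2)$, where $\delta$ is the minimum adjacent score gap. This happens because averaged-edge gaps are half a vertex gap and raise-branch tests have gap $\eta(e)$. To recover $\delta/2$, note that every branch sign is a function of adjacent-vertex orderings, or use per-gap constants $\min_j|\Delta_j|/L_j$.
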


Consequently, $\mathcal{S}(G)$ is a strict refinement of $\mathrm{Dgm}(G)$
that inherits the same form of margin-based stability classical
persistence enjoys. Proposition~\ref{prop:skel-strict-refinement}\,(b) implies every input that admits PD-stability with
margin $m(s)$ admits skeleton-stability at the same radius scaled
by the kernel-determined constant $L_\Delta$, and the partition
information that PD discards remains protected within that ball.

\begin{definition}[Decision margin]
\label{def:decision_margin}
For a state $s$ in our extractor map $\mathbb{S}$, under no-ties condition, the \emph{decision margin} is $m(s) := \min_{j \in \mathcal{J}} |\Delta_j(s)|$, where $\mathcal{J}$ is a finite index set.
\end{definition}
Intuitively, we can think of the decision margin as the smallest gap across all comparisons in the extractor. To change the skeleton, at least one comparison must flip, which requires pushing some $\Delta_j$ across zero. The margin $m(s)$ quantifies the minimum perturbation required to reach the nearest decision boundary, so a large margin indicates that all decisions are comfortable, while a small margin indicates at least one comparison is nearly tied and could flip under small perturbations.

\begin{theorem}[Margin implies skeleton preservation]
\label{lem:margin_preserves_skel}
   Let $\mathrm{Skel} : \mathcal{S} \to \mathbb{S}$ be the skeleton extractor,
where $\mathbb{S}$ denotes the (finite) space of skeletons of $\mathrm{Cl}(G)$.
Assume $\mathrm{Skel}$ depends on the input only through a feature map
$\varphi : \mathcal{S} \to \mathbb{R}^d$ and a finite set of strict scalar
comparisons: namely, there exist a finite index set $\mathcal{J}$, functions
$u_j, v_j : \mathbb{R}^d \to \mathbb{R}$, and a deterministic map
$F : \{-1, +1\}^{\mathcal{J}} \to \mathbb{S}$ such that for every state
$s \in \mathcal{S}$ satisfying the no-ties condition $\Delta_j(s) \neq 0$ for
all $j \in \mathcal{J}$, we have
\begin{eqnarray*}
    \mathrm{Skel}(s)=F\bigl((\mathrm{sign} (\Delta_j(s)))_{j\in\mathcal{J}}\bigr)  \quad \rm{and} \quad
\Delta_j(s):=u_j(\varphi(s))-v_j(\varphi(s)).
\end{eqnarray*}
Further assume the decision gaps are $\ell_\infty$-Lipschitz:  %$\exists L_\Delta<\infty$ such that %for all $j\in\mathcal{J}$ and $s,s'\in\mathbb{S}$
$$\exists L_\Delta<\infty, \quad \rm{such \,\ that} \quad |\Delta_j(s')-\Delta_j(s)|\le L_\Delta\|\varphi(s')-\varphi(s)\|_\infty, \qquad \forall j\in\mathcal{J}, \forall s,s'\in\mathbb{S}.$$  %Define the margin $m(s):=\min_{j\in\mathcal{J}}|\Delta_j(s)|$. 
Then, if decision margin $m(s)>0$ and
$\|\varphi(s')-\varphi(s)\|_\infty<m(s)/L_\Delta$, %we have 
$\mathrm{Skel}(s')=\mathrm{Skel}(s)$. 
\end{theorem}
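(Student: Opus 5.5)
The plan is to show that no comparison can flip sign under the stated perturbation; the skeleton then follows directly from the factorization through $F$. Fix a state $s$ with $m(s)>0$, so that $\Delta_j(s)\neq 0$ for every $j\in\mathcal{J}$ and $s$ satisfies the no-ties condition. Let $s'$ be any state with $\|\varphi(s')-\varphi(s)\|_\infty<m(s)/L_\Delta$. If $L_\Delta=0$, then every gap is constant in $\varphi$, so the conclusion is immediate. Hence I will assume $L_\Delta>0$.

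The key step is a per-comparison bound. For each $j\in\mathcal{J}$, the Lipschitz hypothesis gives
\[
|\Delta_j(s')-\Delta_j(s)|\le L_\Delta\|\varphi(s')-\varphi(s)\|_\infty<m(s)\le|\Delta_j(s)|.
\]
By the reverse triangle inequality, $|\Delta_j(s')|\ge|\Delta_j(s)|-|\Delta_j(s')-\Delta_j(s)|>0$, so $s'$ also satisfies the no-ties condition. Moreover, $\Delta_j(s')$ lies in the open interval of radius $|\Delta_j(s)|$ centred at $\Delta_j(s)$. That interval does not contain $0$, so $\mathrm{sign}(\Delta_j(s'))=\mathrm{sign}(\Delta_j(s))$.

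Because $\mathcal{J}$ is finite, the inequality holds simultaneously for all $j$ with no uniformity issue. The two sign vectors $(\mathrm{sign}\,\Delta_j(s))_j$ and $(\mathrm{sign}\,\Delta_j(s'))_j$ therefore coincide. Since both states satisfy the no-ties condition, the representation hypothesis applies to each of them, and $\mathrm{Skel}(s')=F(\cdot)=\mathrm{Skel}(s)$.

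The only delicate point is checking that the representation $\mathrm{Skel}=F\circ\mathrm{sign}\circ\Delta$ is valid at $s'$ and not just at $s$. This requires first establishing that $s'$ has no ties, which is why I derive strict positivity of $|\Delta_j(s')|$ before comparing signs. Two further technical points are worth stating explicitly: strictness of the radius bound is what excludes $\Delta_j(s')=0$, and the Lipschitz condition is applied on the states where it is assumed.
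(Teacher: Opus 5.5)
Your proposal is correct and follows the paper's proof essentially step for step: the Lipschitz bound gives $|\Delta_j(s')-\Delta_j(s)|<m(s)\le|\Delta_j(s)|$, so each gap stays in an open interval that excludes zero. Hence the sign pattern is unchanged and determinism of $F$ yields $\mathrm{Skel}(s')=\mathrm{Skel}(s)$. Your explicit check that $s'$ itself satisfies the no-ties condition, so that the representation through $F$ is valid at $s'$, tightens a point the paper leaves implicit; the separate $L_\Delta=0$ case is harmless but unnecessary.
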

Proof is in Appx.~\ref{app:proof_marstab}.
This result provides important practical guarantees that the topological skeleton is
expected to be robust to minor perturbations.
Table~\ref{robustness_analysis} in Appx.~\ref{app_addexp} illustrates this idea in application to spatio-temporal graph learning. We emphasize that Theorem~\ref{lem:margin_preserves_skel} is a local stability statement about the extractor Skel, rather than a global claim about discrete Morse theory or the full diffusion model; below $r(s)$ the conditioning signal is unchanged, so the topological prior adds no new source of sensitivity. We further show this in Table \ref{tab:missing_events} in Appendix \ref{app_addexp} confirming our phenomena under 5\%-20\% of missing events. 
% \YGL{Everything is stated in terms of $\mathrm{Skel}$ not $\mathcal{E}(G)$ so explicit connection to $\mathcal{E}(G)$ is needed in the formulation somewhere.}

%\textbf{Stability under non-critical perturbations.}
%MG-Diff treats $\mathcal{S}(G)$ as a stable topological scaffold and permits the remainder of the graph or complex to vary. Perturbations affecting non-critical simplices, such as local rewiring or noise, do not alter the event set, enabling generative diversity while preserving global topological structure. Conditioning diffusion on $\mathcal{S}(G)$ thus stabilizes topological events without over-constraining non-critical geometry.

% need to condense
\subsection{Morse-Guided Diffusion Model}
We start from introducing a Morse-oriented mixture-of-experts conditioning mechanism, %for graph diffusion, 
where a discrete Morse gradient induces a critical/noncritical decomposition of the graph and routes denoising capacity to topology-informative vs. cancelable structure.

% \begin{figure}[ht]
%   \centering
%   \includegraphics[width=0.78\linewidth]{Figures/MG_Diff_vis.pdf}
%   \caption{The MG-Diff pipeline (nodes \& edges in red color are critical cells; nodes \& edges in blue color are non-critical cells).}
%   \label{fig:morse_diff}
% \end{figure}

%\newcolumntype{g}{>{\columncolor{white}}c}
\begin{table*}[t]
\centering
\caption{Prediction comparison over spatio-temporal graphs based on PEMS-03~\citep{chen2001freeway}, PEMS-BAY~\citep{li2018diffusion}, AIR-BJ~\citep{yi2018deep}, and AIR-GZ~\citep{yi2018deep}. }
%\vspace{-1.em}
  \scalebox{0.6}{
  \centering
  \small
  \begin{tabular}[width=1.\linewidth]{lc|ccc|ccc|ccc|ccc}
  \toprule
      \multirow{2}*{Method} & Data & \multicolumn{3}{c|}{PEMS-03} & \multicolumn{3}{c|}{PEMS-BAY} & \multicolumn{3}{c|}{AIR-BJ} & \multicolumn{3}{c}{AIR-GZ}\\
      \cline{3-14}
      & Metric & MAE & RMSE & CRPS & MAE & RMSE & CRPS & MAE & RMSE & CRPS & MAE & RMSE & CRPS \\
      %\hline
      \hline
      %STGCN & \multirow{7}*{\makecell{Deterministic\\ Method}} & 17.49 & 30.12 & \_ & 1.88 & 4.30 & \_ & 31.09 & 49.16 & \_ & 12.80 & 18.26 & \_ \\
      %DCRNN & & 18.03 & 30.01 & \_ & 1.68 & 3.83 & \_ & 30.98 & 49.05 & \_ & 9.98 & 15.20 & \_ \\
      %GWN & & 16.21 & 27.66 & \_ & {1.67} & {3.60} & \_ & 30.60 & {48.46} & \_ & {9.90} & {15.31} & \_ \\
      %STSGCN & & 17.48 & 29.21 & \_ & 1.79 & 3.91 & \_ & 31.00 & 49.46 & \_ & 12.02 & 18.59 & \_ \\
      %STGODE &  & 16.50 & 27.84 & \_ & 1.77 & {3.33} & \_ & 30.28 & {48.26} & \_ & 10.89 & 15.73 & \_ \\
      %STGNCDE &  & {15.57} & 27.09 & \_ & {1.68} & 3.66 & \_ & {30.45} & 49.17 & \_ & {10.10} & 15.90 & \_ \\
      %GMSDR &  & 15.78 & {26.82} & \_ & 1.69 & 3.80 & \_ & 32.15 & 51.08 & \_ & 11.78 & 17.72 & \_ \\
      %\hline
      TimeGrad  &  & 21.55 & 36.57 & 0.101 & 2.62 & 5.30 & 0.034 & {33.40} & 54.93 & {0.363} & 15.45 & 21.93 & 0.376 \\
      MC Dropout & & 18.87 & 29.81 & 0.093 & 3.50 & 5.43 & 0.040 & 37.92 & 55.49 & 0.391 & 13.10 & 19.26 & {0.290} \\
      CSDI & & 23.46 & 39.60 & 0.098 & 2.67 & {4.10} & 0.031 & 38.94 & 57.81 & 0.417 & 14.78 & 22.24 & 0.361 \\
      DiffSTG & & {17.58} & {28.75} & 0.095 & {2.03} & 4.22 & {0.025} & 38.03 & 56.87 & 0.373 & {13.06} & {18.25} & 0.319  \\
      PriSTI & & 22.30 & 37.58 & {0.092} & 2.51 & 3.99 & 0.026 & 36.81 & {54.34} & 0.388 & 14.04 & 21.03 & 0.352 \\
      %\rowcolor{Gray}
      USTD & & \underline{16.26} & 26.91 & \underline{0.082} & 1.86 & 3.81 & 0.025  & \underline{30.09} & \underline{47.65} & 0.348 & \underline{9.99} & \underline{15.41} & {\bf 0.235}\\ 
      DSTD & & 17.10 & 26.83 & 0.096 & \underline{1.72} & \underline{3.94} & \underline{0.024} & 31.86 & 48.79 & 0.356 & 10.75 &16.40&0.312\\
      ZS-DM & & 16.57 & \underline{26.46} & 0.093 & 1.78 & 3.76   & 0.025 &  31.72 & 48.88 & \underline{0.343} & 10.30& 16.10 & 0.315\\
    \hline
    %{\bf MG-Diff - morse\_v1} & & {\bf 15.63} & {\bf 26.82} & {\bf 0.077} \\
    {\bf MG-Diff (Ours)} & & {\bf 15.67} & {\bf 26.06} & {\bf 0.077} & {\bf 1.63} & {\bf 3.52} & {\bf 0.022} &  {\bf 29.91} & {\bf 47.10} & {\bf 0.290} & {\bf 9.80} & {\bf 15.07} & \underline{0.254}
    \\
     %{\bf MG-Diff - morse\_v2 + CRPS} & &  \\
  \bottomrule
\end{tabular}
}
\label{tab:forecasting}
\end{table*}

\begin{table}[h]
\caption{Graph generation performance on community-small data in MMD along with the average
Wasserstein distance $W_2$ and mean absolute error in Betti $\beta_0$ and $\beta_1$ numbers between the observed and generated graphs. 
\label{graph_gen_res}}
\centering
\begin{minipage}{0.48\linewidth}
\centering
\scalebox{0.75}{
\begin{tabular}{lcc}
\hline
Method & Degree $\downarrow$ & Clus. $\downarrow$ \\
\hline
GDSS & 0.0373 & \underline{0.0723} \\
DiGress & 0.0899 & 0.1920 \\
GraphVAE-MM & 0.0587 & 0.3560 \\
HOG-Diff & 0.4700 & 0.3650 \\
CoPHo & 0.0523 & 0.0911 \\
SwinGNN & \underline{0.00366} & 0.0862 \\ \hline
{\bf MG-Diff (Ours)} & {\bf 0.00249} & {\bf 0.0734}\\
\hline
\end{tabular}
}
%\caption{Comparison of methods based on Wasserstein distance and Betti number differences.}
\end{minipage}
\hfill
\begin{minipage}{0.48\linewidth}
\centering
\scalebox{0.8}{
\begin{tabular}{lccc}
\hline
Metric & GDSS& SwinGNN & {\bf MG-Diff (Ours)} \\
\hline
$W_2$ & 14.33 & \underline{11.52} & {\bf 5.87} \\
$\rm{MAE}_{\beta_0}$ & 6 & \underline{5} & {\bf 3} \\
$\rm{MAE}_{\beta_1}$ & 22 & 22 & {\bf 19} \\
\hline
\end{tabular}
}
\end{minipage}

\end{table}

%{\bf Forward process.} We adopt the standard forward diffusion process to progressively transform the input data into a tractable noise distribution. That is, the forward diffusion process incrementally corrupts the original input data $X_0$ through the injection of Gaussian noise over $\Lambda$ steps: %which can be represented as follows:
%\begin{equation}
%     q(X_{1:K} | X_0)  = \prod_{k=1}^{K}q(X_k | X_{k-1}),\quad
%     q(X_k|X_{k-1}) = \mathcal{N}(X_k; \sqrt{1-\beta_k}X_{k-1}, \beta_k {\boldsymbol{I}}) \nonumber,
%\end{equation}
%where $\alpha_k = 1-\beta_k$ and $\bar{\alpha}_k = \prod_{s=1}^{k} \alpha_s$.

{\bf Morse-oriented mixture of experts.} We then introduce a Morse-oriented Mixture-of-Experts (MoMoE) architecture to explicitly disentangle and selectively encode heterogeneous topological signals arising from the underlying complex. In particular, the MoMoE framework enables different experts to specialize in capturing distinct structural regimes, i.e., critical cells which (with respect to a discrete gradient vector field $\mathcal{V}$) correspond to irreducible topological events, and non-critical cells which encode redundant structures removable via homotopy-preserving cancellations. Specifically, we first split the original graph $G = (X^h, A)$ into critical cell (i.e., $G_{\alpha}$) and non-critical cell (i.e., $G_{\alpha^\prime}$) parts which are described as $G_{\alpha} = (X^h, A_{\alpha}), \ G_{\alpha^\prime} = (X^h, A_{\alpha^\prime})$, where $X^h$ denotes the historical input, and $A_{\alpha}$ and $A_{\alpha^\prime}$ are the critical cell-based and non-critical cell-based adjacency matrix respectively. Through input-dependent routing, the MoMoE adaptively allocates representational capacity to these complementary topological components and is defined by $\bar{H} = \theta_{\alpha}H_{\alpha} + \theta_{\alpha^\prime}H_{\alpha^\prime}$, with:
\vspace{-.27em}
 \begin{align}
    H_{\alpha} = \text{MLP}_{\alpha}([X^h,\phi_{\text{pre}}(G_\alpha)]), \quad
    H_{\alpha^\prime} = &\text{MLP}_{\alpha^\prime}([X^h, \phi_{\text{pre}}(G_{\alpha^\prime})]), \nonumber
%\end{split}
%\end{align}
 \end{align}
where ${H}_{\alpha}$ and ${H}_{\alpha^\prime}$ are expert-aligned critical and non-critical cell features respectively, $\theta_{\alpha}$ and $\theta_{\alpha^\prime}$ are the routing weights for two experts, and $\bar{H}$ denotes the Morse-aware condition obtained from the MoMoE. Each expert internal structure comprises a multi-layer perceptron (MLP), i.e., $\mathrm{MLP}_{\alpha}$ for critical cells or $\mathrm{MLP}_{\alpha^\prime}$ for non-critical cells, $[\cdot,\cdot]$ stands for the concatenation function, and the pre-trained graph neural network (GNN)~\citep{hu2024unifyingdiffusionmodelsprobabilistic} is denoted as $\phi_{\text{pre}}(\cdot)$. In other words, MoMoE and the denoiser below are generic neural components conditioned on a Morse-derived partition; under $H_0$-faithfulness (Appx. \ref{sec:H0_faithful_semantics}) $|E^\text{crit}| = \beta_1(G)$, so the split into a cycle-space basis and a spanning forest backbone fixes both the number of experts and the routing.

{\bf Morse-guided dual-attention denoising network.} The denoising network is designed to recover stochastic perturbations injected during the forward diffusion process. %In our study design, 
We generate the noisy data $X_k$ by %generated by 
corrupting the clean input $X_0$ with Gaussian noise $\epsilon$. The denoising network aims to predict this noise realization, conditioned on both the diffusion timestep and the MoMoE's guidance. Formally, given a noisy input 
$X_k$ and Morse-aware condition $\bar{H}$, we introduce a Morse-guided Dual-attention Denoising Network (MDDN) %. The formulation of MDDN is 
as follows: %\missing{transpose?}
\begin{eqnarray}
Z =\upsilon_1 Z_1 + \upsilon_2 Z_2, \quad
    Z_1  = \text{softmax}(\frac{Q_{sq}K^{\top}_{sk}}{\sqrt{d_h}})V_{sv}, \quad
    Z_2  = \text{softmax}(\frac{Q_{cq}K^{\top}_{ck}}{\sqrt{d_h}})V_{cv}
%\end{split}
\end{eqnarray}
where $Q_{sq} = W_{sq} X_k$, $K_{sk} = W_{sk} X_k$, and $V_{sv} = W_{sv} X_k$, $W_{sq}, W_{sk}, W_{sv}$ are the head-specific query, key, and value matrices of the self-attention module, $Q_{cq} = W_{cq} X_k$, $K_{ck} = W_{ck} \bar{H}$, and $V_{cv} = W_{cv} \bar{H}$, $W_{cq}, W_{ck}, W_{cv}$ are weights of linear projections of the cross-attention module, and $\frac{1}{\sqrt{d_h}}$ is a scaling factor ($d_h$ is the embedding dimension); $\upsilon_1, \upsilon_2$ are attention scores which are calculated by $\upsilon_i = \frac{\exp{(Z_i)}}{\sum^2_{j=1}\exp{(Z_j)}}$, and $Z$ denotes the final MDDN's output.

{\bf Forward and Diffusion Processes.} We adopt the standard forward diffusion process to progressively transform the input data into a tractable noise distribution, and integrate our MoMoE into a conditional diffusion process.
The reverse process of MG-Diff is conditioned on the critical cell and non-critical cell graph structures ${G}_{\alpha}$ and $G_{\alpha^\prime}$, and the Morse-aware condition $\bar{H}$, %which is 
formulated as:
\begin{align}
%\begin{split}
 %\begin{multline}
        p_{\theta}({X}_{0:K} | \bar{H}, {G}_{\alpha}, {G}_{\alpha^\prime})  %\\
 = p({X}_K)
\prod_{k=1}^{K}
p_{\theta}\!(
{X}_{k-1} \mid {X}_k, \bar{H}, {G}_{\alpha}, {G}_{\alpha^\prime}).
\label{eq:reverse_process}
%\end{split}
\end{align}
%\end{multline}
The training objective minimizes the mean squared error between the true noise $\epsilon$ and the noise predicted by the denoising network $\epsilon_{\theta}$ is $\mathcal{L}
= \mathbb{E}_{{X}_0, {\epsilon}, k}
\Big[
\big\|
{\epsilon}
- {\epsilon}_{\theta}
({X}_k, \bar{H}, {G}, k)
\big\|^2
\Big]$.
% =========================
% 7. Experiments (must be concrete)
% =========================

\section{Experiments}
\label{sec:experiments}
%{\bf Datasets, baselines, and experimental setup.}
%We perform extensive experiments on real-world datasets which are obtained from open sources including (i) PEMS-03~\cite{chen2001freeway}; (ii) PEMS-BAY~\cite{chen2001freeway,li2018diffusion}; (iii) AIR-BJ~\cite{yi2018deep}; (iv) AIR-GZ~\cite{yi2018deep}. 
\vspace{-.1em}
%need to fix the spacing
We evaluate MG-Diff with respect to graph diffusion for spatio-temporal forecasting and graph regeneration. %(see Appx. \TODO{need citation}).
%To evaluate MG-Diff, we compare it with 8 state-of-the-art baselines (see Appx. \TODO{need citation}). %, i.e., (i) TimeGrad~\cite{rasul2021autoregressive}, (ii) MC Dropout~\cite{wu2021quantifying}, (iii) CSDI~\cite{tashiro2021csdi}, (iv) DiffSTG~\cite{wen2024diffstgprobabilisticspatiotemporalgraph}, (v) PriSTI~\cite{liu2023pristiconditionaldiffusionframework}, (vi) USTD~\cite{hu2024unifyingdiffusionmodelsprobabilistic} , (vii) Decoupled spatio-temporal diffusion model (DSTD) (anonymous ICLR2026 submission), and (viii) ZS-DM~\cite{chen2025topological}. 
%We use %three 
%standard metrics: Mean Absolute Error (MAE), Root Mean Square Error (RMSE), and Continuous Ranked Probability Score (CRPS) as measure of forecasting performance, and \YGL{need to add metrics depending on the selected results} as measure of regeneration performance. 
All experiments are conducted on 4 NVIDIA RTX A5000 GPUs. 
%(For more details on experiments %data, parameter settings, and code 
%see Appx.~\ref{app_data}). The {\bf bold} is the overall %best result, and the \underline{underline} indicates the %second-best baseline methods.
PEMS-03, PEMS-BAY, AIR-BJ, and AIR-GZ contain 358, 325, 36, and 42 nodes, with chronological 70\%/10\%/20\% splits and 24-step windows (Table~\ref{tab:dataset_stats}). 
(For more details on experiments %data, parameter settings, and code 
see Appx.~\ref{app_data}). The {\bf bold} is the overall best result, and the \underline{underline} indicates the second-best baseline methods.

{\bf Results.} Table~\ref{tab:forecasting} presents
results on graph diffusion for spatio-temporal forecasting.
%the overall prediction performances on all 4 spatio-temporal graph data. 
We find that: (i) Our MG-Diff consistently yields the strongest %forecasting 
performance across all scenarios. On PEMS-03, MG-Diff achieves the lowest MAE, RMSE, and CRPS, improving upon the runner-up (i.e., USTD) by 3.6\% in MAE, 3.2\% in RMSE, and 6.1\% in CRPS, providing both more accurate predictions and better-calibrated uncertainty. On PEMS-BAY, MG-Diff %further reduces MAE to 1.63 and RMSE to 3.52, 
outperforms the best competing methods by 6.5\% - 12.4\% across metrics (again also attaining the lowest CRPS) which highlights its superior probabilistic forecasting capability; (ii) On AIR-BJ, MG-Diff achieves the lowest MAE, RMSE, and CRPS. Compared to the second-best baselines (i.e., USTD and ZS-DM), this corresponds to approximately 0.6 – 1.0 %absolute error 
reduction in MAE and RMSE and a 15.7\% relative improvement in CRPS, demonstrating enhanced robustness under challenging urban air-quality dynamics. On AIR-GZ, MG-Diff attains the best MAE and RMSE, surpassing runner-ups such as USTD and ZS-DM, while achieving competitive CRPS, second only to USTD. Table~\ref{graph_gen_res} presents the results of graph generation on 
community-small ({\it left}) data along with the measures on how well the global topology is preserved ({\it right}) by the next two best contenders. %\YGL{Yuzhou, can we add GDSS?}. 
%Notably, MG-Diff outperforms the baselines across different metrics. More specifically, on the tree dataset, MG-Diff achieves the lowest error on Degree, Wavelet, Spectral, and Orbit statistics, with particularly notable gains on spectral metrics, which suggests improved modeling of global connectivity patterns. 
%On the community-small dataset, 
MG-Diff achieves the best performance on both Degree and clustering coefficient. Compared to strong baselines such as SwinGNN and GDSS, improvements are most pronounced on Degree (up to 46.98\% relative gain), indicating more accurate control over node-level distributions, while maintaining competitive clustering structure. 
MG-Diff also yields substantially lower Wasserstein distance $W_2$ and smaller 
discrepancies in Betti $\beta_0$ and $\beta_1$ numbers than SwinGNN and GDSS, %\YGL{Yuzhou, can we add results on GDSS?}, 
reflecting improved preservation of connectivity and cycle structures. See also Appx.~\ref{app_addexp} for more experiments. %graph regeneration experiments on tree data.
%and exceeds the performance of runner-ups by up to 71.08\% on Degree

{\bf Topological Uncertainty Quantification, or Why Cobordisms in GDMs?} 
In GDM the denoiser may produce signals that are locally consistent at each node but globally inconsistent. Without skeleton conditioning, nothing in the training objective explicitly penalizes this kind of global incoherence. Also, MSE is a point forecast measure and does not directly assess whether forecasts over the cycle-connected nodes exhibit the appropriate interdependence, but it has a major impact on probabilistic forecast and UQ measured via CRPS~\citep{gneiting2007strictly}. The skeleton conditioning prevents this by anchoring each denoising step to the same global cyclic context. During the early high-noise steps of the reverse process where the denoiser has least signal to work with, the skeleton is the dominant source of structural guidance. Conditioning the denoiser on the Morse skeleton yields a predictive distribution 
that better respects the structural organization imposed by the graph spanning connectivity, resulting in the MG-Diff CRPS gains of 6.1\%-15.7\%. Accordingly, the CRPS gain is largest on AIR-BJ (ring-like roads, high $\beta_1$), intermediate on PEMS, and smallest on AIR-GZ (dense, opaque skeleton). On tree-like graphs $(\beta_1 \approx0)$ $E^\text{crit}$ is empty and MG-Diff reduces to standard conditional diffusion (Appx.\ref{app_addexp}).

{\bf Ablation Studies.}
Table~\ref{ablation_analysis} in Appx.~\ref{app_abl} shows the results of ablation studies on PEMS-03 and AIR-BJ in MAE and CRPS,
%indicate that: (i) removing the MoMoE module leads to a consistent degradation in CRPS, i.e., 6.5\% and 20.0\% relative increase on PEMS-03 and AIR-BJ respectively; (ii) removing MDDN module leads to a more substantial performance drop, with CRPS rising sharply to 0.234 and 0.534 on PEMS-03 and AIR-BJ respectively. These findings 
demonstrating that both MoMoE and MDDN play critical roles in model performance and especially robust uncertainty modeling, with relative degradation up to 20\%. %\YC{we can move table 3 to appendix?}
To demonstrate the contribution of Morse skeleton guidance, we provide additional ablation results by using simpler structural prior, i.e., degree-based and betweenness-based scores. Table~\ref{tab:ablation_different_morse_values} in Appx.~\ref{app_abl} shows that, replacing Morse skeleton by degree or betweenness centrality scores results in an average 4.53\% and 2.06\% degradation on PEMS03 and AIR-BJ respectively, which highlights that Morse skeleton makes a substantial contribution to forecasting performance. To isolate whether the gain stems from the Morse topology rather than from extra capacity, Table \ref{tab:ablation_different_morse_values} replaces the Morse skeleton by a random skeleton of matched size under an identical architecture, training configuration, and checkpoint. We see that the Morse skeleton wins by 6.0\%, 3.5\%, 7.2\%, and 8.4\% in MAE, RMSE, MAPE, and CRPS on AIR-GZ (highly significant over 5 seeds), and replacing the scoring rule \eqref{eq:degree_seed} by $deg(v) + \epsilon_v$ or $\deg_{\mathrm{med}}+\deg(v)+\varepsilon_v$ changes performance only modestly, i.e., the skeleton is robust as long as the scoring preserves the degree-based ordering.

{\bf Complexity.}\; %Let $N=|V|$, $M=|E|$, 
Let $T_\triangle$ be the number of triangles, $K$ be the number of critical cells, $S$ be the diffusion steps, $\ell$ and $\Lambda$ be the counts of WaveNet and transformer layers, $C$ be the embedding dimension, $L$ be the sequence length, $B$ be the batch size, and $\Delta$ be the maximum degree.
The Morse skeleton is built once on the fixed sensor graph in $\mathcal{O}(M\Delta)$. 
Triangle sparsity %and small $\Delta$ 
keep this efficient. 
The discrete Morse function is assigned via a degree-based scan in $\mathcal{O}(N+M+T_\triangle)$; critical simplices are identified in $\mathcal{O}(N M + M T_\triangle)$. Gradient-respecting paths between critical cells are found by BFS over the incidence graph with $\mathcal{O}(K^2 (N+M+T_\triangle))$ complexity. With a $0.9$ pruning rate, roughly $K\!\approx\!0.1N$, giving a worst-case of $\mathcal{O}(N^3)$. See Appx.~\ref{app_abl} for more detail. Empirically, the one-time Morse construction is cheap; on ogbn-arxiv (169K nodes, 1.16M edges) it runs in $2.2$ seconds at about $25$MB, and adds no per-epoch cost (Table \ref{tab:scalability}, Appx. \ref{app_abl}). %Three optimizations improve scalability: replace finding cliques with triangle enumeration over common neighborhoods ($\mathcal{O}(M\Delta)$); precompute edge--triangle coface indices to cut identification to $\mathcal{O}(M+T_\triangle)$; and restrict path search to local critical pairs, reducing cost from $\mathcal{O}(K^2)$ to $\mathcal{O}(Kk)$ for small $k$. Each iteration runs the GWaveNet encoder and denoiser with backprop. Graph convolution dominates at $\mathcal{O}(B C N^2 L)$ per hop and per layer, giving total $\mathcal{O}((\ell+\Lambda)BCN^2L)$, where $N^2$ arises from dense adjacency and quadratic attention. Inference performs $S$ denoising steps after a single encoding pass, costing $\mathcal{O}(\ell BCN^2L + S\Lambda BN^2C)$. For probabilistic evaluation with $Q$ stochastic samples, inference scales as $\mathcal{O}(\ell BCN^2L + QS\LambdaBN^2C)$, roughly $QS$ times slower than one denoiser call per training step.

\section{Conclusion}
\label{sec:conclusion}
We have introduced the powerful apparatus of cobordism and discrete Morse theory into graph diffusion. In particular, we have developed a new mathematically rigorous methodology MG-Diff for spatio-temporal graph generation that explicitly guides a diffusion generator on which global graph structures are to be intact and which local substructures can be randomized. MG-Diff has shown competitive gains both in forecasting and uncertainty quantification.
In the future, we plan to advance the MG-Diff to diffusion on manifolds and probabilistic object alignment, and to dynamic graph, where a graph trajectory is interpolated by a non-trivial cobordism whose Betti-number increments index predictive uncertainty (see also Appx. \ref{app_addexp}).

\section*{Acknowledgement}
This work has been supported in part by by NSF DMS-2533984/2533985 and OAC-2530469/253047, and by the U.S. Department of Energy (DOE), Office of Science, Advanced Scientific Computing Research (ASCR) program under the Scientific Discovery through Advanced Computing (SciDAC) Institute ``LEADS: LEarning-Accelerated Domain Science''.

\clearpage
%\section*{Impact Statement} This paper presents work whose goal is to advance the field of machine learning and bring untapped uses of theoretical topology into practice. There are many potential societal consequences of our work, none of which we feel must be specifically highlighted here.

%\section*{Acknowledgments}
%\newpage

% NeurIPS-style \bibitem entries
% Use with: \begin{thebibliography}{99} ... \end{thebibliography}
% Requires natbib (NeurIPS style file loads it automatically).

\bibliographystyle{plain}
\bibliography{references}

\newpage
\appendix
\section{Mathematical Background}
\label{sec:morse_appendix}

We start from  providing the following Table~\ref{tab:notation} of notation to ease readability.

\begin{table}[t!]
\centering
\caption{Table of Notations}
\label{tab:notation}
\small
\resizebox{0.8\linewidth}{!}{
\begin{tabular}{@{}ll@{}}
\toprule
\textbf{Symbol} & \textbf{Meaning} \\
\midrule
\multicolumn{2}{l}{\textit{Graphs and complexes}} \\
\midrule
$G=(V,E,X)$ & Simple graph: vertices $V$, edges $E$, node features $X$ \\
$N=|V|,\ M=|E|$ & Number of vertices and edges \\
$\deg(v)$ & Degree of vertex $v$; $\deg_{\max}=\max_{u\in V}\deg(u)$ \\
$\Lambda$ & Finite simplicial / regular CW complex \\
$\mathrm{Cl}(G)$ & Clique complex of $G$ (simplices $=$ cliques of $G$) \\
$\Lambda^{(p)},\ \alpha^{(p)}$ & Set of $p$-cells; a $p$-cell of $\Lambda$ \\
$\alpha\prec\beta$ & $\alpha$ is a (proper) face of $\beta$ \\
$\beta_p(K)$ & $p$-th Betti number; $\chi(K)$: Euler characteristic \\
\midrule
\multicolumn{2}{l}{\textit{Discrete Morse theory}} \\
\midrule
$g:V\to\mathbb{R}$ & Vertex score (seed); we use $g(v)=\deg_{\max}-\deg(v)+\varepsilon_v$ \\
$\varepsilon_v$ & Tie-breaking noise, $\varepsilon_v\sim\mathrm{Unif}(0,\delta)$ \\
$f:K\to\mathbb{R}$ & Discrete Morse function on $\Lambda$ \\
$U_f(\alpha),\ V_f(\alpha)$ & Cofaces with $f(\beta)\le f(\alpha)$; faces with $f(\gamma)\ge f(\alpha)$ \\
$\mathrm{Flag}(\alpha)$ & Construction-time flag tracking pairing of $\alpha$ \\
$\eta(e)$ & Edge-noise term in raise branch, $\eta(e)\sim\mathrm{Unif}(0,0.5)$ \\
$\mathcal{V}$ & Discrete gradient vector field (acyclic matching on $\Lambda$) \\
$(\sigma,\tau)\in\mathcal{V}$ & Matched pair: $\sigma\prec\tau$, $f(\tau)\le f(\sigma)$ \\
$G_{\mathcal{V}}$ & Directed Hasse diagram induced by $\mathcal{V}$ \\
$\mathcal{V}$-path & Alternating directed path in $G_{\mathcal{V}}$ \\
$E^{\mathrm{pair}},\ E^{\mathrm{crit}}$ & Paired vs.\ critical edges; $E=E^{\mathrm{pair}}\sqcup E^{\mathrm{crit}}$ \\
$\Gamma(\mathcal{V})$ & Paired-edge digraph on $V$ \\
$r:V\to V$ & Anchor map: $r(v)=$ unique sink reached in $\Gamma(\mathcal{V})$ \\
\midrule
\multicolumn{2}{l}{\textit{Morse skeleton}} \\
\midrule
$\mathcal{C}(f)$ & Set of critical cells; $\mathcal{C}_p(f)$: critical $p$-cells \\
$\mathcal{D}_{\mathcal{V}}(\sigma)$ & Descending set of critical cell $\sigma$ \\
$\pi(\alpha)$ & Single representative descending path from critical cell $\alpha$ \\
$\mathcal{D}=\{(\alpha,\pi(\alpha)):\alpha\in\mathcal{C}(f)\}$ & Descending paths used by MG-Diff \\
$\mathcal{S}(G)=(\mathcal{C},\mathcal{D})$ & Morse skeleton of $G$ \\
$\mathfrak{S}$ & Space of possible Morse skeletons of $\mathrm{Cl}(G)$ \\
\midrule
\multicolumn{2}{l}{\textit{Extractor and stability}} \\
\midrule
$\mathbb{S}$ & State space (inputs read by the extractor: $G$, scores, flags) \\
$s\in\mathbb{S}$ & A state \\
$\mathrm{Skel}:\mathbb{S}\to\mathfrak{S}$ & Deterministic skeleton extractor; $\mathrm{Skel}(s)=\mathcal{S}(G)$ \\
$\varphi:\mathbb{S}\to\mathbb{R}^d$ & Feature map on states \\
$\mathcal{J}$ & Finite index set of comparisons performed by extractor \\
$u_j,v_j:\mathbb{R}^d\to\mathbb{R}$ & Left/right sides of comparison $j\in\mathcal{J}$ \\
$\Delta_j(s)=u_j(\varphi(s))-v_j(\varphi(s))$ & Decision gap for comparison $j$ \\
$F:\{-1,+1\}^{\mathcal{J}}\to\mathfrak{S}$ & Sign-pattern $\to$ skeleton map \\
$m(s)=\min_{j}|\Delta_j(s)|$ & Decision margin at state $s$ \\
$L_\Delta$ & Lipschitz constant of decision gaps \\
$r(s)=m(s)/L_\Delta$ & Stability radius \\
\midrule
\multicolumn{2}{l}{\textit{Diffusion model}} \\
\midrule
$X_0,\ X_k$ & Clean input; noised state at diffusion step $k\in\{1,\ldots,K\}$ \\
$X^h$ & Historical input \\
$\beta_k,\ \alpha_k=1-\beta_k,\ \bar\alpha_k$ & Noise schedule \\
$\epsilon,\ \epsilon_\theta$ & Injected noise; denoiser network \\
$p_\theta(\cdot\mid X_k,\bar H,k)$ & Reverse-step kernel \\
$G_\alpha,\ G_{\alpha'}$ & Critical / non-critical cell subgraphs \\
$A_\alpha,\ A_{\alpha'}$ & Critical / non-critical adjacency matrices \\
$\Psi:\mathfrak{S}\times\mathcal{X}\to\mathcal{H}$ & Guidance map \\
$\bar H(s)=\Psi(\mathrm{Skel}(s),X^h)$ & Morse-aware guidance signal \\
$H_\alpha,\ H_{\alpha'}$ & Expert features (critical / non-critical) \\
$\theta_\alpha,\ \theta_{\alpha'}$ & MoMoE routing weights \\
$Q,K,V$ (with subscripts) & Self-/cross-attention query, key, value matrices \\
$d_h$ & Embedding dimension \\
$\upsilon_1,\upsilon_2$ & MDDN attention scores \\
\bottomrule
\end{tabular}}
\end{table}

\paragraph{Organization.}
Section \ref{sec:morse_appendix} gives brief definitions on graphs and their associated clique complexes.
Section~\ref{sec:smooth_morse} reviews smooth Morse theory on manifolds.
Section~\ref{sec:cobordism_appendix} introduces cobordisms and their relationship to Morse functions.
Section~\ref{sec:discrete_morse_appendix} presents discrete Morse theory for cell complexes.
Section~\ref{sec:discrete_flow_descending} develops the theory of $\mathcal{V}$-paths and descending sets.
Section~\ref{sec:graph_acyclic_orientation} specializes these constructions to graphs.
Section~\ref{sec:H0_faithful_semantics} establishes semantic guarantees under $H_0$-faithfulness.
Section~\ref{sec:stability_skeleton_guidance} proves stability results for Morse-skeleton guidance, and specifically Section~\ref{sec:margin_stability} establishes margin stability of the 1D Morse skeleton. Section~\ref{sec:worked_ex} presents detailed worked examples.
Section~\ref{sec:ph_counterexamples} provides counterexamples showing persistence does not determine descending manifolds.
Section~\ref{sec:algorithms_appendix} contains the complete algorithmic specifications.

This appendix provides comprehensive mathematical background on smooth Morse theory, cobordisms, discrete Morse theory, and their applications to graph diffusion. While these ideas are classical in topology, we include them for completeness and to make precise the objects manipulated by our algorithmic constructions. For deeper theory, we refer readers to classical texts in Morse theory and algebraic topology~\cite{milnor1963morse, stong1968cobordism, forman1998morse, hatcher2002algebraic}.
%========================================%===================

\subsection{Positioning}

Persistent homology has emerged as the central algebraic tool in TDA,
capturing the birth and death of homological features across a filtration of spaces.
Given a real-valued function $f:M\to\mathbb{R}$ on a manifold (or a filtered complex $\Lambda$),
the persistence diagram $\mathrm{Dgm}_*(f)$ records \emph{when} homological features are born
and \emph{when} they are paired (killed by other critical points), yielding a multiset of
birth--death intervals that is stable under $ ||\cdot ||_\infty$ perturbations of $f$ \cite{cohen2007stability}.

In parallel, classical Morse theory provides a fundamentally \emph{geometric} description of
how gradient flow organizes a manifold into cells: the stable and unstable (ascending and descending)
manifolds of critical points, together with their intersection patterns, form the
\textit{Morse-Smale complex} \cite{ smale1961generalized, milnor1963morse}.
This flow-induced decomposition captures not merely the fact that critical points exist and pair,
but \emph{how} gradient trajectories connect them, which informs essential understanding
the global dynamics and homotopy type of the space.

\paragraph{The gap: homology vs.\ homotopy.}
It is folklore in the computational topology community that persistent homology and Morse--Smale
structure encode \emph{complementary} rather than equivalent information.
As Bauer \cite{bauer2011persistence} states precisely:
\begin{quote}\itshape
``Whereas (discrete) Morse theory makes statements about the homotopy type of the sublevel sets
of a function, persistence is concerned with their homology. While homology is an invariant of
homotopy equivalences, the converse is not true.''
\end{quote}
The Hurewicz homomorphism $\pi_1(X)\to H_1(X)$ further abelianizes the fundamental group (\cite{hatcher2002algebraic}),
discarding all non-abelian structure that may be critical for understanding gradient flow connectivity.
Thus persistence diagrams encode \emph{critical values and their pairing structure} but not
\emph{gradient flow geometry}.

\paragraph{Existing evidence: counterexamples on surfaces.}
The most significant evidence for this gap comes from Catanzaro et al. \cite{Catanzaro_2020}, who construct explicit examples of two Morse
functions on $S^2$ with \emph{identical persistence barcodes and identical Reeb graphs} that
are nonetheless not gradient-equivalent---i.e., they have different Morse--Smale complexes.
Their paper introduces the \textit{level set invariant} (a poset of nested disks and annuli)
to distinguish what persistence and Reeb graphs cannot, and they explore ``the moduli space of
Morse functions sharing the same persistence barcode.''

Similarly, recent work on computational accuracy of Morse--Smale complexes \cite{thanh2024revisitingaccurategeometrymorsesmale}
observes that ``the connectivity of the separatrices differs'' between different discretizations,
and crucially, ``topological differences have a high persistence'' two Morse-Smale decompositions
can yield identical persistence diagrams.

However, \emph{no paper contains a formal theorem} stating that persistent homology \emph{cannot} determine descending manifold structure. In this paper, we provide explicit counterexamples in Section~\ref{sec:ph_counterexamples} showing that there exists no well-defined map from persistence diagrams to descending-manifold structures.

\subsection{Topology in Modern Machine Learning}

Over the last several years, ``topology in machine learning'' has matured from a niche feature-engineering tool into a set of practical modeling primitives that (i) encode higher-order structure beyond edges, (ii) impose global consistency constraints, and (iii) stabilize learning when local heuristic descriptors are insufficient. One major thrust comes from \emph{persistent homology}, which provides multiscale summaries of data shape that can be robustly incorporated into learning algorithms. Seminal work in network topology showed how discrete Morse theory can be used to dramatically reduce the complexity of persistent homology computations on unweighted graphs by identifying critical simplices and constructing efficient filtrations~\cite{kannan2019persistent}.

In parallel, \emph{topological deep learning} has pushed topology from a post-hoc descriptor to an architectural design principle: rather than restricting message passing to 1-skeleton neighborhoods, models lift data to simplicial or cell complexes and propagate information over higher-order incidences, improving expressivity and capturing multiway interactions. A canonical example is Message Passing Simplicial Networks (MPSN), which extends Weisfeiler--Lehman-style reasoning from graphs to simplicial complexes and formalizes how higher-order message passing increases representational power~\cite{bodnar2021wltopological}.

Most relevant to this paper's goal (topology-aware diffusion/generation) is the emerging line of work that treats topology not merely as an intrinsic characteristic to match, but as a \emph{essential signal} structuring generative trajectories. On the applied side, TopoGen demonstrates that diffusion-style generation can be steered by topological constraints to produce topology-preserving outputs~\cite{hu2025topogen}. Additionally, recent studies have begun to connect persistent homology to diffusion dynamics through \emph{coarsening and multiscale structure}: persistent-homology-derived summaries are used to define meaningful intermediate objectives that guide diffusion at different resolutions, improving global structure retention during sampling.

Taken together, these developments motivate our central design choice: if a diffusion model is to avoid \emph{topological drift}, the intermediate object used for coarse-to-fine guidance should encode \emph{actual topological events and admissible refinement routes}, rather than only an ordering heuristic on local motifs.

Discrete Morse theory (DMT) is established in computational topology and machine learning, e.g. for image segmentation \citep{hu2023learning} and graph reconstruction\citep{dey2018graph}. Persistent homology (PH) is used throughout topological deep learning\citep{horn2022topological,yan2022neural,verma2024topological}, and topology-aware diffusion has been proposed for graphs and images \citep{park_topology-aware_2025, gupta2024topodiffusionnet}. Our contribution is therefore not the sole introduction of DMT itself, but its integration into graph diffusion as a topology-aware conditioning signal. We note that a diagram-matching loss $\beta_0/\beta_1$ attention \citep{park_topology-aware_2025}, a Betti-number loss \citep{gupta2024topodiffusionnet}, injected PH features\citep{horn2022topological,verma2024topological}, or a learned diagram carrier. If we write $P: (K,f) \rightarrow \text{Dgm}_*(K)$ for the persistence map, each signal factors as $\psi \circ P$ whereas our conditioner $\text{Skel}(K) = (C(f), \mathcal{V}, \mathcal{D}_\mathcal{V}(\alpha))$ does not. We note that by Proposition \ref{prop:skel-strict-refinement} and Section~\ref{sec:ph_counterexamples}, there exist $(K,f)$ and $(K,f')$ with $P(K,f)=P(K,f')$, but result in different descending paths, so the persistence diagram is not a sufficient statistic for the descending-path structured and no PH signal recovers what MoMoE routes on. We can compare directly to \citep{park_topology-aware_2025}, as it is conditioned on exactly the insufficient PH object. The DMT-based methods \citep{hu2023learning,dey2018graph} do build a Forman gradient field and a critical-edge spanning forest partition (or sample persistence thresholded $1$-stable manifolds), but we emphasize that MG-Diff differs in three respects. First, in the \textit{domain} sense, as they require an ambient scalar field on a cubical or simplicial complex, whereas we act on an abstract graph via a degree score, as $g$ can absorb such a field when one is available, though degree is the intrinsic default. Second, in its \textit{use}, we they perform deterministic reconstruction or segmentation, whereas we use the skeleton as a generative diffusion condition. Thirdly, in \textit{persistence}, as persistence is used to prune the Morse complex, whereas our skeleton is persistence-free; wile persistence may simplify a complex, the diagram does not determine the descending paths. We summarize these distinctions in \ref{tab:positioning}.

\begin{table}[H]
\centering
\caption{Positioning of MG-Diff among topology-aware diffusion and discrete-Morse (DMT) methods in ML.}
\label{tab:positioning}
\resizebox{\textwidth}{!}{
\begin{tabular}{llcccc}
\toprule
Method & Topological signal & DMT & Persistence & Domain & Task \\
\midrule
TAGG~\citep{park_topology-aware_2025} & PH (diagram-matching loss, $\beta_i$ attention) & No & full & abstract graph & graph generation (diffusion) \\
TopoDiffusionNet~\citep{gupta2024topodiffusionnet} & PH (Betti number) & No & full & image/cubical & image generation (diffusion) \\
Prob.\ Topo.\ Rep.~\citep{hu2023learning} & DMT+PH & Yes & prunes Morse complex & image likelihood field & segmentation (reconstruction) \\
Dey--Wang--Wang~\citep{dey2018graph} & DMT+PH & Yes & prunes Morse complex & density field $\rho$ & graph/curve reconstruction \\
TOGL~\citep{horn2022topological} & PH (learned filtration) & No & full & abstract graph & classification \\
PDGNN~\citep{yan2022neural} & PH (EPD surrogate) & No & full & abstract graph & classification \\
TopNets~\citep{verma2024topological} & PH (equivariant) & No & full & simplicial/geometric & classification/regression \\
\midrule
{\bf MG-Diff (Ours)} & DMT field with descending paths & Yes & none & abstract graph & graph generation (diffusion) \\
\bottomrule
\end{tabular}}
\end{table}

\label{sec:topology_ml}

Discrete curvature has also been used extensively to identify bottlenecks in message-passing, and is also viewed as the balanced Forman curvature of . Derived from Forman's combinatorial differential geometry, as Forman authored both discrete Morse theory and combinatorial Forman-Ricci curvature; the discrete Morse function we build follows, which derives from a similar area that develops Forman-Ricci curvature. The skeleton and curvature-based bottleneck analysis are thus two branches of one combinatorial framework. We note that both separte bride-liek edges from cycle-supporting edges, which is the dichotomy that governs the information flow. Note that in, a negatively curved edge is a bridge with little cycle support such as an oversquashing bottleneck, and in our decomposition such bridges lie in $E^\text{pair}$ as a bridge closes no cycle and therefore cannot be critical. On the other hand, cycle-supporting, positively curved regions are precisely where hte critical edges $E^\text{crit}$ concentrate.

\subsection{Graphs as Clique Complexes}
\label{sec:basic_defs}
Let $G=(V,E, X)$ be a %finite 
simple graph (no self-loops or multi-edges), where $V=\{v_0,\dots,v_n\}$ is the vertex set ($n = |V|$ for the number of vertices), %\cite{Harary}, %\YC{Do we need to highlight no self-loop here?} \Jen{@Yuzhou, not necessarily that G cannot, but a simplicial complex by definition cannot. So although the original G does not have to be restricted, we do need to highlight that in the final simplicial complex we do not. In my current code implementation we have that for $u, v\in E \Rightarrow$ complex$_$dict[1].append(frozenset([u,v]). It would make it so that dim(frozenset([v]) = len(v) -1 if u and v were the same. In other words downstream V path would treat it as a vertex not an edge}, 
$E \subseteq V \times V$ is the edge set ($m = |E|$ for the number of edges), and $X$ is the node feature matrix. We associate clique complex $\Lambda$ with $G$ which is the abstract simplicial complex whose simplices are the vertex sets of cliques in $G$: a $p$-simplex is a set of $p{+}1$ mutually adjacent vertices. Hence, vertices, edges, and triangles correspond to $0$-, $1$-, and $2$-simplices, respectively. Formally, $\Lambda$ is downward closed: if $\alpha\in \Lambda$, then every non-empty subset of $\alpha$ lies in $\Lambda$; $\dim(\alpha)=|\alpha|-1$, $\dim(\Lambda)=\max_{\alpha\in \Lambda}\dim(\alpha)$, and $\gamma<\alpha$ denotes a proper face. (See~\cite{hatcher2002algebraic,edelsbrunner2010computational} for background on simplicial complexes and homology.)

\section{Smooth Morse Theory on Manifolds}
\label{sec:smooth_morse}

\paragraph{Smooth Manifolds.}
Let $M$ be a smooth, compact $d$-dimensional manifold (possibly with boundary). We assume $M$ is equipped with a smooth Riemannian metric $\gamma$, which allows us to define gradients and gradient flows.

\begin{definition}[Critical point]
Let $M$ be a smooth $n$-dimensional manifold and let $f : M \to \mathbb{R}$ be a smooth function. A point $p \in M$ is called a \emph{critical point} of $f$ if the differential of $f$ vanishes at $p$, i.e., $df_p = 0$. Equivalently, in any local coordinate chart $(x_1,\dots,x_n)$ around $p$, all first partial derivatives vanish at $p$:
\[
\frac{\partial f}{\partial x_i}(p) = 0 \quad \text{for all } i=1,\dots,n.
\]
A point $p$ is called a \emph{regular point} if $df_p \neq 0$.
\end{definition}

\paragraph{Morse Functions.}
\begin{definition}[Morse Function]
A smooth function $f : M \to \mathbb{R}$ is a \emph{Morse function} if all of its critical points are non-degenerate. That is, for every $p \in M$ such that $\nabla f(p) = 0$, the Hessian $\mathrm{Hess}_p(f)$ is non-singular.
\end{definition}

Non-degeneracy implies that critical points are isolated and finite in number for compact $M$.

\paragraph{Index of a Critical Point.}
\begin{definition}[Morse Index]
Let $p$ be a critical point of a Morse function $f$. The \emph{Morse index} of $p$, denoted $\operatorname{ind}(p)$, is the number of negative eigenvalues of $\mathrm{Hess}_p(f)$.
\end{definition}

Geometrically, the index counts the number of independent directions in which $f$ decreases near $p$.

\paragraph{Gradient Flow.}
Given the Riemannian metric $\gamma$, the gradient vector field $\nabla f$ defines a flow $\varphi_t : M \to M$ by
\[
\frac{d}{dt} \varphi_t(x) = -\nabla f(\varphi_t(x)).
\]
Along flow lines, the function value strictly decreases except at critical points.

\paragraph{Descending and Ascending Manifolds.}
\begin{definition}[Descending (Stable) Manifold]
The \emph{descending manifold} of a critical point $p$ is
\[
W^s(p) = \{ x \in M \mid \lim_{t \to +\infty} \varphi_t(x) = p \}.
\]
\end{definition}

\begin{definition}[Ascending (Unstable) Manifold]
The \emph{ascending manifold} of $p$ is
\[
W^u(p) = \{ x \in M \mid \lim_{t \to -\infty} \varphi_t(x) = p \}.
\]
\end{definition}

For a Morse function, $W^s(p)$ is a smooth submanifold of dimension $d - \operatorname{ind}(p)$, while $W^u(p)$ has dimension $\operatorname{ind}(p)$.

\paragraph{Cell Decomposition and Topology.}
A fundamental theorem of Morse theory states that $M$ admits a CW decomposition with one $\operatorname{ind}(p)$-cell for each critical point $p$. Thus, Morse functions encode the topology of $M$ through their critical points.

\section{Cobordisms and Morse Functions}
\label{sec:cobordism_appendix}

\paragraph{Cobordism.}
\begin{definition}[Cobordism]
Two closed $(d-1)$-manifolds $M_0$ and $M_1$ are \emph{cobordant} if there exists a $d$-manifold $W$ such that
\[
\partial W = M_0 \sqcup M_1.
\]
\end{definition}

    One can view a cobordism as the entire process of the manifold $M_0$ as it goes through transitions to become $M_1$. Expanding on our analogy in Section \ref{sec:intro}, we can see the analogy in the transitions between a graph and a walking person in Figure \ref{fig:human_cobord}. This perspective can be thought of analogously to how Khovanov homology tracks topology-changing saddle cobordisms between link resolutions \cite{khovanov2000categorification, bar-natan2002khovanov}. 

\begin{figure}[t]
    \centering
    \includegraphics[width=1\linewidth]{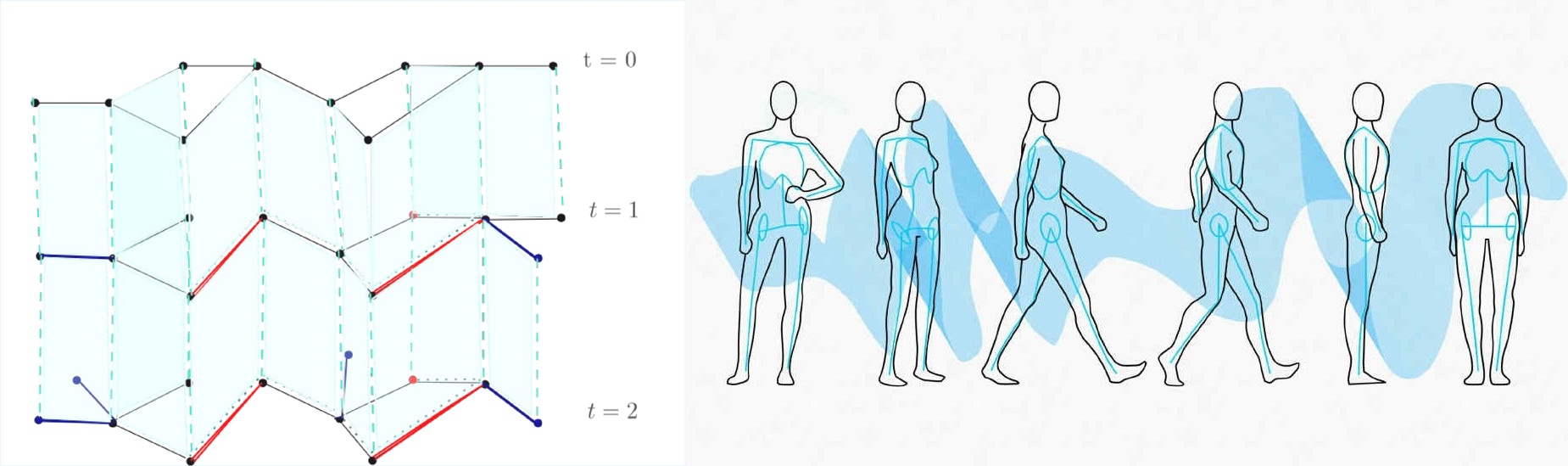}
    \caption{Side-by-side comparison of the Morse-guided graph evolution (left) and the corresponding human motion phases (right).}
    \label{fig:human_cobord}
\end{figure}

\paragraph{Morse Functions on Cobordisms.}
A Morse function $f : W \to [0,1]$ whose critical points lie in the interior of $W$ describes how $M_0$ is transformed into $M_1$ by attaching handles. Each critical point corresponds to a handle attachment of index equal to the Morse index.

This perspective is central in topology: \emph{changing topology occurs precisely at critical points}.

\paragraph{Deforming Morse Functions.}
Given two Morse functions $f_0, f_1 : M \to \mathbb{R}$, one may consider a generic homotopy (a smooth transition of one function to another) $f_t$. Such a homotopy may introduce or cancel pairs of critical points of adjacent indices, corresponding to handle creation or cancellation. This underlies the idea of moving between different Morse descriptions of the same space.

\section{Discrete Morse Theory}
\label{sec:discrete_morse_appendix}

We now turn to the discrete analogue of smooth Morse theory introduced by Forman~\cite{forman1998morse, forman2002users}, which replaces smooth manifolds with cell complexes.

\paragraph{Cell Complexes.}
Let $\Lambda$ be a finite regular CW complex (e.g., a simplicial complex or cubical complex). Denote by $\Lambda^{(p)}$ the set of $p$-cells.

\subsection{Discrete Morse Functions and Critical Simplices}

Let $\Lambda$ be a simplicial complex and let $f : \Lambda \to \mathbb{R}$ be a real-valued function defined on its simplices. For a $p$-simplex $\alpha \in \Lambda^p$, define the sets
\begin{equation}
U_f(\alpha) = \left\{ \beta \in \Lambda^{p+1} \,\middle|\, \alpha < \beta \text{ and } f(\beta) \leq f(\alpha) \right\},
\end{equation}
\begin{equation}
V_f(\alpha) = \left\{ \gamma \in \Lambda^{p-1} \,\middle|\, \gamma < \alpha \text{ and } f(\gamma) \geq f(\alpha) \right\}.
\end{equation}
Intuitively, $U_f(\alpha)$ contains cofaces of $\alpha$ with function value less than or equal to that of $\alpha$, while $V_f(\alpha)$ contains faces of $\alpha$ whose function value is greater than or equal to that of $\alpha$.

\begin{definition}[Discrete Morse Function]
A function $f : \Lambda \to \mathbb{R}$ is called a \emph{discrete Morse function} if for every simplex $\alpha \in \Lambda^p$,
\begin{equation}
|U_f(\alpha)| \leq 1 \quad \text{and} \quad |V_f(\alpha)| \leq 1.
\end{equation}
\end{definition}

\begin{definition}[Critical Cell]
A simplex $\alpha \in K^p$ is said to be \emph{critical} if
\begin{equation}
|U_f(\alpha)| = |V_f(\alpha)| = 0.
\end{equation}
Equivalently, $\alpha$ is critical if all cofaces $\beta$ of $\alpha$ satisfy $f(\beta) > f(\alpha)$ and all faces $\gamma$ of $\alpha$ satisfy $f(\gamma) < f(\alpha)$.
\end{definition}

Critical $p$-cells play the role of critical points of index $p$. They encode the essential topological features of the complex and form the backbone of the Morse-based filtration used to compute persistent homology.

\paragraph{Why a single path is sufficient.}
The full descending manifold can be large and unstable under benign perturbations of non-critical structure (e.g., local edge rewiring that does not alter $\mathcal{C}(f)$). A single descending path yields a \emph{compact and stable representative} that (i) anchors the critical cell to a concrete location in the $1$-skeleton, (ii) preserves the directionality of gradient-induced topological flow, and (iii) produces a fixed-size object suitable for diffusion guidance. Crucially, the event semantics reside in the existence and identity of critical cells and their induced flow direction; selecting a representative path constitutes a controlled compression that preserves these semantics while allowing variability in non-critical regions.

\paragraph{Discrete Gradient Vector Field.}
\label{sec:dis_gvf}
\begin{definition}[Discrete gradient vector field]
\label{def:dgvf}
A \emph{discrete gradient vector field} $\mathcal{V}$ is a collection of pairs $(\alpha^{(p)},\beta^{(p+1)})$ with $\alpha\subset\beta$, such that each cell appears in at most one pair. We require $\mathcal{V}$ to be \emph{acyclic}, i.e., there are no nontrivial closed $\mathcal{V}$-paths.
\end{definition}

% \begin{definition}[Filtration-compatibility]
% \label{def:filtration_compat_appendix}
% A discrete gradient vector field $\mathcal{V}$ is \emph{filtration-compatible} with $f$ if for every $(v,e)\in\mathcal{V}$ we have $f(v)=f(e)$.
% \end{definition}

\begin{definition}[Compatibility notions]
\label{def:compatibility_notion}
Let $\Lambda$ be a finite cell complex, $f:\Lambda\to\mathbb{R}$ a discrete Morse function, and $\mathcal{V}$ a discrete gradient vector field.

\begin{enumerate}
\item \textbf{$f$-admissible (induced-filtration compatible).}
We say $\mathcal{V}$ is \emph{$f$-admissible} if for every matched pair $(\sigma^{(p)},\tau^{(p+1)})\in\mathcal{V}$ we have
\[
f(\tau)\le f(\sigma).
\]
Equivalently, if $F_t:=\{\alpha\in \Lambda: f(\alpha)\le t\}$ is the sublevel-set filtration induced by $f$, then $\sigma\in F_t$ implies its paired coface (if any) also lies in $F_t$.

\item \textbf{Strict filtered matching (external filtration).}
Given an externally specified filtration $F:\Lambda_0\subset\cdots\subset \Lambda_T$ and entry time $t_F(\alpha)$, we say $\mathcal{V}$ is \emph{entry-time compatible} if every matched pair satisfies $t_F(\sigma)=t_F(\tau)$.
\end{enumerate}
\end{definition}

\paragraph{Directed Hasse diagram and $\mathcal{V}$-paths.} \label{sec:hasse}
Let $\Lambda$ be a finite regular CW complex, and let $\mathcal{V}$ be a discrete gradient vector field consisting of pairs $(\alpha^{(p)},\beta^{(p+1)})$ with $\alpha\prec\beta$ and each cell appearing in at most one pair. Define the \emph{directed Hasse diagram} $G_\mathcal{V}$ on the cells of $\Lambda$ by orienting each incidence $\alpha^{(p)} \prec \beta^{(p+1)}$ as
\[
\beta \to \alpha
\quad\text{unless}\quad
(\alpha,\beta)\in \mathcal{V},
\ \text{in which case}\ 
\alpha\to \beta.
\]
A \emph{$\mathcal{V}$-path} is a directed path in $G_\mathcal{V}$ alternating dimensions:
\[
\alpha_0^{(p)} \prec \beta_0^{(p+1)} \succ \alpha_1^{(p)} \prec \beta_1^{(p+1)} \succ \cdots.
\]
A cell is \emph{critical} if it does not appear in any pair of $\mathcal{V}$.

\paragraph{Forman's Morse Inequalities.}
Let $c_p$ be the number of critical $p$-cells. For all $p$,
\[
c_p \ge \beta_p(\Lambda)
\]
and the alternating sum equality
\[
\sum_p (-1)^p c_p = \chi(\Lambda)
\]
holds, where $\chi(\Lambda)$ is the Euler characteristic.

\paragraph{Topology Preservation.}
Forman showed that a complex $\Lambda$ admits a homotopy-equivalent CW complex with exactly one $p$-cell for each critical $p$-cell. Thus, discrete Morse theory provides a principled way to simplify complexes while preserving topology: the critical cells determine a minimal CW model up to homotopy.

\paragraph{Avoiding a degenerate choice of Morse function $f$.}
A cautionary example is the \emph{dimension function} $f(\sigma)=\dim(\sigma)$: by direct inspection of Forman's conditions, this choice makes $U_\alpha=V_\alpha=\varnothing$ for every simplex $\alpha$, so \emph{every simplex is critical} (i.e., the skeleton is uselessly dense). This pathology is well-known and explicitly noted in prior work on discrete Morse filtrations for clique complexes~\cite{kannan2019persistent}.

\section{Discrete Flow and Descending Sets}
\label{sec:discrete_flow_descending}

\begin{definition}[$\mathcal{V}$-paths (Forman)]
\label{def:Vpath_general}
Let $\Lambda$ be a finite regular CW complex and let $\mathcal{V}$ be a discrete gradient vector field on $\Lambda$.
A \emph{$\mathcal{V}$-path} is a sequence of cells
\[
\sigma_0^{(p)} \prec \tau_0^{(p+1)} \succ \sigma_1^{(p)} \prec \tau_1^{(p+1)} \succ \cdots
\prec \tau_{m-1}^{(p+1)} \succ \sigma_m^{(p)}
\]
such that $(\sigma_i,\tau_i)\in\mathcal{V}$ for each $i$ and $\sigma_{i+1}$ is a codimension-$1$ face of $\tau_i$ with $\sigma_{i+1}\neq \sigma_i$.
\end{definition}

\begin{definition}[Descending set / discrete unstable manifold]
\label{def:descending_set}
Fix $\mathcal{V}$ on $\Lambda$. For a critical $p$-cell $\sigma$, define its \emph{descending set}
\[
\mathcal{D}_{\mathcal{V}}(\sigma)
:= \{\, \tau \in \Lambda \;:\; \text{ exists $\mathcal{V}$-path at $\sigma$ reaching $\tau$}\,\}.
\]
Equivalently, $\tau\in\mathcal{D}_{\mathcal{V}}(\sigma)$ if $\tau$ is reachable from $\sigma$ by repeatedly following the directed incidences induced by $\mathcal{V}$ in the Hasse diagram.
\end{definition}

\begin{remark}[Terminology]
Many authors refer to $\mathcal{D}_{\mathcal{V}}(\sigma)$ as the \emph{discrete descending region} or \emph{descending manifold} (discrete unstable manifold). We use \emph{descending set} to emphasize that this is a combinatorial reachability object rather than a smooth submanifold.
\end{remark}

\paragraph{Why descending sets appear here.}
Descending sets are the discrete analogue of unstable manifolds: they describe which parts of the complex ``flow out of'' a critical cell under $\mathcal{V}$. In our context, the Morse skeleton (critical cells plus the incidence structure induced by $\mathcal{V}$) serves as a compact structural summary used to guide diffusion.

\begin{figure}[ht]
  \centering
  \includegraphics[width=0.6\linewidth]{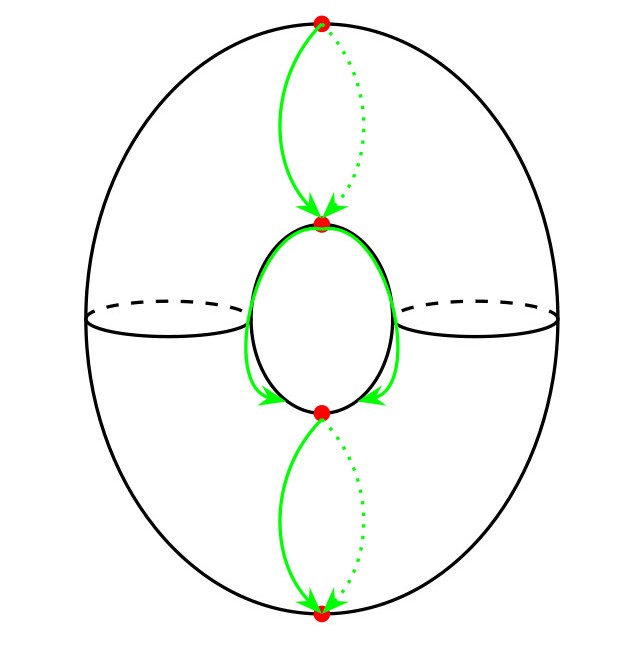}
  \caption{The torus viewed as a cobordism between two empty sets. The green marks indicate the gradient flow lines that constitute the connections between critical points along the gradient vector field.}
  \label{fig:torus_descending}
\end{figure}

\section{Specialization to Graphs: Acyclicity and Paired-Edge Orientation}
\label{sec:graph_acyclic_orientation}

Henceforth let $G=(V,E)$ be a finite graph, viewed as a $1$-dimensional CW complex, and let $\mathcal{V}$ be a discrete gradient vector field on $G$, i.e., a collection of disjoint vertex--edge pairs $(v,e)$ with $v\in e$.

\begin{definition}[Acyclicity (Forman $\mathcal{V}$-path condition on graphs)]
\label{def:acyclic_graph}
The vector field $\mathcal{V}$ is \emph{acyclic} if there is no nontrivial closed $\mathcal{V}$-path. Equivalently, there do not exist vertices $v_0,\dots,v_{k-1}$ and edges $e_0,\dots,e_{k-1}$ with $k\ge 1$ such that for each $i$ (indices mod $k$),
\[
(v_i,e_i)\in\mathcal{V}, \qquad e_i=\{v_i,v_{i+1}\},
\]
and $v_k=v_0$.
\end{definition}

\begin{definition}[Paired-edge orientation and the induced directed graph]
\label{def:Gamma_fullgraph}
Define the paired-edge set
\[
E^{\mathrm{pair}} := \{\, e\in E : \exists v\in V \text{ such that } (v,e)\in\mathcal{V}\,\}.
\]
For each paired edge $e=\{v,w\}\in E^{\mathrm{pair}}$, exactly one endpoint is paired to it; write $(v,e)\in\mathcal{V}$. Define a directed graph $\Gamma(\mathcal{V})$ on vertex set $V$ by adding the directed edge $v\to w$ for each such pair $(v,e)$.
\end{definition}

\begin{lemma}[Outdegree bound]
\label{lem:outdegree_bound}
Every vertex in $\Gamma(\mathcal{V})$ has outdegree at most $1$.
\end{lemma}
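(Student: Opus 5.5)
The plan is to read off outdegrees directly from how $\Gamma(\mathcal{V})$ is built, and then use the defining property of a discrete gradient vector field from Definition~\ref{def:dgvf}: each cell appears in at most one pair of $\mathcal{V}$. The only point to check is that every arrow out of a vertex $v$ comes from a pair in $\mathcal{V}$ whose vertex component is $v$ itself. Once that is established, the matching condition bounds the number of such arrows by one.

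First I will unpack Definition~\ref{def:Gamma_fullgraph}. The directed edges of $\Gamma(\mathcal{V})$ are in bijection with the pairs $(v,e)\in\mathcal{V}$, where $e\in E^{\mathrm{pair}}$. Each such pair $(v,e)$ with $e=\{v,w\}$ contributes exactly one arrow, $v\to w$, and this arrow has tail $v$.

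I should also confirm that this assignment is well defined. Each edge $e$ appears in at most one pair, again by Definition~\ref{def:dgvf}. So the paired endpoint of $e$ is unique, and $e$ produces exactly one arrow with a determined direction. In particular, no edge produces two arrows in opposite directions.

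Now fix a vertex $v$. By the previous step, the outgoing arrows at $v$ are in bijection with the set $\{e : (v,e)\in\mathcal{V}\}$. Since $v$ is a cell, it appears in at most one pair of $\mathcal{V}$, so this set has cardinality at most $1$. Hence $\mathrm{outdeg}_{\Gamma(\mathcal{V})}(v)\le 1$. Equality holds exactly when $v$ is paired; when $v$ is critical, or paired downward (which cannot happen for vertices in a $1$-complex), the outdegree is $0$.

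There is no real obstacle here. The only subtlety is bookkeeping: one must make sure that arrows are indexed by pairs rather than by edges. That way, the two different endpoints of an edge cannot both be counted as tails, and distinct edges incident to $v$ can contribute tails at $v$ only through distinct pairs containing $v$, which the matching condition forbids. Acyclicity is not needed for this lemma. It will matter only later, for the fact that following outgoing arrows terminates at a sink, which gives the anchor map $r$.
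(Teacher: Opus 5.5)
Your argument is correct and is the paper's proof: every arrow of $\Gamma(\mathcal{V})$ leaving $v$ comes from a pair $(v,e)\in\mathcal{V}$, and each vertex lies in at most one pair, so $v$ has at most one outgoing arrow. Your additional check that each paired edge gives exactly one well-defined arrow is bookkeeping that the paper leaves implicit in Definition~\ref{def:Gamma_fullgraph}.
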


\begin{proof}
Each vertex appears in at most one pair $(v,e)\in\mathcal{V}$ by definition of a discrete gradient vector field. Thus, from any vertex $v$ there is at most one paired edge $e$ with $(v,e)\in\mathcal{V}$, and hence at most one outgoing edge in $\Gamma(\mathcal{V})$.
\end{proof}

\begin{lemma}[Acyclicity implies no directed cycles]
\label{lem:acyclic_implies_DAG}
If $\mathcal{V}$ is acyclic in the sense of Definition~\ref{def:acyclic_graph}, then $\Gamma(\mathcal{V})$ contains no directed cycles.
\end{lemma}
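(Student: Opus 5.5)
We argue by contradiction, combining the definition of a closed $\mathcal{V}$-path (Definition~\ref{def:acyclic_graph}) with Lemma~\ref{lem:outdegree_bound}. Suppose $\Gamma(\mathcal{V})$ contains a directed cycle $v_0\to v_1\to\cdots\to v_{k-1}\to v_0$ with $k\ge 1$. Since $G$ is simple, there are no self-loops, so in fact $k\ge 2$. The plan is to read each arc of this cycle back as a vertex--edge pair in $\mathcal{V}$. This produces exactly the forbidden configuration in Definition~\ref{def:acyclic_graph}.

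The key step is unpacking the construction of $\Gamma(\mathcal{V})$ in Definition~\ref{def:Gamma_fullgraph}. An arc $v_i\to v_{i+1}$ exists only because some paired edge $e_i\in E^{\mathrm{pair}}$ satisfies $e_i=\{v_i,v_{i+1}\}$ and $(v_i,e_i)\in\mathcal{V}$. So for each $i$ (indices mod $k$) we get a pair $(v_i,e_i)\in\mathcal{V}$ with $e_i=\{v_i,v_{i+1}\}$, and $v_k=v_0$.

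Next we check that the resulting sequence $v_0\prec e_0\succ v_1\prec e_1\succ\cdots\succ v_k=v_0$ is a genuine nontrivial closed $\mathcal{V}$-path in the sense of Definition~\ref{def:Vpath_general}. This requires $v_{i+1}\neq v_i$, which holds because $e_i$ is an edge of a simple graph. We should also note that the $e_i$ are pairwise distinct. If $e_i=e_j$ with $i\neq j$, then the edge $e_i$ would be matched to both $v_i$ and $v_j$. This is impossible, since each cell appears in at most one pair and each paired edge has exactly one paired endpoint. Likewise the $v_i$ may be taken distinct by choosing a simple directed cycle; outdegree $\le 1$ (Lemma~\ref{lem:outdegree_bound}) ensures that any closed walk contains one.

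This closed $\mathcal{V}$-path contradicts the assumed acyclicity of $\mathcal{V}$, so $\Gamma(\mathcal{V})$ has no directed cycles. There is no real obstacle here. The only point needing care is nontriviality: we must rule out a degenerate cycle coming from a loop or from a repeated edge traversed back and forth. The length-$2$ case $v_0\to v_1\to v_0$ would require the same edge $\{v_0,v_1\}$ to be paired with both endpoints, which the matching property forbids. Simplicity of $G$ rules out loops.
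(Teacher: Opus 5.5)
Your proof is correct and is essentially the paper's argument: each arc $v_i\to v_{i+1}$ of a directed cycle in $\Gamma(\mathcal{V})$ is read back as a pair $(v_i,e_i)\in\mathcal{V}$ with $e_i=\{v_i,v_{i+1}\}$, which is exactly the configuration forbidden by Definition~\ref{def:acyclic_graph}. Your extra nontriviality checks (no loops, distinct edges, no $2$-cycles) are harmless but not needed, because that definition already excludes this configuration for every $k\ge 1$.
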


\begin{proof}
A directed cycle $v_0\to v_1\to \cdots \to v_{k-1}\to v_0$ in $\Gamma(\mathcal{V})$ corresponds exactly to a closed $\mathcal{V}$-path $(v_i,e_i)\in\mathcal{V}$ with $e_i=\{v_i,v_{i+1}\}$, contradicting Definition~\ref{def:acyclic_graph}.
\end{proof}

\begin{definition}[Anchor map on the full graph]
\label{def:anchor_fullgraph}
Assume $\mathcal{V}$ is acyclic. Since $\Gamma(\mathcal{V})$ is a finite directed graph with outdegree $\le 1$ and no directed cycles (Lemmas~\ref{lem:outdegree_bound}--\ref{lem:acyclic_implies_DAG}), every vertex reaches a terminal sink by iterating outgoing edges. Define the \emph{anchor map} $r:V\to V$ by letting $r(v)$ be the unique sink reached from $v$ in $\Gamma(\mathcal{V})$.
\end{definition}

\begin{remark}[When anchors are useful]
Anchors are only needed if one later uses a ``component representative'' tokenization (e.g., anchor tokens). Our main MoMoE split can be implemented using only $(E^{\mathrm{pair}},E^{\mathrm{crit}})$ and does not require anchors.
\end{remark}

\section{$H_0$-Faithfulness and Semantic Guarantees for the MoMoE Split}
\label{sec:H0_faithful_semantics}

\begin{definition}[$H_0$-faithfulness (component agreement)]
\label{def:H0_faithful}
We say $\mathcal{V}$ is \emph{$H_0$-faithful} on $G$ if the paired-edge subgraph
\[
P^- := (V,E^{\mathrm{pair}})
\]
induces the same connected-component partition on $V$ as the full graph $G=(V,E)$. Equivalently, for all $u,w\in V$,
\[
u \sim_{P^-} w \quad \Longleftrightarrow \quad u \sim_G w.
\]
\end{definition}

\paragraph{Scope of the assumption.}
$H_0$-faithfulness is a \emph{sufficient} condition to derive %precise semantic 
the following guarantees on paired edges, namely, ensuring that paired edges preserve the component structure of $G$. Our construction always outputs a valid discrete gradient vector field, but it does not explicitly enforce $H_0$-faithfulness in all cases. Enforcing $H_0$-faithfulness is straightforward in principle (e.g., by pairing along a spanning forest), but we do not incorporate such enforcement in the current methodology.

\begin{definition}[Forested pairing (optional strengthening)]
\label{def:forested_pairing}
We say $\mathcal{V}$ is \emph{forested} if the undirected graph $P^-=(V,E^{\mathrm{pair}})$ is cycle-free (i.e., a forest).
\end{definition}

\begin{remark}[Why we separate $H_0$-faithfulness from being a forest]
\label{rem:H0_vs_forest}
Acyclicity of $\mathcal{V}$ (no closed $\mathcal{V}$-paths) implies $\Gamma(\mathcal{V})$ has no \emph{directed} cycles (Lemma~\ref{lem:acyclic_implies_DAG}), but it does \emph{not} by itself rule out \emph{undirected} cycles in the paired-edge subgraph $P^-$. Thus, to conclude that $P^-$ is a spanning forest one must assume forestedness (Definition~\ref{def:forested_pairing}) in addition to $H_0$-faithfulness.
\end{remark}

\begin{proposition}[Critical edges carry cycle rank under component-faithful forested pairing]
\label{thm:cycle_rank_semantic}
Let $G=(V,E)$ be a finite graph, and let $\mathcal{V}$ be an acyclic discrete gradient vector field on $G$. Assume:
\begin{enumerate}
\item $\mathcal{V}$ is $H_0$-faithful (Definition~\ref{def:H0_faithful}), and
\item $\mathcal{V}$ is forested (Definition~\ref{def:forested_pairing}).
\end{enumerate}
Then:
\begin{enumerate}[label=(\alph*)]
\item $P^-=(V,E^{\mathrm{pair}})$ is a spanning forest of $G$ (one tree per connected component);
\item consequently,
\[
\beta_1(G)=|E^{\mathrm{crit}}|
\qquad \text{where } E^{\mathrm{crit}}:=E\setminus E^{\mathrm{pair}}.
\]
\end{enumerate}
\end{proposition}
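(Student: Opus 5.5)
The argument is elementary graph theory once the two hypotheses are unpacked, and I will do it in two steps matching (a) and (b).

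For (a), forestedness (Definition~\ref{def:forested_pairing}) says $P^-=(V,E^{\mathrm{pair}})$ has no undirected cycles. $H_0$-faithfulness (Definition~\ref{def:H0_faithful}) says the connected components of $P^-$ coincide, as vertex sets, with those of $G$. So for each component $C$ of $G$, the induced piece $P^-[C]$ is connected, acyclic, and contains every vertex of $C$; that is, it is a spanning tree of $C$. Taking the union over components gives a spanning forest of $G$ with exactly one tree per component. Acyclicity of $\mathcal{V}$ (through Lemma~\ref{lem:acyclic_implies_DAG}) is not needed here beyond guaranteeing that $\mathcal{V}$ is a legitimate gradient field. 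As Remark~\ref{rem:H0_vs_forest} notes, it alone would not exclude undirected cycles, which is exactly why forestedness is assumed separately.

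For (b), let $c=\beta_0(G)$ be the number of connected components. A spanning forest with $c$ trees on $N=|V|$ vertices has exactly $N-c$ edges, so $|E^{\mathrm{pair}}|=N-c$. Since $G$ is a $1$-dimensional complex, its Euler characteristic gives $\beta_0-\beta_1=N-M$, so $\beta_1(G)=M-N+c$. Then
\[
|E^{\mathrm{crit}}| = M-|E^{\mathrm{pair}}| = M-N+c = \beta_1(G).
\]

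One subtlety needs checking: the identification $E^{\mathrm{crit}}=E\setminus E^{\mathrm{pair}}$ must be consistent with criticality in the sense of $\mathcal{V}$. Here $G$ is treated as a $1$-dimensional CW complex, so edges have no cofaces. An edge is therefore unpaired exactly when no vertex is matched to it, i.e.\ exactly when it lies outside $E^{\mathrm{pair}}$. With that in place there is no real obstacle. The only step requiring care is keeping the roles of the two hypotheses distinct: $H_0$-faithfulness gives the spanning property (the correct number of components), and forestedness gives the edge count $N-c$.
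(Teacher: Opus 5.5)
Your proposal is correct and follows the paper's proof essentially step for step. $H_0$-faithfulness and forestedness make $P^-$ a spanning forest with $c$ trees, so $|E^{\mathrm{pair}}|=|V|-c$, and the cycle-rank identity $\beta_1(G)=|E|-|V|+c$ then gives $\beta_1(G)=|E^{\mathrm{crit}}|$; your extra check that unpaired edges are exactly $E\setminus E^{\mathrm{pair}}$ is harmless but unnecessary, since the statement defines $E^{\mathrm{crit}}$ that way.
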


\begin{proof}
Let $c$ be the number of connected components of $G$. By $H_0$-faithfulness, $P^-$ has the same component partition as $G$, hence $P^-$ also has $c$ components. By forestedness, each component of $P^-$ is a tree; therefore $|E^{\mathrm{pair}}|=|V|-c$. Now use the standard cycle-rank identity for finite graphs:
\[
\beta_1(G)=|E|-|V|+c.
\]
Since $|E|=|E^{\mathrm{pair}}|+|E^{\mathrm{crit}}|$ and $|E^{\mathrm{pair}}|=|V|-c$, we obtain
\[
\beta_1(G)=(|E^{\mathrm{pair}}|+|E^{\mathrm{crit}}|)-|V|+c  \]\[ = (|V|-c+|E^{\mathrm{crit}}|)-|V|+c =|E^{\mathrm{crit}}|.
\]
\end{proof}

\begin{remark}[Interpretation for MoMoE routing]
Under the optional strengthened pairing regime of Proposition ~\ref{thm:cycle_rank_semantic}, $A_{\alpha'}$ (paired edges) is a cycle-free connectivity backbone within each component, while $A_{\alpha}$ (critical edges) accounts exactly for the cycle rank ($H_1$ content) of the graph. This supplies a clean semantics for the MoMoE split when one enforces forestedness.
\end{remark}

\section{Proofs and Additional Details on Stability of Morse-Skeleton Guidance}
\label{sec:stability_skeleton_guidance}

\paragraph{Core design principle.}
MG-Diff conditions the reverse diffusion dynamics on a \emph{Morse skeleton} extracted from $\mathcal{V}$, rather than on all edges/cells of the input. Accordingly, our stability guarantees are statements about invariance/robustness of the \emph{conditioning mechanism} (guidance), not global stability of discrete Morse theory itself.

\subsection{Abstract Setup}

Let $\mathbb{S}$ be the space of \emph{states} read by the extractor (graph structure plus any scores/flags it consumes), and let $s\in\mathbb{S}$ denote a state. Let us assume:
\begin{itemize}
\item a deterministic skeleton extractor $\mathrm{Skel}:\mathbb{S}\to Cl(G)$ (all tie-breaking fixed),
\item a deterministic guidance map $\Psi:Cl(G)\times\mathcal{X}\to\mathcal{H}$,
\item a fixed historical input $X^h\in\mathcal{X}$, and define $\bar H(s):=\Psi(\mathrm{Skel}(s),X^h)$,
\item a denoiser $\epsilon_\theta(X_k,\bar H,k)$ and reverse-step kernel $p_\theta(\cdot\mid X_k,\bar H,k)$ with covariance independent of $s$.
\end{itemize}
\begin{remark}
    We assume the reverse kernel $p_\theta(\cdot \mid X_k, \bar{H}, k)$ is Gaussian with covariance $\Sigma_k$ that depends only on the timestep $k$, not on the state $X_k$ or guidance $\bar{H}$. This is standard for DDPM-style diffusion models where $\Sigma_k$ follows a fixed schedule~\cite{ho2020denoising}, but we state it explicitly.

\end{remark}
\begin{definition}[Skeleton-preserving perturbation]
\label{def:skeleton_preserving}
A perturbation $s\mapsto s'$ is \emph{skeleton-preserving} if $\mathrm{Skel}(s')=\mathrm{Skel}(s)$.
\end{definition}

\begin{lemma}[Exact invariance under skeleton-preserving perturbations]
\label{lemma:exact_invariance_guidance}
Fix $X^h \in \mathcal{X}$. If $s' \in \mathbb{S}$ is skeleton-preserving relative to $s \in \mathbb{S}$, then $\bar{H}(s') = \bar{H}(s)$. Consequently, for every timestep $k$ and every noised state $X_k$,
\[
    p_\theta(\cdot \mid X_k, \bar{H}(s'), k) = p_\theta(\cdot \mid X_k, \bar{H}(s), k).
\]
\end{lemma}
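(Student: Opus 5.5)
The claim is essentially definitional: the guidance $\bar H$ is a function of the state $s$ only through $\mathrm{Skel}(s)$. The plan is therefore to unfold the definitions in two stages. First obtain equality of the guidance signals, then push that equality through the reverse-step kernel.

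\textbf{Step 1 (guidance invariance).} By Definition~\ref{def:skeleton_preserving}, $\mathrm{Skel}(s')=\mathrm{Skel}(s)$. The guidance map $\Psi$ is deterministic, with all tie-breaking fixed as in the abstract setup, and $X^h$ is held fixed. Hence
\[
\bar H(s')=\Psi(\mathrm{Skel}(s'),X^h)=\Psi(\mathrm{Skel}(s),X^h)=\bar H(s).
\]
One point needs explicit care here. The randomized tie-breakers $\varepsilon_v$ and $\eta(e)$ used in the construction of $f$ must be regarded as part of the state $s$, or else fixed once. Without this, $\mathrm{Skel}$ and $\Psi$ would not be genuine functions, and the argument would fail.

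\textbf{Step 2 (kernel invariance).} Fix any timestep $k$ and any noised state $X_k$. The denoiser $\epsilon_\theta(X_k,\bar H,k)$ is a deterministic function of its three arguments. Step~1 shows these arguments coincide for $s$ and $s'$, so
\[
\epsilon_\theta(X_k,\bar H(s'),k)=\epsilon_\theta(X_k,\bar H(s),k).
\]
Under the DDPM-style parametrization in the Remark, the reverse kernel is Gaussian. Its mean is a fixed function of $(X_k,\epsilon_\theta,k)$, through the schedule coefficients $\alpha_k$ and $\bar\alpha_k$, and its covariance $\Sigma_k$ depends only on $k$. The means and covariances therefore agree, and the two Gaussian kernels are identical. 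This gives $p_\theta(\cdot\mid X_k,\bar H(s'),k)=p_\theta(\cdot\mid X_k,\bar H(s),k)$.

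In fact the Gaussian form is not really needed. The kernel is written as a function of $(X_k,\bar H,k)$ alone, so equality of the conditioning arguments already gives equality of the kernels. The covariance assumption simply rules out any hidden dependence of the kernel on $s$ outside $\bar H$.

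\textbf{Main obstacle.} There is no analytic difficulty. The only real issue is well-posedness: the kernel must depend on $s$ solely through $\bar H$, and the stochastic tie-breaks must be fixed. I would state both explicitly before running the two-line computation above.

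As a corollary, iterating over $k$ in~\eqref{eq:reverse_process} shows that the full reverse trajectory law is unchanged. Combined with Theorem~\ref{lem:margin_preserves_skel}, this invariance holds for every perturbation with $\|\varphi(s')-\varphi(s)\|_\infty<m(s)/L_\Delta$.
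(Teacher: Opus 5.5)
Your proof is correct and follows the paper's argument. The paper likewise passes $\mathrm{Skel}(s')=\mathrm{Skel}(s)$ through the deterministic map $\Psi(\cdot,X^h)$ to get $\bar H(s')=\bar H(s)$, then concludes because $p_\theta(\cdot\mid X_k,\bar H,k)$ is fully determined by the triple $(X_k,\bar H,k)$; your detour through the Gaussian mean and covariance is unnecessary, as you note, and your caveat about fixing the random tie-breakers matches the paper's standing assumption that all tie-breaking is fixed.
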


\begin{proof}
We establish the result via the composition chain
\[
    s' \;\xrightarrow{\;\mathrm{Skel}\;}\; \mathrm{Skel}(s') = \mathrm{Skel}(s) \;\xrightarrow{\;\Psi(\cdot, X^h)\;}\; \bar{H}(s') \] \[= \bar{H}(s) \;\xrightarrow{\;p_\theta\;}\; \text{identical distributions}.
\]

By Definition~\ref{def:skeleton_preserving}, the perturbation $s \mapsto s'$ is skeleton-preserving, so $\mathrm{Skel}(s') = \mathrm{Skel}(s)$. Since $\Psi$ is deterministic and the historical input $X^h$ is fixed, it follows that
\[
    \bar{H}(s') = \Psi(\mathrm{Skel}(s'), X^h) = \Psi(\mathrm{Skel}(s), X^h) = \bar{H}(s).
\]
The reverse-step kernel $p_\theta(\cdot \mid X_k, \bar{H}, k)$ is fully determined by the triple $(X_k, \bar{H}, k)$. Since $X_k$ and $k$ are identical in both cases and $\bar{H}(s') = \bar{H}(s)$, the two conditional distributions coincide.
\end{proof}

%-----------------------------
\subsubsection{Interpretation and implications}

\begin{remark}[The skeleton as a sufficient characteristic]
\label{rem:sufficient_statistic}
Lemma ~\ref{lemma:exact_invariance_guidance} establishes that the Morse skeleton $\mathrm{Skel}(s)$ is a \emph{sufficient characteristic} for the guidance mechanism: any information in the state $s$ that does not affect the skeleton is invisible to the reverse diffusion process. Formally, the guidance  $\bar{H}: \mathbb{S} \to \mathcal{H}$ factors as $\bar{H} = \Psi(\cdot, X^h) \circ \mathrm{Skel}$.
and consequently $\bar{H}(s) = \bar{H}(s')$ whenever $\mathrm{Skel}(s) = \mathrm{Skel}(s')$.
\end{remark}

\begin{remark}[Practical consequences]
\label{rem:practical_consequences}
We see that under these assumptions that perturbations to edge weights, node features, or auxiliary scores that do not alter the extracted skeleton leave the model's generative distribution unchanged, providing a form of noise robustness. Further, the skeleton acts as an information bottleneck by discarding structural details irrelevant to generation while preserving the topological features that guide the diffusion process. 
\end{remark}

\begin{remark}[Scope and limitations]
\label{rem:scope_limitations}
Theorem~\ref{lemma:exact_invariance_guidance} is, by design, a conditional statement: \emph{if} a perturbation preserves the skeleton, \emph{then} the generative distribution is invariant. The theorem does not address \emph{when} perturbations are skeleton-preserving, nor \emph{how much} the distribution changes when the skeleton does change. In other words, Theorem \ref{lemma:exact_invariance_guidance} does not characterize \emph{which} perturbations are skeleton-preserving. We now develop a quantitative sufficient condition: if a perturbation is small relative to a computable \emph{margin}, the skeleton is guaranteed unchanged.
\end{remark}

%========================================================
% Margin Stability: Sufficient Conditions for Skeleton Preservation
%========================================================

\subsection{A sufficient condition for skeleton preservation: margin stability}
\label{sec:margin_stability}

Our condition follows from the observation that any deterministic extractor (sequential matching, sorting, thresholding, or similar, etc.) ultimately reduces to a finite collection of strict comparisons, and if none of these comparisons flip, the output cannot change.

%-----------------------------
\subsubsection{Decision representation}
\label{sec:decision_representation}

We formalize the structure of comparison-based extractors through the following assumptions:

\begin{assumption}[Decision representation]
\label{ass:decision_representation}
There exist:
\begin{itemize}
    \item a \emph{feature map} $\varphi: \mathbb{S} \to \mathbb{R}^d$ that extracts all scalar quantities examined by the extractor (scores, weights, tie-breaking indices, etc.),
    \item a finite \emph{decision index set} $\mathcal{J}$ enumerating the comparisons performed by the extractor,
    \item for each $j \in \mathcal{J}$, functions $u_j, v_j: \mathbb{R}^d \to \mathbb{R}$ representing the left and right sides of comparison $j$,
\end{itemize}
such that the extractor output $\mathrm{Skel}(s)$ is fully determined by the sign pattern of the \emph{decision gaps}
\[
    \Delta_j(s) := u_j(\varphi(s)) - v_j(\varphi(s)), \qquad j \in \mathcal{J},
\]
through a deterministic map $F: \{-1, +1\}^{\mathcal{J}} \to Cl(G)$, i.e.,
\[
    \mathrm{Skel}(s) = F\bigl((\mathrm{sign}(\Delta_j(s)))_{j \in \mathcal{J}}\bigr).
\]
We restrict attention to states $s \in \mathcal{S}$ satisfying $\Delta_j(s) \neq 0$ for all $j \in \mathcal{J}$ (the no-ties condition).
\end{assumption}

\begin{remark}[Interpretation of the decision representation]
\label{rem:decision_interpretation}
The feature map $\varphi$ collects all numerical inputs to the extractor into a single vector while the decision index set $\mathcal{J}$ enumerates every comparison the extractor performs. For a sequential pairing algorithm that processes edges in a fixed order and checks Flag variables, this comprises $O(m)$ explicit comparisons: for each edge, one comparison determines endpoint ordering (which vertex has larger score), and the Flag lookup determines the branching outcome. The sequential dependence through Flag variables means that later decisions depend implicitly on earlier ones, but the total number of comparisons remains finite. The functions $u_j$ and $v_j$ extract the quantities being compared: if decision $j$ asks whether $f(v_1) > f(v_2)$, then $u_j(\varphi(s)) = f(v_1)$ and $v_j(\varphi(s)) = f(v_2)$ under state $s$.

The deterministic map $F$ encodes the extractor's logic: once all comparison outcomes (signs) are known, the skeleton is determined. This holds for any algorithm without internal randomness.
\end{remark}

\textbf{Why Is This Construction Valid?}
As shown by~\cite{kannan2019persistent}, the procedure yields a valid discrete Morse function in the sense of Forman's discrete Morse theory~\cite{forman1998morse}. Each vertex participates in at most one gradient pairing due to the use of
flag variables, and each edge is assigned a function value that ensures pairing with at most one endpoint.
By construction, no simplex admits more than one coface with lower or equal
function value or more than one face with higher or equal function value,
and the induced matching on the Hasse diagram is acyclic.

\begin{remark}[Scope of the decision representation]
\label{rem:decision_scope}
Assumption~\ref{ass:decision_representation} is satisfied by extractors based on sequential matching, sorting, thresholding, and argmax/argmin operations, since these reduce entirely to pairwise comparisons. The assumption fails for extractors involving continuous optimization (e.g., gradient descent to a local minimum) or soft differentiable relaxations where the output varies smoothly with inputs. 

For the discrete Morse extractor in MG-Diff, vertex scores are assigned based on degree via $g(v) = \deg_{\max} - \deg(v) + \epsilon(v)$ (where $\epsilon(v)$ is a small tie-breaking perturbation) and edges are processed sequentially with a Flag-based construction that determines pairing. Each edge makes a binary decision based on whether its higher-score endpoint is already paired, yielding a finite sequence of comparisons. Assumption~\ref{ass:decision_representation} therefore holds provided tie-breaking is deterministic (e.g., lexicographic ordering on vertex indices). If randomized tie-breakers are used, the analysis applies conditionally on a fixed realization of the random seed.
\end{remark}

\begin{remark}[The no-ties condition]
\label{rem:no_ties}
The restriction $\Delta_j(s) \neq 0$ for all $j$ excludes states where exact ties occur. This is a genericity assumption: the set of tie-inducing states is small in any reasonable parameterization. In practice, when vertex scores are degree-based, ties occur precisely when two adjacent vertices have identical degree. Such ties can be broken deterministically, which ensures strict inequalities without affecting the topological content of the construction. We state this condition explicitly to avoid degenerate edge cases in the analysis.
\end{remark}

%-----------------------------
\subsubsection{Decision margin and Lipschitz regularity}

We now introduce the central quantity governing stability.

\begin{definition}[Decision margin]
\label{def:decision_margin}
For a state $s \in \mathbb{S}$ satisfying the no-ties condition, the \emph{decision margin} is
\[
    m(s) := \min_{j \in \mathcal{J}} |\Delta_j(s)|.
\]
\end{definition}
Intuitively, we can think of the decision margin as the smallest gap across all comparisons in the extractor. If one wants to change the skeleton, at least one comparison must flip, which requires pushing some $\Delta_j$ across zero. The margin $m(s)$ quantifies the minimum perturbation required to reach the nearest decision boundary, so a large margin indicates that all decisions are comfortable, while a small margin indicates that at least one comparison is nearly tied and could flip under small perturbations.

Crucially (if one were given access to the extractor's internal comparisons) one can evaluate each $\Delta_j(s)$ and take the minimum, obtaining a concrete numerical certificate of local stability. For the MG-Diff extractor, this reduces to computing the minimum score gap $|g(\gamma_0(e)) - g(\gamma_1(e))|$ across all edges $e \in E$, where $\gamma_0(e)$ and $\gamma_1(e)$ denote the higher- and lower-scoring endpoints of $e$.

\begin{assumption}[Lipschitz decision gaps]
\label{ass:lipschitz_gaps}
We assume there exists a constant $L_\Delta < \infty$ such that for all $j \in \mathcal{J}$ and all $s, s' \in \mathbb{S}$ if $||\cdot||$ is $l_\infty$,
\[
    |\Delta_j(s') - \Delta_j(s)| \leq L_\Delta \, \|\varphi(s') - \varphi(s)\|.
\]
\end{assumption}

\begin{remark}[Verification of Lipschitz regularity]
\label{rem:lipschitz_verification}
in Assumption ~\ref{ass:lipschitz_gaps} we assert that decision gaps cannot change faster than $L_\Delta$ times the feature displacement, which holds automatically when $u_j$ and $v_j$ are Lipschitz functions of the feature vector.

\paragraph{Why Assumption~\ref{ass:lipschitz_gaps} is reasonable.}
Our extractor is implemented by a finite sequence of comparisons between scalar quantities derived from the input
(e.g., vertex/edge scores, weights, and tie-breaking indices). Each decision gap $\Delta_j$ is a difference of two
such scalars. In our setting these scalars are constructed from bounded-degree algebraic operations
(addition/subtraction, $\min/\max$ over finite sets, and fixed-order sorting/tie-breaking after stable perturbation),
all of which are Lipschitz in $\ell_\infty$ on bounded domains. Consequently each $\Delta_j$ is Lipschitz with some
finite constant, and we may take $L_\Delta:=\max_{j\in\mathcal{J}} L_j$.
When the pipeline includes non-smooth steps (e.g., $\min$), Lipschitz continuity still holds (though not differentiability),
which is exactly what the margin argument requires.

For the MG-Diff extractor in particular, the relevant comparisons are vertex score differences. If we recall that $g(v) = \deg_{\max} - \deg(v) + \varepsilon_v$, the decision gap for an edge $e = \{v_1, v_2\}$ is
\[
    \Delta_j = g(v_1) - g(v_2) = \bigl(\deg(v_2) - \deg(v_1)\bigr) + \bigl(\varepsilon_{v_1} - \varepsilon_{v_2}\bigr),
\]
where the $\deg_{\max}$ terms cancel. So under a perturbation $s \mapsto s'$ that modifies tie-breaker values (degrees are fixed by graph structure), we have
\[
    \Delta_j(s') - \Delta_j(s) = \bigl(\varepsilon'_{v_1} - \varepsilon_{v_1}\bigr) - \bigl(\varepsilon'_{v_2} - \varepsilon_{v_2}\bigr).
\]
Followed by the triangle inequality,
\[
    |\Delta_j(s') - \Delta_j(s)| \leq |\varepsilon'_{v_1} - \varepsilon_{v_1}| + |\varepsilon'_{v_2} - \varepsilon_{v_2}| \leq 2\|\varphi(s') - \] \[ \varphi(s)\|_\infty \leq 2\|\varphi(s') - \varphi(s)\|,
\]
we find that $L_\Delta \leq 2$. This small Lipschitz constant therefore reflects the linear structure of degree-based scoring with additive tie-breakers.
\end{remark}

\subsubsection{Proof of Theorem~\ref{lem:margin_preserves_skel}: the margin stability guarantee}
\label{app:proof_marstab}

We are now ready to proof Theorem~\ref{lem:margin_preserves_skel}, noting that the assumptions on decision representations and Lipschitz conditions on the decision gaps, stated in Theorem~\ref{lem:margin_preserves_skel}, are restated as~(\ref{ass:decision_representation}) and~(\ref{ass:lipschitz_gaps}), respectively.
%For simplicity we restated it in terms of  Assumptions~\ref{ass:decision_representation} and~\ref{ass:lipschitz_gaps}.
%build toward, state, and prove Lemma \ref{margin_lemma}.

%\begin{lemma*}[Margin implies skeleton preservation]
%\label{lem:margin_preserves_skel}
%Suppose Assumptions~\ref{ass:decision_representation} and~\ref{ass:lipschitz_gaps} hold. Fix a state $s \in \mathbb{S}$ with $m(s) > 0$. If $s' \in \mathbb{S}$ satisfies
%\[    \|\varphi(s') - \varphi(s)\| <\frac{m(s)}{L_\Delta},
%\]
%then $\mathrm{Skel}(s') = \mathrm{Skel}(s)$.
%\end{lemma*}

\begin{proof}[Proof of Theorem~\ref{lem:margin_preserves_skel}]
We show that every decision $j \in \mathcal{J}$ has the same sign at $s'$ as at $s$, from which the conclusion follows by determinism of $F$.

Fix an arbitrary $j \in \mathcal{J}$. In view of
the assumptions on decision representations~(\ref{ass:decision_representation}) and Lipschitz conditions~(\ref{ass:lipschitz_gaps}), we obtain
\[
    |\Delta_j(s') - \Delta_j(s)| \leq L_\Delta \, \|\varphi(s') - \varphi(s)\| < L_\Delta \cdot \frac{m(s)}{L_\Delta} = m(s).
\]
By definition of the decision margin, $m(s) = \min_{j' \in \mathcal{J}} |\Delta_{j'}(s)| \leq |\Delta_j(s)|$, so
\[
    |\Delta_j(s') - \Delta_j(s)| < m(s) \leq |\Delta_j(s)|.
\]
This inequality implies that $\Delta_j(s')$ lies strictly within distance $|\Delta_j(s)|$ of $\Delta_j(s)$. Since $\Delta_j(s) \neq 0$ by the no-ties condition, the open interval $(\Delta_j(s) - |\Delta_j(s)|, \Delta_j(s) + |\Delta_j(s)|)$ does not contain zero: if $\Delta_j(s) > 0$, the interval is $(0, 2\Delta_j(s))$, and if $\Delta_j(s) < 0$, the interval is $(2\Delta_j(s), 0)$. In either case, $\Delta_j(s')$ lies in this interval and therefore has the same sign as $\Delta_j(s)$.

Since $j$ was arbitrary, the sign pattern $(\mathrm{sign}(\Delta_j(s')))_{j \in \mathcal{J}}$ equals $(\mathrm{sign}(\Delta_j(s)))_{j \in \mathcal{J}}$. By Assumption~\ref{ass:decision_representation}, $\mathrm{Skel}(s') = F((\mathrm{sign}(\Delta_j(s')))_j) = F((\mathrm{sign}(\Delta_j(s)))_j) = \mathrm{Skel}(s)$.
\end{proof}

%-----------------------------
\subsubsection{Interpretation and practical implications}

\begin{remark}[Stability radius]
\label{rem:stability_radius}
Theorem~\ref{lem:margin_preserves_skel} defines a \emph{stability radius} around each state:
\[
    r(s) := \frac{m(s)}{L_\Delta}.
\]
Within the open ball $\{s' \in \mathbb{S} : \|\varphi(s') - \varphi(s)\| < r(s)\}$, the skeleton is guaranteed unchanged, and so the generative distribution should remain invariant to local perturbation in this manner.

For the MG-Diff extractor with $L_\Delta \leq 2$, the stability radius is at least $m(s)/2$. If the minimum score gap between adjacent vertices is $\delta > 0$, then perturbations to vertex scores of magnitude less than $\delta/2$ are guaranteed to preserve the skeleton.
\end{remark}

\begin{remark}[Dependence on the margin]
\label{rem:margin_dependence}
The stability guarantee is only as strong as the margin $m(s)$. If the extractor frequently produces near-ties---for instance, when many vertices share the same degree---the margin is small, the stability radius $r(s)$ is correspondingly small, and the guarantee becomes weak. Conversely, if all decisions are made with comfortable gaps, the margin is large and meaningful robustness is assured.

This is not a flaw in the theory but rather an accurate reflection of reality: an extractor operating near decision boundaries \emph{is} sensitive to small perturbations, and the analysis correctly captures this. For degree-based scoring, the margin depends on the degree distribution of the input graph. Graphs with heterogeneous degree sequences yield larger margins than regular or near-regular graphs where many vertices have identical degree.
\end{remark}

\begin{remark}[Role of the Lipschitz constant]
\label{rem:lipschitz_role}
The Lipschitz constant $L_\Delta$ appears in the denominator of the stability radius, so a large $L_\Delta$ shrinks the guaranteed stability region. For the MG-Diff extractor based on degree scoring, $L_\Delta \leq 2$ is a small constant arising from the linear structure of coordinate differences. More complex scoring functions (e.g., involving products or nonlinear transformations of features) would yield larger Lipschitz constants and correspondingly weaker stability guarantees.
\end{remark}

\begin{remark}[Computability and certification]
\label{rem:computability}
Unlike many stability results in machine learning that provide only existential guarantees, the margin stability framework yields a \emph{computable certificate}. Given a state $s$ and access to the extractor's decision structure, one can enumerate the decision gaps $\{\Delta_j(s)\}_{j \in \mathcal{J}}$, compute $m(s) = \min_j |\Delta_j(s)|$, and obtain an explicit stability radius $r(s) = m(s)/L_\Delta$. For MG-Diff, this amounts to computing the minimum absolute difference $|g(u) - g(v)|$ over all edges $\{u,v\} \in E$, a linear-time operation. This enables post-hoc verification of robustness for individual inputs, a property that may be valuable in safety-critical applications.
\end{remark}

\section{Worked Examples}
\label{sec:worked_ex}

\subsection{Worked Example: Discrete Morse Flow on a Two-Cycle Graph}
\label{sec:worked_example}

We illustrate the construction of a discrete Morse function as follows.
Consider the graph $G = (V, E)$ shown in Fig.~\ref{fig:example_graph}, consisting of two 4-cycles 
connected by a bridge edge, with two pendant vertices.

\begin{figure}[t]
\centering
\begin{tikzpicture}[
    vertex/.style={circle, fill=black, minimum size=4pt, inner sep=0pt},
    edge/.style={thick},
    every label/.style={font=\scriptsize}
]

% Left cycle
\node[vertex] (0) at (0, 1) {};
\node[vertex] (1) at (0, 0) {};
\node[vertex] (2) at (1, 0) {};
\node[vertex] (3) at (1, 1) {};

% Right cycle
\node[vertex] (4) at (2.5, 0) {};
\node[vertex] (5) at (3.5, 0) {};
\node[vertex] (6) at (3.5, 1) {};
\node[vertex] (7) at (2.5, 1) {};

% Pendant vertices
\node[vertex] (8) at (-0.7, -0.5) {};
\node[vertex] (9) at (4.2, -0.5) {};

% Labels
\node[above, font=\scriptsize] at (0) {$v_0$};
\node[below, font=\scriptsize] at (1) {$v_1$};
\node[below, font=\scriptsize] at (2) {$v_2$};
\node[above, font=\scriptsize] at (3) {$v_3$};
\node[below, font=\scriptsize] at (4) {$v_4$};
\node[below, font=\scriptsize] at (5) {$v_5$};
\node[above, font=\scriptsize] at (6) {$v_6$};
\node[above, font=\scriptsize] at (7) {$v_7$};
\node[below, font=\scriptsize] at (8) {$v_8$};
\node[below, font=\scriptsize] at (9) {$v_9$};

% Left cycle edges
\draw[edge] (0) -- (1);
\draw[edge] (1) -- (2);
\draw[edge] (2) -- (3);
\draw[edge] (3) -- (0);

% Bridge
\draw[edge] (2) -- (4);

% Right cycle edges
\draw[edge] (4) -- (5);
\draw[edge] (5) -- (6);
\draw[edge] (6) -- (7);
\draw[edge] (7) -- (4);

% Pendant edges
\draw[edge] (1) -- (8);
\draw[edge] (5) -- (9);

\end{tikzpicture}

\vspace{1em}

\begin{tikzpicture}[
    vertex/.style={circle, fill=black, minimum size=4pt, inner sep=0pt},
    critical vertex/.style={circle, fill=red, minimum size=4pt, inner sep=0pt},
    edge/.style={thick},
    critical edge/.style={thick, red}
]

% Left cycle
\node[vertex] (0) at (0, 1) {};
\node[vertex] (1) at (0, 0) {};
\node[vertex] (2) at (1, 0) {};
\node[vertex] (3) at (1, 1) {};

% Right cycle
\node[vertex] (4) at (2.5, 0) {};
\node[critical vertex] (5) at (3.5, 0) {};
\node[vertex] (6) at (3.5, 1) {};
\node[vertex] (7) at (2.5, 1) {};

% Pendant vertices
\node[vertex] (8) at (-0.7, -0.5) {};
\node[vertex] (9) at (4.2, -0.5) {};

% Labels
\node[above, font=\scriptsize] at (0) {$v_0$};
\node[below, font=\scriptsize] at (1) {$v_1$};
\node[below, font=\scriptsize] at (2) {$v_2$};
\node[above, font=\scriptsize] at (3) {$v_3$};
\node[below, font=\scriptsize] at (4) {$v_4$};
\node[below, font=\scriptsize] at (5) {$v_5$};
\node[above, font=\scriptsize] at (6) {$v_6$};
\node[above, font=\scriptsize] at (7) {$v_7$};
\node[below, font=\scriptsize] at (8) {$v_8$};
\node[below, font=\scriptsize] at (9) {$v_9$};

% Left cycle edges
\draw[edge] (0) -- (1);
\draw[edge] (1) -- (2);
\draw[critical edge] (2) -- (3);
\draw[edge] (3) -- (0);

% Bridge
\draw[edge] (2) -- (4);

% Right cycle edges
\draw[edge] (4) -- (5);
\draw[edge] (5) -- (6);
\draw[critical edge] (6) -- (7);
\draw[edge] (7) -- (4);

% Pendant edges
\draw[edge] (1) -- (8);
\draw[edge] (5) -- (9);

\end{tikzpicture}

\vspace{1em}

\begin{tikzpicture}[
    vertex/.style={circle, fill=black, minimum size=4pt, inner sep=0pt},
    critical vertex/.style={circle, fill=red, minimum size=4pt, inner sep=0pt},
    path vertex/.style={circle, fill=green!70!black, minimum size=4pt, inner sep=0pt},
    edge/.style={thick},
    critical edge/.style={thick, red},
    path edge/.style={thick, green!70!black}
]

% Left cycle
\node[vertex] (0) at (0, 1) {};
\node[vertex] (1) at (0, 0) {};
\node[path vertex] (2) at (1, 0) {};
\node[vertex] (3) at (1, 1) {};

% Right cycle
\node[path vertex] (4) at (2.5, 0) {};
\node[critical vertex] (5) at (3.5, 0) {};
\node[path vertex] (6) at (3.5, 1) {};
\node[vertex] (7) at (2.5, 1) {};

% Pendant vertices
\node[vertex] (8) at (-0.7, -0.5) {};
\node[vertex] (9) at (4.2, -0.5) {};

% Labels
\node[above, font=\scriptsize] at (0) {$v_0$};
\node[below, font=\scriptsize] at (1) {$v_1$};
\node[below, font=\scriptsize] at (2) {$v_2$};
\node[above, font=\scriptsize] at (3) {$v_3$};
\node[below, font=\scriptsize] at (4) {$v_4$};
\node[below, font=\scriptsize] at (5) {$v_5$};
\node[above, font=\scriptsize] at (6) {$v_6$};
\node[above, font=\scriptsize] at (7) {$v_7$};
\node[below, font=\scriptsize] at (8) {$v_8$};
\node[below, font=\scriptsize] at (9) {$v_9$};

% Left cycle edges
\draw[edge] (0) -- (1);
\draw[edge] (1) -- (2);
\draw[critical edge] (2) -- (3);
\draw[edge] (3) -- (0);

% Bridge - part of descending path
\draw[path edge] (2) -- (4);

% Right cycle edges
\draw[path edge] (4) -- (5);
\draw[path edge] (5) -- (6);
\draw[critical edge] (6) -- (7);
\draw[edge] (7) -- (4);

% Pendant edges
\draw[edge] (1) -- (8);
\draw[edge] (5) -- (9);

\end{tikzpicture}
\caption{Example graph with two 4-cycles connected by a bridge. Top: the input graph. Middle: critical cells in red. Bottom: descending paths in green, flowing from critical edges to the critical vertex.}
\label{fig:example_graph}
\end{figure}

\paragraph{Graph structure.}
The vertex set is $V = \{0, 1, \ldots, 9\}$ with edges
\begin{align*}
E = \{&(0,1), (1,2), (2,3), (3,0)\} && \text{(left cycle)} \\
\cup\; \{&(2,4)\} && \text{(bridge)} \\
\cup\; \{&(4,5), (5,6), (6,7), (7,4)\} && \text{(right cycle)} \\
\cup\; \{&(1,8), (5,9)\}. && \text{(pendant edges)}
\end{align*}

\paragraph{Filtration values.}
We assign filtration values $f: V \cup E \to \mathbb{R}$ from the function $f(v) = deg_{max} -deg(v) + \epsilon(v)$ as discussed in  \ref{sec:method}.
For vertices:
\begin{center}
\begin{tabular}{@{}lccccc@{}}
\toprule
$v$ & 0 & 1 & 2 & 3 & 4 \\
$f(v)$ & 1.187 & 0.475 & 0.366 & 1.299 & 0.078 \\
\midrule
$v$ & 5 & 6 & 7 & 8 & 9 \\
$f(v)$ & 0.078 & 1.029 & 1.433 & 2.301 & 2.354 \\
\bottomrule
\end{tabular}
\end{center}
For edges, recall that $f(e)$ is set to the midpoint of endpoint values: $f(\{u,w\}) = \tfrac{1}{2}(f(u) + f(w))$.
This scoring rule guarantees face-monotonicity and enables a canonical pairing construction.

\paragraph{Discrete gradient vector field.}
We construct a filtration-compatible discrete gradient vector field $\mathcal{V}$ using the \emph{upper-link pairing rule}:
for each edge $e = \{u, w\}$, if vertex $v \in e$ satisfies $f(v) = \max\{f(u), f(w)\}$ and $v$ is not yet paired, 
we add $(v, e)$ to $\mathcal{V}$.
Processing edges in increasing $f$-order yields the pairings shown in Table~\ref{tab:pairings}.

\begin{table}[H]
\centering
\caption{Vertex--edge pairings in the discrete gradient vector field. Each paired edge $e$ is matched with 
its higher-valued endpoint. Unpaired cells are critical.}
\label{tab:pairings}
\vspace{0.5em}
\begin{tabular}{@{}lcc@{}}
\toprule
Edge $e$ & $f(e)$ & Pairing $(v, e) \in \mathcal{V}$ \\
\midrule
$(4,5)$ & 0.078 & $(4, (4,5))$ \\
$(2,4)$ & 0.222 & $(2, (2,4))$ \\
$(1,2)$ & 0.421 & $(1, (1,2))$ \\
$(5,6)$ & 0.554 & $(6, (5,6))$ \\
$(4,7)$ & 0.756 & $(7, (4,7))$ \\
$(0,1)$ & 0.831 & $(0, (0,1))$ \\
$(5,9)$ & 1.216 & $(9, (5,9))$ \\
$(0,3)$ & 1.243 & $(3, (0,3))$ \\
$(1,8)$ & 1.388 & $(8, (1,8))$ \\
\midrule
$(2,3)$ & 1.310 & \textit{unpaired (critical)} \\
$(6,7)$ & 1.918 & \textit{unpaired (critical)} \\
\bottomrule
\end{tabular}
\end{table}

\paragraph{Critical cells.}
Cells not appearing in any pair are \emph{critical}:
\[
\mathcal{C}_0 = \{5\}, \qquad \mathcal{C}_1 = \{(2,3), (6,7)\}.
\]
The unique critical vertex $5$ has the global minimum filtration value among vertices, 
while the two critical edges $(2,3)$ and $(6,7)$ are precisely the cycle-closing edges 
that cannot be paired without violating acyclicity.

\paragraph{Descending paths and anchor computation.}
For each critical edge $e \in \mathcal{C}_1$, we compute descending paths by following the 
gradient flow toward lower filtration values.
Starting from $e = \{u, w\}$, we alternate between:
\begin{enumerate}[label=(\roman*), nosep]
    \item moving from edge to vertex along the boundary (choosing the unpaired incidence), and
    \item moving from paired vertex to its matched edge.
\end{enumerate}
The path terminates at a critical vertex.

\textbf{Critical edge $(2,3)$:} 
Both endpoints are reachable; taking the branch through vertex $2$:
\[
(2,3) \;\longrightarrow\; 2 \;\longrightarrow\; (2,4) \;\longrightarrow\; 4 \;\longrightarrow\; (4,5) \;\longrightarrow\; 5.
\]
The anchor is $r^-(u) = r^-(w) = 5$ for both endpoints, computed using only edges with $f < f(2,3)$.

\textbf{Critical edge $(6,7)$:}
Taking the branch through vertex $6$:
\[
(6,7) \;\longrightarrow\; 6 \;\longrightarrow\; (5,6) \;\longrightarrow\; 5.
\]
Again, both endpoints have anchor $5$.

\section{Counterexamples: Persistence Does Not Determine Descending Manifolds}
\label{sec:ph_counterexamples}

\begin{figure}[t]
\centering
\begin{tikzpicture}[scale=0.90]

% === LEFT PANEL: V_1 ===
\begin{scope}[shift={(-2.8,0)}]
    \coordinate (v0) at (30:1.6);
    \coordinate (v1) at (90:1.6);
    \coordinate (v2) at (150:1.6);
    \coordinate (v3) at (210:1.6);
    \coordinate (v4) at (270:1.6);
    \coordinate (v5) at (330:1.6);
    \coordinate (c) at (0,0);
    
    % Triangle fills
    \fill[blue!10] (c) -- (v1) -- (v2) -- cycle;
    \fill[blue!10] (c) -- (v2) -- (v3) -- cycle;
    \fill[blue!10] (c) -- (v3) -- (v4) -- cycle;
    \fill[blue!10] (c) -- (v4) -- (v5) -- cycle;
    \fill[blue!10] (c) -- (v5) -- (v0) -- cycle;
    \fill[blue!40] (c) -- (v0) -- (v1) -- cycle; % Critical
    
    % Edges
    \draw[thick] (v0) -- (v1) -- (v2) -- (v3) -- (v4) -- (v5) -- cycle;
    \draw[blue!70!black, semithick] (c) -- (v0);
    \draw[blue!70!black, semithick] (c) -- (v1);
    \draw[blue!70!black, semithick] (c) -- (v2);
    \draw[blue!70!black, semithick] (c) -- (v3);
    \draw[blue!70!black, semithick] (c) -- (v4);
    \draw[gray!60, dashed, semithick] (c) -- (v5); % Excluded
    \draw[blue!70!black, very thick] (c) -- (v0) -- (v1) -- cycle;
    
    % Vertices
    \foreach \v in {v0,v1,v2,v3,v4,v5} \fill[black] (\v) circle (1.8pt);
    \fill[black] (c) circle (1.8pt);
    
    % Labels
    \node[right, font=\tiny] at ($(v0)+(0.08,0)$) {0};
    \node[above, font=\tiny] at ($(v1)+(0,0.08)$) {1};
    \node[left, font=\tiny] at ($(v2)+(-0.08,0)$) {2};
    \node[left, font=\tiny] at ($(v3)+(-0.08,0)$) {3};
    \node[below, font=\tiny] at ($(v4)+(0,-0.08)$) {4};
    \node[right, font=\tiny] at ($(v5)+(0.08,0)$) {5};
    \node[below, font=\tiny] at ($(c)+(0,-0.15)$) {$c$};
\end{scope}

% === RIGHT PANEL: V_2 ===
\begin{scope}[shift={(2.8,0)}]
    \coordinate (v0) at (30:1.6);
    \coordinate (v1) at (90:1.6);
    \coordinate (v2) at (150:1.6);
    \coordinate (v3) at (210:1.6);
    \coordinate (v4) at (270:1.6);
    \coordinate (v5) at (330:1.6);
    \coordinate (c) at (0,0);
    
    % Triangle fills
    \fill[red!10] (c) -- (v1) -- (v2) -- cycle;
    \fill[red!10] (c) -- (v2) -- (v3) -- cycle;
    \fill[red!10] (c) -- (v3) -- (v4) -- cycle;
    \fill[red!10] (c) -- (v4) -- (v5) -- cycle;
    \fill[red!10] (c) -- (v5) -- (v0) -- cycle;
    \fill[red!40] (c) -- (v0) -- (v1) -- cycle; % Critical (same)
    
    % Edges
    \draw[thick] (v0) -- (v1) -- (v2) -- (v3) -- (v4) -- (v5) -- cycle;
    \draw[red!70!black, semithick] (c) -- (v0);
    \draw[red!70!black, semithick] (c) -- (v1);
    \draw[gray!60, dashed, semithick] (c) -- (v2); % Excluded
    \draw[red!70!black, semithick] (c) -- (v3);
    \draw[red!70!black, semithick] (c) -- (v4);
    \draw[red!70!black, semithick] (c) -- (v5);
    \draw[red!70!black, very thick] (c) -- (v0) -- (v1) -- cycle;
    
    % Vertices
    \foreach \v in {v0,v1,v2,v3,v4,v5} \fill[black] (\v) circle (1.8pt);
    \fill[black] (c) circle (1.8pt);
    
    % Labels
    \node[right, font=\tiny] at ($(v0)+(0.08,0)$) {0};
    \node[above, font=\tiny] at ($(v1)+(0,0.08)$) {1};
    \node[left, font=\tiny] at ($(v2)+(-0.08,0)$) {2};
    \node[left, font=\tiny] at ($(v3)+(-0.08,0)$) {3};
    \node[below, font=\tiny] at ($(v4)+(0,-0.08)$) {4};
    \node[right, font=\tiny] at ($(v5)+(0.08,0)$) {5};
    \node[below, font=\tiny] at ($(c)+(0,-0.15)$) {$c$};
\end{scope}

% === BARCODE ===
\begin{scope}[shift={(0,-1.7)}]
    \draw[-{Stealth[length=2mm]}] (-1.8,0) -- (2.2,0) node[right, font=\tiny] {$t$};
    \draw[very thick, black!70] (-0.8,0.12) -- (1.2,0.12);
    \fill[black!70] (-0.8,0.12) circle (2pt);
    \draw[black!70, thick] (1.2,0.12) circle (2pt);
    \node[below, font=\tiny] at (-0.8,-0.05) {1};
    \node[below, font=\tiny] at (1.2,-0.05) {2};
    \node[left, font=\tiny] at (-1.3,0.12) {$H_1$:};
\end{scope}

% === Critical cell annotation ===
\node[font=\tiny, align=center] at (0,1.0) {Same critical 2-cell:\\ $\Delta_0 = (c,0,1)$};

\end{tikzpicture}
\caption{\textbf{Same persistence, different gradient routing.} 
Both panels show the same filtered hexagon-fan complex with identical $H_1$ persistence $[1,2)$. 
The critical 2-cell $\Delta_0 = (c,0,1)$ (darkly shaded) is identical in both matchings. 
However, V-path reachability differs: in $V_1$, gradient paths reach $\Delta_0$ counterclockwise (support excludes $s_5$); 
in $V_2$, clockwise (support excludes $s_2$). 
This directed incidence information-invisible to persistence-determines which edges are downstream of a cycle-filling event in topology-guided generation.}
\label{fig:ph_vs_flow}
\end{figure} 

\paragraph{Scope and compatibility.}
Appendix~J proves a \emph{non-identifiability} statement: persistence diagrams do not determine descending-manifold or flow structure.
To make this obstruction independent of implementation details, the counterexamples in this appendix are stated for \emph{externally specified filtrations} as well as (in Section \ref{sec:maximality_counterexample}) for the \emph{entry-time compatible} and \emph{maximal} notions of matchings (Definition \ref{def:max_filtr}).
By contrast, our implementation constructs a discrete Morse function $f$ and uses the weaker \emph{$f$-admissible} condition (Definition \ref{def:compatibility_notion}), which only guarantees compatibility with the \emph{sublevel filtration induced by $f$}.
The impossibility result does not require the counterexamples to arise from Alg.~\ref{alg:discrete_morse_function}.

We now provide explicit counterexamples demonstrating that persistent homology cannot recover gradient-flow geometry.

\begin{proposition}[Non-identifiability of descending manifolds from persistence]
\label{thm:nonidentifiability}
There exists no well-defined map
\[
\Phi:\mathrm{Dgm}_*(\Lambda)\longrightarrow \mathrm{DescMfd}(\Lambda)
\]
from persistence diagrams of filtrations of a finite cell complex $\Lambda$ to descending-manifold structures on $\Lambda$ such that, for every filtration $F$ of $\Lambda$ and every discrete gradient vector field $V$ compatible with $F$, one has
\[
\Phi\big(\mathrm{PH}(F)\big)
\;=\;
\{\mathcal{D}_V(\sigma)\}_{\sigma\in\mathrm{Crit}(V)}.
\]
Equivalently, the fiber over a single persistence diagram may contain filtration-compatible discrete gradient vector fields inducing distinct descending-manifold structures.
\end{proposition}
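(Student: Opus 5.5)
The plan is to prove the statement by exhibiting a single counterexample. We fix one finite complex $\Lambda$ and one filtration $F$ of it. We then build two discrete gradient vector fields $V_1, V_2$, both compatible with $F$, whose descending-manifold families differ. Since $\mathrm{PH}(F)$ is one diagram, any map $\Phi$ would have to send it to both $\{\mathcal{D}_{V_1}(\sigma)\}$ and $\{\mathcal{D}_{V_2}(\sigma)\}$, which is a contradiction. In fact it suffices to produce such a pair for one filtration, because persistence depends only on $F$ and not on the matching.

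\textbf{Construction.} We take the hexagon fan of Fig.~\ref{fig:ph_vs_flow}: a cone point $c$, rim vertices $0,\dots,5$, rim edges $\{i,i+1\}$, spokes $s_i=\{c,i\}$, and triangles $\Delta_i=(c,i,i+1)$, with indices mod $6$. The filtration $F$ is chosen so that:
\begin{itemize}
\item all vertices and all edges enter at time $0$, except one rim edge, say $\{0,1\}$, which enters at time $1$ and creates the $H_1$ class $[1,2)$;
\item the triangles $\Delta_1,\dots,\Delta_5$ enter at time $1$;
\item $\Delta_0$ enters at time $2$ and kills the class.
\end{itemize}
If needed, I would adjust entry times slightly so that exactly one $H_1$ bar $[1,2)$ is produced, matching the figure's barcode. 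Computing $\mathrm{PH}(F)$ in low dimension is a routine verification.

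\textbf{Two matchings.} In both $V_1$ and $V_2$ the critical cells are one vertex, the edge creating the cycle, and $\Delta_0$. We pair $c$ (or the base vertex) along a spanning tree of spokes and rim edges entering at time $0$. The remaining time-$1$ edges are paired with the time-$1$ triangles $\Delta_1,\dots,\Delta_5$, chained around the fan.
\begin{itemize}
\item In $V_1$, the chain runs counterclockwise: each $\Delta_i$ is paired with spoke $s_{i+1}$ or $s_i$, chosen so that spoke $s_5$ is left on the tree side.
\item In $V_2$, the chain runs clockwise, leaving $s_2$ out.
\end{itemize}
For each matching, we check three things. First, it is a matching. Second, it is entry-time compatible, meaning paired cells share an entry time; where a spoke must pair with a time-$1$ triangle, I would put that spoke at time $1$ in $F$, so that $F$ itself accommodates both pairings. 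Third, it is acyclic. Acyclicity follows because the chain of $\mathcal{V}$-paths through the fan is linear, not closed: one spoke is excluded, which breaks the circle.

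\textbf{Distinguishing descending sets.} We then compute $\mathcal{D}_{V_1}(\Delta_0)$ and $\mathcal{D}_{V_2}(\Delta_0)$ via Definition~\ref{def:descending_set}. Starting from $\Delta_0$ and following the directed Hasse diagram, the gradient path goes around the fan in opposite directions. The resulting sets therefore differ on the spokes $s_2$ and $s_5$: one set contains $s_5$ but not $s_2$, or the reverse, depending on the chosen chain.

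\textbf{Main obstacle.} The hard part is making the filtration simultaneously compatible with both matchings while keeping one fixed diagram. Entry-time compatibility forces paired cells to share a time, and the two matchings pair different spokes with triangles. My first attempt would be to let all spokes and all of $\Delta_1,\dots,\Delta_5$ share time $1$ and recheck the barcode. If that creates extra bars, I would fall back to the weaker $f$-admissible notion (Definition~\ref{def:compatibility_notion}) with a single sublevel filtration. The statement only requires compatibility in some fixed sense, applied to one filtration, so this fallback still proves it.
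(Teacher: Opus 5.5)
Your overall strategy is the paper's: fix one filtration $F$, exhibit two $F$-compatible gradient fields whose descending families differ, and use that $\mathrm{PH}(F)$ depends only on $F$. The obstacle you single out is not the real one. Put $c$, all six spokes, $\{0,1\}$ and $\Delta_1,\dots,\Delta_5$ at time $1$. Then both matchings are entry-time compatible and acyclic, and $H_1$ is the single bar $[1,2)$; extra bars would be harmless anyway.

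The genuine gap is your last step: in the hexagon fan the descending sets of $\Delta_0$ coincide. Let $s_k$ be the spoke reserved for $c$, with $s_6=s_0$. If $\Delta_1,\dots,\Delta_5$ are matched to the other five spokes, then necessarily $\Delta_i$ is paired with $s_i$ for $i<k$ and with $s_{i+1}$ for $i\ge k$. The $\mathcal V$-paths leaving $\Delta_0$ through its two spokes are then
\[
\Delta_0\succ s_1\prec\Delta_1\succ s_2\prec\cdots\prec\Delta_{k-1}\succ s_k,
\qquad
\Delta_0\succ s_0\prec\Delta_5\succ s_5\prec\cdots\prec\Delta_k\succ s_k .
\]
Together they visit every triangle and every spoke, and every rim edge as a face of a visited triangle. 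This includes $s_k$ itself: it is a face of the last triangle on each branch, and that incidence points downward because $s_k$ is not paired with a triangle. So $\mathcal D_V(\Delta_0)$ is the same set for every $k$. For your $V_1$ ($k=5$), $s_5$ is exactly where the two branches meet, so "the support excludes $s_5$" is false.

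The same holds for full Hasse-diagram reachability, since every vertex is reached through a spoke. The critical vertex and $\{0,1\}$ also have identical descending sets if you use the same rim matching. Hence $\{\mathcal D_{V_1}(\sigma)\}=\{\mathcal D_{V_2}(\sigma)\}$ and no contradiction arises. Only the orientation of the flow differs (which spoke is the meeting point), and the set $\mathcal D_V(\sigma)$ does not record that.

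What is missing is a configuration where one branch of the flow out of the critical cell dead-ends in one matching but continues in the other. The paper gets this from the path $v_0-v_1-v_2-v_3$ with $e_{12}$ entering last. Pairing $v_2$ with $e_{23}$ (leaving $v_1$ unpaired), versus pairing $v_1$ with $e_{01}$ (leaving $v_2$ unpaired), blocks opposite branches. This gives $\mathcal D_{V_1}(e_{12})=\{e_{12},v_1,v_2,e_{23},v_3\}\neq\{e_{12},v_1,v_2,e_{01},v_0\}=\mathcal D_{V_2}(e_{12})$. Keep only pairs whose cells enter simultaneously, e.g.\ $V_1=\{(v_2,e_{23})\}$ and $V_2=\{(v_1,e_{01})\}$. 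Either switch to such an example or break a chain in the fan so that some triangle is unreachable from $\Delta_0$ in one matching only. Alternatively, note that the statement compares families indexed by $\mathrm{Crit}(V)$, so two compatible matchings with different critical sets already suffice.
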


\begin{proof}
The proof follows from the explicit counterexamples constructed below. We exhibit filtrations with genuine finite birth--death intervals and two distinct filtration-compatible gradient vector fields that yield identical persistence diagrams but different descending manifolds.
\end{proof}

\subsection{A Minimal Counterexample: Path Graph on Four Vertices}
\label{sec:minimal_counterexample}

Let $\Lambda$ be the path graph on four vertices (a $1$-dimensional CW complex) $V = \{v_0, v_1, v_2, v_3\}$ with edges $E=\{e_{01},e_{12},e_{23}\}$:
\[
v_0 - v_1 - v_2 - v_3.
\]
Define a filtration by subcomplexes $\Lambda_0 \subset \Lambda_1 \subset \Lambda_2 \subset \Lambda_3$ via:
\begin{align*}
\Lambda_0 &:= \{v_0,v_3\}, \\
\Lambda_1 &:= \Lambda_0 \cup \{v_1, e_{01}\}, \\
\Lambda_2 &:= \Lambda_1 \cup \{v_2, e_{23}\}, \\
\Lambda_3 &:= \Lambda_2 \cup \{e_{12}\}.
\end{align*}
This is a valid filtration with a merge event at time $t = 3$.

\begin{figure}[ht]
  \centering
  \includegraphics[width=1.0\linewidth]{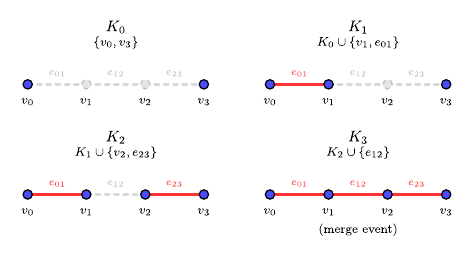}
  \caption{A filtration on $\Lambda$: the path graph enters in stages, with the final edge $e_{12}$ merging two components at time $t=3$.}
  \label{fig:filtration_path}
\end{figure}

\paragraph{The $0$-dimensional persistence diagram.}
At $t=0$ there are two connected components, so $H_0(\Lambda_0)\cong\mathbb{Z}^2$. At $t=1$ and $t=2$ there are still two components, so $H_0(\Lambda_1)\cong H_0(\Lambda_2)\cong \mathbb{Z}^2$. At $t=3$ the edge $e_{12}$ connects the two components, so $H_0(\Lambda_3)\cong \mathbb{Z}$. Hence there is exactly one death event in $H_0$ at time $3$.

Under the standard persistence pairing convention (``elder rule''), one of the two components born at time $0$ receives the essential interval $(0,\infty)$ and the other receives the finite interval $(0,3)$. Thus as a multiset:
\[
D_0(F)=\{(0,\infty),(0,3)\}, \qquad D_1(F)=\varnothing.
\]
The diagram contains a genuine finite birth--death interval.

\paragraph{Two gradient fields compatible with the same filtration.}
Define two discrete gradient vector fields on $\Lambda_3$:
\[
V_1 := \{(v_0,e_{01}),\ (v_2,e_{23})\},
\qquad
V_2 := \{(v_1,e_{01})\}.
\]
On a tree, any matching is acyclic (there are no directed cycles in the directed Hasse diagram), so both $V_1$ and $V_2$ are valid discrete gradient vector fields that are filtration-compatible.

\begin{figure}[ht]
  \centering
  \includegraphics[width=1.\linewidth]{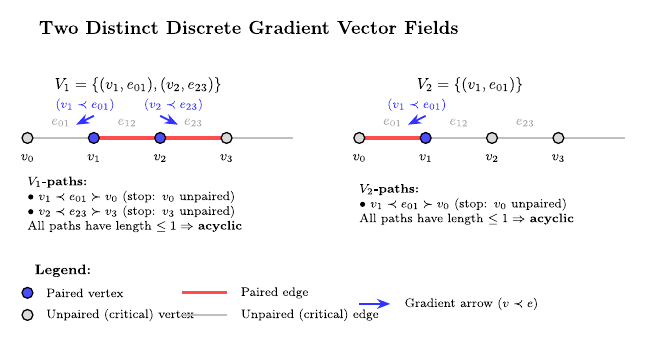}
  \caption{Two distinct gradient vector fields $V_1$ (left) and $V_2$ (right) on the path graph.}
  \label{fig:two_gradient_fields}
\end{figure}

\paragraph{Same persistence, different descending manifolds.}
The death of the $0$-dimensional class at $t=3$ is caused by the attachment of edge $e_{12}$. In both cases, $e_{12}$ is critical (unpaired). However, its descending manifold depends on the chosen gradient field.

To compute descending manifolds, we form the directed graph $G_\mathcal{V}$ on cells as follows:
\begin{itemize}
\item If an edge $e$ is \emph{unpaired}, orient its incidences downward to endpoints: $e\to v$ for each endpoint $v\prec e$.
\item If a vertex $v$ is paired with an edge $e$ in $\mathcal{V}$, orient upward along the pair: $v\to e$.
\item For a paired edge $e=(v,w)$ with $v\to e$ (because $(v,e)\in \mathcal{V}$), keep the \emph{other} incidence oriented downward: $e\to w$.
\end{itemize}

\medskip
\noindent\underline{Case $V_1 = \{(v_0,e_{01}), (v_2,e_{23})\}$:}

Since $e_{12}$ is unpaired, its incidences orient downward: $e_{12}\to v_1$ and $e_{12}\to v_2$.

Vertex $v_1$ is unpaired in $V_1$, so there is no outgoing arrow; the branch through $v_1$ stops. Vertex $v_2$ is paired with $e_{23}$, so $v_2\to e_{23}$ and $e_{23}\to v_3$. Hence $e_{12}\to v_2\to e_{23}\to v_3$ is a directed path. Therefore,
\[
\mathcal{D}_{V_1}(e_{12}) = \{e_{12},\, v_1,\, v_2,\, e_{23},\, v_3\}.
\]

\medskip
\noindent\underline{Case $V_2 = \{(v_1,e_{01})\}$:}

Again $e_{12}$ is unpaired, so $e_{12}\to v_1$ and $e_{12}\to v_2$.

Now $v_1$ is paired with $e_{01}$, hence $v_1\to e_{01}$ and $e_{01}\to v_0$. Thus $e_{12}\to v_1\to e_{01}\to v_0$ is a directed path. Meanwhile $v_2$ is unpaired in $V_2$, so the branch through $v_2$ stops. Therefore,
\[
\mathcal{D}_{V_2}(e_{12}) = \{e_{12},\, v_2,\, v_1,\, e_{01},\, v_0\}.
\]

\medskip
\noindent\textbf{Conclusion.} We have the same filtration $F$ (hence identical persistence diagram $D_0(F)=\{(0,\infty),(0,3)\}$) and two gradient fields $V_1\neq V_2$ such that:
\[
\mathcal{D}_{V_1}(e_{12}) \neq \mathcal{D}_{V_2}(e_{12}).
\]
Thus any purported rule $\Phi$ that outputs a unique descending-manifold structure from the diagram alone cannot be well-defined.

\subsection{Counterexample with Maximality Constraint}
\label{sec:maximality_counterexample}

\subsection{J.2. Robustness under stricter compatibility: maximal entry-time matchings}
One might object that the non-identifiability in J.1 is an artifact of using non-maximal matchings or a permissive compatibility notion.
We therefore strengthen the obstruction: even restricting to \emph{maximal} matchings that are \emph{entry-time compatible} with an external filtration (Definition \ref{def:max_filtr}), which is a strictly stronger constraint than the $f$-admissibility used in our implementation; persistence still does not determine descending-manifold connectivity.

% \begin{definition}[Maximality]
% Fix a filtration $F:K_0\subset \cdots \subset K_T$ and let $t(\sigma)$ denote the first index at which a cell $\sigma$ appears. A discrete gradient field $\mathcal{V}$ is \emph{filtration-compatible} if every matched pair $(\sigma,\tau)\in \mathcal{V}$ satisfies $t(\sigma)=t(\tau)$. Among filtration-compatible matchings, we call $\mathcal{V}$ \emph{maximal} if there is no strictly larger filtration-compatible matching $\mathcal{V}\subsetneq \widetilde{\mathcal{V}}$.
% \end{definition}

\begin{definition}[Maximality under entry-time compatibility]
Fix an external filtration $F:\Lambda_0\subset\cdots\subset \Lambda_T$ and let $t_F(\sigma)$ denote the first index at which a cell $\sigma$ appears.
A discrete gradient field $\mathcal{V}$ is \emph{entry-time compatible} (with respect to $F$) if every matched pair $(\sigma,\tau)\in\mathcal{V}$ satisfies $t_F(\sigma)=t_F(\tau)$.
Among entry-time compatible matchings, we call $\mathcal{V}$ \emph{maximal} if there is no strictly larger entry-time compatible matching $\mathcal{V}\subsetneq \widetilde{\mathcal{V}}$.
\label{def:max_filtr}
\end{definition}

\paragraph{Construction: Star graph with simultaneous edge arrivals.}
Let $T$ be the tree on vertices $\{u,a,b,c\}$ with edges $E=\{e_{ua}, e_{ub}, e_{uc}\}$ (a 3-arm star centered at $u$). Define a filtration:
\begin{align*}
T_0 &= \{a,b,c\}, \\
T_1 &= T_0\cup\{u,e_{ua}\}, \\
T_2 &= T_1\cup\{e_{ub},e_{uc}\}.
\end{align*}

\begin{figure}[ht]
  \centering
  \includegraphics[width=0.9\linewidth]{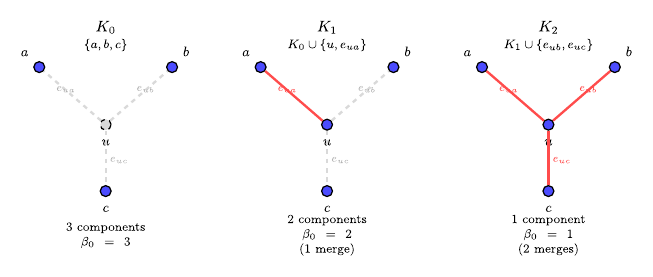}
  \caption{A filtration on the star graph $T$: three leaves appear at $t=0$, the center and one edge at $t=1$, and two more edges simultaneously at $t=2$.}
  \label{fig:star_filtration}
\end{figure}

At time $t=0$ there are three components; at $t=1$ vertex $u$ and edge $e_{ua}$ attach, reducing components by one; at $t=2$ edges $e_{ub}$ and $e_{uc}$ attach, producing two further merge events.

\paragraph{Two distinct maximal filtration-compatible matchings.}
At filtration level $t=2$, the new edges $e_{ub}$ and $e_{uc}$ \emph{share} vertex $u$. A filtration-compatible matching can pair $u$ with at most one of them at time $t=2$, yielding two distinct choices:
\begin{align*}
\mathcal{V}^{(b)}_{\max} &= \{(u,e_{ub})\}\ \cup\ \{\text{compatible pairs from earlier levels}\}, \\
\mathcal{V}^{(c)}_{\max} &= \{(u,e_{uc})\}\ \cup\ \{\text{compatible pairs from earlier levels}\}.
\end{align*}

Both matchings are filtration-compatible and maximal: once $u$ is paired with one of $e_{ub}$ or $e_{uc}$, the other edge cannot be paired with $u$ (matching constraint), and it cannot be paired with $b$ or $c$ because those vertices were born at $t=0$ while the edges were born at $t=2$ (violating compatibility).

The \emph{descending-manifold connectivity} differs between these two maximal matchings---the directed reachability from the critical edge responsible for the merge at time $2$ flows through $u$ toward $b$ in one matching and toward $c$ in the other---while the persistence diagram of the filtration is identical in both cases.

\begin{remark}[Filtration-compatible gradient fields]
Filtration-compatible gradient fields are useful because they yield a smaller filtered Morse complex that computes the same persistence. However, the diagram itself is a property of the filtration, not of the gradient field.
\end{remark}

Essentially, Proposition \ref{thm:nonidentifiability} follows from the combination of two well-established facts: 

First, persistent homology is a coarse invariant: distinct filtrations of the
same complex may induce identical persistence diagrams while differing in
geometric or combinatorial structure beyond homological birth--death data
\citep{cohen2007stability,bauer2021measuring}.

Second, discrete Morse theory admits substantial non-uniqueness.
For a fixed filtration $F$, there generally exist multiple discrete gradient
vector fields compatible with $F$, all yielding Morse reductions that preserve
persistent homology but induce different gradient-flow structures and descending
manifolds \citep{forman1998morse,mischaikow2013morse}.

In particular, there exist filtrations $F$ and pairs of compatible discrete
gradient vector fields $V_1,V_2$ such that $\mathrm{PH}(F) \text{ is identical for both,}$ yet the associated collections of descending manifolds
$\{\mathcal{D}_{V_1}(\sigma)\}$ and $\{\mathcal{D}_{V_2}(\sigma)\}$ differ as
combinatorial subcomplexes.
Any map $\Phi$ depending only on the persistence diagram would necessarily assign
the same output to both cases, contradicting the requirement that it recover the
descending-manifold structure for all compatible gradient fields.

Therefore, no such reconstruction map $\Phi$ can exist.

\section{Algorithms}
\label{sec:algorithms_appendix}

\begin{algorithm}[t]
\caption{Discrete Morse function on a $d$-dimensional simplicial complex $\Lambda$}
\label{alg:discrete_morse_function}
\begin{algorithmic}[1]
\REQUIRE complex $\Lambda$ given as a dictionary $\texttt{complex\_dict}[p]$ of $p$-simplices up to dimension $d$; vertex weights $g$
\ENSURE function $f$ on all simplices in $\Lambda$; face-flag map $\mathrm{Flag}$
\STATE Initialize $\mathrm{Flag}[\alpha]\gets 0$ for all simplices $\alpha\in \Lambda$
\FORALL{vertices $v\in \texttt{complex\_dict}[0]$}
    \STATE $f(v)\gets g(v)$
\ENDFOR
\FOR{$p=1$ \textbf{to} $d$}
    \FORALL{$p$-simplices $\alpha\in \texttt{complex\_dict}[p]$}
        \STATE $\mathrm{Faces}\gets$ list of $(p-1)$-faces of $\alpha$
        \STATE sort $\mathrm{Faces}$ so $f(\mathrm{Faces}[0])\ge f(\mathrm{Faces}[1])\ge \cdots$
        \STATE $\gamma_0\gets \mathrm{Faces}[0]$
        \STATE $\gamma_1\gets \mathrm{Faces}[1]$ if $|\mathrm{Faces}|\ge 2$, else $\gamma_1\gets \gamma_0$
        \IF{$\mathrm{Flag}[\gamma_0]=0$ \AND $f(\gamma_0)>f(\gamma_1)$}
            \STATE $f(\alpha)\gets \frac{f(\gamma_0)+f(\gamma_1)}{2}$
            \STATE $\mathrm{Flag}[\gamma_0]\gets 1$
        \ELSE
            \STATE sample $\varepsilon\sim \mathrm{Unif}(0,0.5)$
            \STATE $f(\alpha)\gets f(\gamma_0)+\varepsilon$
        \ENDIF
    \ENDFOR
\ENDFOR
\STATE \textbf{return} $(f,\mathrm{Flag})$
\end{algorithmic}
\end{algorithm}

% ============================================================
% Algorithm 2: construct_gradient_vector_field (Cell 8)
% NOTE: Flag is NOT used by the code (even though passed in).
% ============================================================
\begin{algorithm}[t]
\caption{Gradient-respecting neighbors in the incidence graph}
\label{alg:neighbors}
\begin{algorithmic}[1]
\REQUIRE $\texttt{complex\_dict}$ up to dimension $d$; function $f$ on simplices
\ENSURE pairing map $\mathsf{pair}$ and list of gradient pairs $\mathcal{V}$
\STATE Initialize $\mathsf{pair}(\alpha)\gets \varnothing$ for all simplices $\alpha\in \Lambda$
\STATE $\mathcal{V}\gets \emptyset$
\FOR{$p=0$ \textbf{to} $d-1$}
    \FORALL{$(p+1)$-simplices $\tau\in \texttt{complex\_dict}[p+1]$}
        \IF{$\mathsf{pair}(\tau)\neq \varnothing$}
            \STATE \textbf{continue}
        \ENDIF
        \STATE $\mathrm{Faces}\gets$ all $p$-faces $\sigma\prec \tau$
        \STATE $\mathrm{Eligible}\gets \{\sigma\in \mathrm{Faces}:\ \mathsf{pair}(\sigma)=\varnothing\ \textbf{and}\ f(\tau)\le f(\sigma)\}$
        \IF{$\mathrm{Eligible}\neq \emptyset$}
            \STATE $\sigma^\star \gets \arg\min_{\sigma\in \mathrm{Eligible}} f(\sigma)$
            \STATE $\mathcal{V}\gets \mathcal{V}\cup\{(\sigma^\star,\tau)\}$
            \STATE $\mathsf{pair}(\sigma^\star)\gets \tau$,\;\; $\mathsf{pair}(\tau)\gets \sigma^\star$
        \ENDIF
    \ENDFOR
\ENDFOR
\STATE \textbf{return} $(\mathcal{V}, \mathsf{pair})$
\end{algorithmic}
\end{algorithm}

% ============================================================
% Algorithm 3A: downhill-only neighbor oracle used inside find_gradient_vpath (Cell 9)
% ============================================================
\begin{algorithm}[t]
\caption{BFS path in the gradient-respecting incidence graph}
\label{alg:bfs}
\begin{algorithmic}[1]
\REQUIRE simplex $\sigma$; function $f$; pairing map $\mathsf{pair}$
\ENSURE neighbor set $\mathcal{N}_{\downarrow}(\sigma)$
\STATE $\mathcal{N}_{\downarrow}(\sigma)\gets \emptyset$
\STATE $\beta\gets \mathsf{pair}(\sigma)$
\IF{$\beta\neq \varnothing$ \AND $\dim(\beta)<\dim(\sigma)$}
    \STATE $\mathcal{N}_{\downarrow}(\sigma)\gets \mathcal{N}_{\downarrow}(\sigma)\cup\{\beta\}$
\ENDIF
\IF{$\dim(\sigma)>0$}
    \FORALL{faces $\gamma\prec\sigma$ with $\dim(\gamma)=\dim(\sigma)-1$}
        \IF{$f(\gamma)< f(\sigma)$}
            \STATE $\mathcal{N}_{\downarrow}(\sigma)\gets \mathcal{N}_{\downarrow}(\sigma)\cup\{\gamma\}$
        \ENDIF
    \ENDFOR
\ENDIF
\STATE \textbf{return} $\mathcal{N}_{\downarrow}(\sigma)$
\end{algorithmic}
\end{algorithm}

% ============================================================
% Algorithm 4: find_gradient_vpath (Cell 9) uses BFS with downhill neighbors
% ============================================================
\begin{algorithm}[t]
\caption{Path between two critical cells (dimension-order wrapper)}
\label{alg:between}
\begin{algorithmic}[1]
\REQUIRE start simplex $s$, target simplex $t$, downhill neighbor oracle $\mathcal{N}_{\downarrow}$ from Algorithm~\ref{alg:bfs}
\ENSURE path $P$ and length $|P|-1$, or $(\varnothing,\infty)$ on failure
\STATE Initialize queue $Q\gets \{(s,[s])\}$ and visited set $\mathsf{Visited}\gets\{s\}$
\WHILE{$Q\neq\emptyset$}
    \STATE Pop-left $(\sigma,P)$ from $Q$
    \IF{$\sigma=t$}
        \STATE \textbf{return} $(P,|P|-1)$
    \ENDIF
    \FORALL{$\tau\in \mathcal{N}_{\downarrow}(\sigma)$}
        \IF{$\tau\notin \mathsf{Visited}$}
            \STATE $\mathsf{Visited}\gets \mathsf{Visited}\cup\{\tau\}$
            \STATE Push-right $(\tau, P\mathbin{\|}[\tau])$ into $Q$
        \ENDIF
    \ENDFOR
\ENDWHILE
\STATE \textbf{return} $(\varnothing,\infty)$
\end{algorithmic}
\end{algorithm}

%============================================================

\section{Overview of the Related Work}
\label{sec:lit_appendix}

{\bf Diffusion on graphs.} Denoising diffusion models (DDPMs) and score-based generative models learn to sample from complex data distributions by reversing a gradual noising process.
%, originally developed for continuous data such as images \citep{sohl2015deep, ho2020ddpm, song2021scorebased_sde}. A central design choice is the state space and corruption kernel. Namely, in continuous settings, Gaussian perturbations yield tractable reverse-time dynamics, whereas in discrete settings (including graphs), noising is commonly defined through categorical or other discrete Markov kernels and trained with classification-style denoising objectives. 
Graph diffusion methods adapt this framework to permutation-invariant, combinatorial objects by diffusing over an explicit graph representation such as an adjacency matrix (or edge indicators), optionally together with node/edge attributes~\cite{liu2023generative, cao2024survey, shou2026graph}.  A widely used approach is  \emph{categorical diffusion} on
edges such as DiGress~\citep{vignac2022digress}, EDGE~\citep{chen2023efficient} and SID~\citep{boget2025simple}, where the forward process progressively randomizes edge types and the reverse model predicts either the clean edges or the noise distribution. %A canonical example is DiGress~\citep{vignac2022digress} and which demonstrates strong performance for generic graph generation under a discrete denoising objective, and more recent advances 
Complementary to discrete-time categorical diffusion, continuous-time \emph{score-based} formulations define a stochastic differential equation (SDE) and Ordinary Differential Equation (ODE) over a (possibly relaxed) graph representation and learn a score network for reverse-time sampling. A broader class of such score-based graph generative models (SGGMs) models include such approaches as  GDSS~\citep{jo2022gdss}, CDGS~\citep{huang2023conditional} and NVDiff~\citep{chen2022nvdiff}, while~\cite{su2025non} offers a comprehensive treatment of the SGMS convergent analysis. Other alternatives includes interpretation of generation through stochastic optimal control as well as parametrizations based on sequential or normalizing-flow-like constructions, e.g., GGDiff~\citep{tenorio2025graph},
GraphDF~\citep{huang2022graphdf} and DeFoG~\citep{yimingdefog}. Additional line of research explores alternative parameterizations and scalability tradeoffs, including diffusion models that target molecular graphs and graph-structured data with domain constraints%~\citep{hoogeboom2022edm, su2025sbgd} 
and methods designed to improve efficiency for larger graphs via sparsity-aware mechanisms \citep{hoogeboom2022edm, qin2024sparsetrainingdiscretediffusion, su2025sbgd}. Beyond unconditional generation, diffusion has become a promising new alternative for \emph{probabilistic} spatiotemporal graph learning, where the principled  uncertainty quantification plays a critical role in the domain application, ranging from traffic forecasting to air quality monitoring~\citep{yang2024survey_ts_st_diffusion}. Some recent models in this direction include  non-autoregressive diffusion of DiffSTG~\citep{wen2024diffstgprobabilisticspatiotemporalgraph}, conditional diffusion for spatiotemporal imputation PriSTI~\citep{liu2023pristiconditionaldiffusionframework}, Unified Spatio-Temporal Diffusion Models (USTD) \citep{hu2024unifyingdiffusionmodelsprobabilistic}, topological zigzag-guided diffusion ZS-DM~\citep{chen2025topological}.

%Despite these advances, most graph diffusion pipelines primarily learn only \emph{local} denoising rules and do not explicitly represent when global %topological  events such as critical information on the creation of independent cycles should occur during generation, motivating structured intermediate guidance. In turn, 

Despite these advances, most graph diffusion pipelines primarily learn only \emph{local} denoising rules and do not explicitly represent when global events such as, for example, the appearance of fundamental cycles should occur during generation. However, these often overlooked fundamental (or independent) cycles convey a critical information on the underlying latent organization and functionality of the complex system, playing a particularly important role in spatiotemporal graph learning and diffusion. For example, in transportation systems fundamental cycles describe independent alternative traffic loops, define the overall connectivity of the traffic system and determine the robustness of traffic flow against disruptions.
In turn, fundamental cycles in power grid networks describe how the electricity can be transmitted along the power lines, such that specific cycle does not inherently force a specific power flow on another cycle. This, in turn, provides a critical information on potentially redundant power lines and sheds light onto the resilience of the power networks to attacks and the associated cascading failures.
We propose to address these fundamental gaps by invoking the machinery of law-dimensional topology and cobordism theory, in particular.

{\bf Cobordisms in Machine Learning} While topological methods get increasingly popular in ML, their range is largely limited to topological data analysis (TDA) and simplicial topology~\citep{carlson, papamarkou2024position}. In contrast, low-dimensional topology and, cobordism theory, in particular, remains virtually unexplored in ML. We advocate that the cobordism theory offers mathematical machinery that is remarkably well-suited 
to generative modeling yet remains largely untapped in applications~\cite{milnor1963morse,tqft}. 
Classical cobordism provides: (i) \emph{handle decompositions} that express 
topology-changing trajectories as compositions of elementary moves, giving a 
natural ``grammar'' for generation \cite{matsumoto2002introduction, milnor1963morse}; (ii) \emph{Cerf theory}, which classifies 
how critical points can be born, annihilated, or exchanged along generic 
one-parameter families---precisely the structure needed to understand how 
diffusion trajectories can vary while preserving essential features \cite{cerf1964cerf,weber2022multiparameter}; and 
(iii) \emph{topological quantum field theory} (TQFT), which assigns vector 
spaces to states and linear maps to cobordisms in a compositional, functorial 
manner \cite{aitchison2022quantum,marciano2024deepneuralnetworkssemiclassical}.
In this paper we introduce the power tools of the cobordism theory to the diffusion models of spatio-temporal graphs.

\begin{table}[H]
\scriptsize
\centering
\caption{Summary statistics of the datasets.}
\label{tab:dataset_stats}
\resizebox{0.8\linewidth}{!}{
\begin{tabular}{lcccl}
\hline
{\bf Dataset}   & \#{\bf Node} & \#{\bf Time Step} & {\bf Granularity} & {\bf Attribute} \\
\hline
PEMS-03   & 358    & 26,208      & 5 min       & Flow      \\
PEMS-BAY  & 325    & 52,116      & 5 min       & Speed     \\
AIR-BJ    & 36     & 8,760       & 1 hour      & PM2.5     \\
AIR-GZ    & 42     & 8,760       & 1 hour      & PM2.5     \\
\hline
\end{tabular}
}
\end{table}

\section{Datasets, baselines and experimental setup}\label{app_data}
We perform extensive experiments on real-world datasets, and all datasets
are obtained from open sources including (i) PEMS-03~\cite{chen2001freeway} which is sourced from the Performance Measurement System (PeMS) maintained by the California Department of Transportation (Caltrans), and all measurements are recorded at a uniform temporal resolution of 5 minutes; (ii) PEMS-BAY~\cite{chen2001freeway,li2018diffusion} is also collected by CalTrans's PeMS and contains 6 months of traffic speed information on 325 sensors in the Bay area; (iii) AIR-BJ which contains PM2.5 concentration measurements spanning 1 year collected from distributed air-quality monitoring stations in Beijing, China~\cite{yi2018deep}; (iv) AIR-GZ which contains PM2.5 concentration measurements spanning 1 year collected from distributed air-quality monitoring stations in Guangzhou, China~\cite{yi2018deep}. All the tests adopt 60 minutes (i.e., 12 time steps) as the history time window. To evaluate MG-Diff, we compare it with 8 state-of-the-art baselines, i.e., (i) TimeGrad~\cite{rasul2021autoregressive}, (ii) MC Dropout~\cite{wu2021quantifying}, (iii) CSDI~\cite{tashiro2021csdi}, (iv) DiffSTG~\cite{wen2024diffstgprobabilisticspatiotemporalgraph}, (v) PriSTI~\cite{liu2023pristiconditionaldiffusionframework}, (vi) USTD~\cite{hu2024unifyingdiffusionmodelsprobabilistic} , (vii) Decoupled spatio-temporal diffusion model (DSTD) (anonymous ICLR2026 submission), and (viii) ZS-DM~\cite{chen2025topological}. The model is trained using Adam optimizer. We set the batch size of 32, and set the learning rate as $1e^{-3}$ via searching set of $\{1e^{-4}, 3e^{-4}, 3e^{-3}, 1e^{-2}\}$. For the hidden dimension, we conduct a grid search over $\{8, 16, 32, 64, 128, 256\}$. Code is available in~\url{https://anonymous.4open.science/r/MD-Diff-B705/}.
%https://www.dropbox.com/scl/fo/z0szcf77buk59zp6gic15/AP4M-vCEYGcLGBY_RnEC3BI?rlkey=zwxl4r4zcd1tbpsw0r90saj3x&st=p08zlxxo&dl=0

\section{Ablation Studies and More Details on Computational Costs}\label{app_abl}

\begin{table}[H]
\centering
\caption{Ablation studies on MG-Diff variants. 
\label{ablation_analysis}}
  \begin{tabular}{lcccc}
    \toprule
    \multirow{2}{*}{\bf Architecture} &
      \multicolumn{2}{c}{MAE} &
      \multicolumn{2}{c}{CRPS}\\
      & \textbf{{PEMS-03}} & \textbf{{AIR-BJ}} & \textbf{{PEMS-03}} & \textbf{{AIR-BJ}}\\
      \midrule
    {\bf MG-Diff} &{\bf 15.67} & {\bf 29.91}& {\bf 0.077}  & {\bf 0.290}  \\
    MG-Diff  W/o MoMoE & 16.20 & 30.22 & 0.082 & 0.348 \\
    MG-Diff  W/o MDDN & 16.56 &30.83 & 0.234 & 0.534\\
    \bottomrule
  \end{tabular}
\end{table}

\begin{table}[h]
\centering
\caption{Ablation studies with different Morse function values on PEMS-03 and AIR-BJ.}
\label{tab:ablation_different_morse_values}
\begin{tabular}{lcc}
\hline
{\bf Method} & {\bf PEMS-03} & {\bf AIR-BJ}\\
\hline
MG-Diff & {\bf 15.67} & {\bf 29.91} \\
MG-Diff w. deg. & 16.20 & 30.22 \\
MG-Diff w. bet. & {16.56} & {30.83} \\
\hline
\end{tabular}
\end{table}
% 9.99 15.41 0.235
% 9.80 15.07 0.254
\begin{table*}[t]
\centering
\caption{Robustness analysis (noise ratio $=5\%$) on AIR-GZ.\label{robustness_analysis}}
%\vspace{-1.em}
  \scalebox{0.6}{
  \centering
  %\small
  \begin{tabular}[width=1.\linewidth]{l|cc|cc}
  \toprule
      \multirow{2}*{Method} & \multicolumn{2}{c}{Clean AIR-GZ} & \multicolumn{2}{c}{Noisy AIR-GZ}\\
      \cline{2-5}
      & MAE & RMSE & MAE & RMSE\\
      %\hline
      \hline
      USTD & 9.99 &15.41 & 10.72 & 15.71  \\ %& 29.91 & 46.43 & 0.31
      %ZS-DM & & \\
    \hline
    {\bf MG-Diff (Ours)} & 9.80 &15.07  & 10.25 & 15.26 % 30.50 & 48.04 & 0.34
    \\
     %{\bf MG-Diff - morse\_v2 + CRPS} & &  \\
  \bottomrule
\end{tabular}
}
\label{tab:forecasting}
\end{table*}

\paragraph{Scalability} Three optimizations improve scalability: replace finding cliques with triangle enumeration over common neighborhoods ($\mathcal{O}(M\Delta)$); precompute edge--triangle coface indices to cut identification to $\mathcal{O}(M+T_\triangle)$; and restrict path search to local critical pairs, reducing cost from $\mathcal{O}(K^2)$ to $\mathcal{O}(Kk)$ for small $k$. Each iteration runs the GWaveNet encoder and denoiser with backprop. Graph convolution dominates at $\mathcal{O}(B C N^2 L)$ per hop and per layer, giving total $\mathcal{O}((\ell+\Lambda)BCN^2L)$, where $N^2$ arises from dense adjacency and quadratic attention. Inference performs $S$ denoising steps after a single encoding pass, costing $\mathcal{O}(\ell BCN^2L + S\Lambda BN^2C)$. For probabilistic evaluation with $Q$ stochastic samples, inference scales as $\mathcal{O}(\ell BCN^2L + QS\Lambda BN^2C)$, roughly $QS$ times slower than one denoiser call per training step. Table~\ref{runtime} shows the training running time/epoch (sec) of Morse skeleton extraction and topology-guided diffusion.

\begin{table}[H]
\caption{Running time in seconds (s) (average per epoch).}
\label{runtime}
\centering
\begin{tabular}{lcccc}
\hline
 & PEMS-03 & PEMS-BAY & AIR-BJ & AIR-GZ \\
\hline
MG-Diff & 136  & 67.5  & 1  & 1 \\
\hline
\end{tabular}
\end{table}

Table \ref{tab:morse_controls} pairs Table \ref{tab:ablation_different_morse_values} with two comparisons on AIR-GZ on 5 random seeds. First, the Morse skeleton is replaced by a random skeleton obtained by sampling the same numbers of nodes and edges from the original graph; the random skeleton replaces the Morse-based conditioning while the model architecture, training configuration, and pretrained checkpoint are unchanged. Since the two models differ only in the conditioning skeleton, we see that the resulting $6.0\%, 3.5\%, 7,2\%, \text{ and }8.4\%$ relative improvements in MAE, RMSE, MAPE, and CRPS respectively. We note that this demonstrates that the observed gains arise from the topological structure encoded by the Morse skeleton rather than from the model capacity or the presence of a conditioning branch.

\textbf{Testing Various Morse Functions. }Further, we compare three representative scoring functions, and note that all variants achieve comparable performance (implying that MG-Diff is not overly sensitive to changes in the scoring rule). In our formulation $g(v)=\deg_{\max}-\deg(v)+\varepsilon_v$ with $\varepsilon_v\sim\mathrm{Unif}(0,0.5)$ achieves the overall performance with the lowest MAE, RMSE, CRPS and a competitive MAPE). On the other hand, $g(v)=\deg(v)+\varepsilon_v$ and $g(v)=\deg_{\mathrm{med}}+\deg(v)+\varepsilon_v$ result in only modest changes. In other words, the skeleton is robust as long as the scoring function preserves the underlying degree-based ordering, which also empirically supports the stability analysis of Section~\ref{sec:mainstablemma}. More generally, we see that the choice of $g$ depends on the underlying physical structure, and for applications governed by different processes, the scoring function can be adapted to emphasize alternative node attributes while preserving the overall framework and its theoretical guarantees.

\begin{table}[H]
\centering
\caption{Isolating the Morse skeleton on AIR-GZ (mean $\pm$ std over 5 seeds): a random skeleton of matched size vs.\ Morse skeletons under different vertex scoring rules $g$, with identical architecture, training configuration, and checkpoint.}
\label{tab:morse_controls}
\resizebox{0.95\textwidth}{!}{
\begin{tabular}{lcccc}
\toprule
Conditioning skeleton & MAE & RMSE & MAPE & CRPS \\
\midrule
Random skeleton (matched size) & 10.19767$\pm$0.00002 & 15.46738$\pm$0.00003 & 0.41979$\pm$0.00000 & 0.25223$\pm$0.00105 \\
Morse, $g(v)=\deg(v)+\varepsilon_v$ & 9.72684$\pm$0.00000 & 15.20191$\pm$0.00003 & {\bf 0.38107$\pm$0.00000} & 0.23853$\pm$0.00084 \\
Morse, $g(v)=\deg_{\mathrm{med}}+\deg(v)+\varepsilon_v$ & 10.44039$\pm$0.00004 & 15.30176$\pm$0.00003 & 0.46580$\pm$0.00000 & 0.24502$\pm$0.00112 \\
{\bf Morse, $g(v)=\deg_{\max}-\deg(v)+\varepsilon_v$ (Ours)} & {\bf 9.58643$\pm$0.00001} & {\bf 14.92188$\pm$0.00002} & 0.38942$\pm$0.00000 & {\bf 0.23099$\pm$0.00108} \\
\bottomrule
\end{tabular}}
\end{table}

\section{Additional Experiments}
\label{app_addexp}

Table~\ref{tree_gen} demonstrates that MG-Diff achieves the lowest MMDs for the key graph summaries including degree, wavelet, spectrum, and orbits, with relative gains from 3.2\% to 49\%. (MMD stands for Maximum mean discrepancy which quantifies distribution difference of the graph summary statistics of generated and test graphs.) Note that MG-Diff yields outstanding 1.7 and 2 times lower MMD for degree and orbits, respectively. This phenomenon can be explained directly by the topological skeleton conditioning, since tree-structured graphs have no independent cycles, so $E^{\mathrm{crit}} = \emptyset$ and the entire edge set coincides with the spanning forest $E^{\mathrm{pair}}$. The Morse decomposition hence gives the model complete structural guidance, non-critical expert  $f_{\alpha'}$ processes the full graph as a tree backbone, with gradient flow directed toward high-degree hub vertices by the degree-based scoring function. This means the skeleton explicitly encodes the branching hierarchy of the tree: which nodes are hubs (critical nodes that anchor the spanning tree) and which are peripheral leaves. The degree distribution is a readout of this hierarchy, where a node's degree in a tree is precisely its number of children plus one (for its parent edge), which the non-critical expert's message passing along $E^{\mathrm{pair}}$ captures by construction. Without this conditioning, baseline models must discover the correct degree hierarchy implicitly from data, which is precisely where they lose fidelity. The orbit (graphlet) improvement is more pronounced because orbit statistics count higher-order subgraph patterns such as stars, paths, and their combinations that are composed of exactly the local branching motifs the spanning-tree expert specializes in. In a tree, every orbit pattern is determined by the local branching structure of the spanning forest, and the Morse skeleton provides the denoiser with a direct encoding of that structure at every denoising step.

Performance of all models on clus. is virtually indistinguishable, which is due to the fact that the clustering coefficient %$c(v) = 2T(v) / (d(v)(d(v)-1))$ 
measures triangle density around each node, and trees contain no triangles by definition ($\beta_1 = 0$ implies no cycles, no triangles). The clustering coefficient is hence identically 0 for every node, making the clustering MMD trivially zero for any model that produces valid trees. This metric provides no discriminative power on tree-structured data, and the fact that all models match on it is an artifact of the data's acyclic topology rather than a reflection of model quality. 

In summary, the Morse skeleton's advantage is precisely in capturing the global structural organization, for degree hierarchy and higher-order neighborhood patterns, where the tree backbone conditioning provides genuine information that baselines lack.

Finally, Table~\ref{robustness_analysis} presents a robustness study for graph diffusion in application to spatio-temporal forecasting, comparing MG-Diff to the next best model on AIR-GZ (USTD). We add Gaussian noise with $\mu=0$ and $\sigma=1$ to features.
We observe that the conditional loss in MAE of MG-Diff s 4.61\% vs. 7.83\% for for USTD, while the conditional loss in MAE of MG-Diff is 3.20\% vs. 4.71\% for USTD.
This underlines the theoretical stability guarantees of Morse-guided diffusion with respect to noisy conditions (see Section~\ref{sec:mainstablemma}).

\begin{table}[H]
\centering
\caption{
Graph generation performance on tree data}
\label{tree_gen}
\begin{tabular}{lccccc}
\hline
Method$\downarrow$ & Degree $\downarrow$  & Wavelet $\downarrow$  & Spec.$\downarrow$  & Clus. & Orbit $\downarrow$ \\
\hline
DiGress & 0.000374 &  0.00567 & 0.0175 & 0 & 0.000611 \\
DeFoG & \underline{0.000349} &  \underline{0.00517} &  \underline{0.0149} &  0 & \underline{0.000335} \\
BWFlow & {0.003180} &  0.01920 &  0.0300 &  0 & 0.000840
\\ \hline
{\bf MG-Diff (Ours)} & {\bf 0.000204} & {\bf 0.00500} &  {\bf 0.0112} & 0  &{\bf 0.000175}
\\
\hline
\end{tabular}
\end{table}

\textbf{Empirical scalability of Morse construction. } To assess the scalability beyond the sensor graphs of Table \ref{tab:dataset_stats}, we construct the Morse skeleton on BSF induced subgraphs of ogbn-arxiv (created by taking subgraphs of 1k, 10, 25, 50, and all 169k nodes). Beginning from the highest-degree node, we retain all edges among the selected nodes, remove self-loops, and treat the citation graph as undirected. We track the wall-clock time for each graph size and process the RSS memory required to construct the vertex scores, the Morse function, the critical cells, and the gradient pairings (Table~\ref{tab:scalability}) We preform the Morse construction only once during model initialization, and therefore the resulting Morse structure is cached within the model and introduces no additional computation during training iterations. We note that after testing the practical overhead is minimal, and on ogbn-arxiv the Morse construction requires $\sim 2.2$ seconds and $\sim25$MB of additional memory (it incurs no per-epoch and per-iteration cost during optimization).

\begin{table}[H]
\centering
\caption{Runtime and memory consumption of the one-time Morse construction on BFS-induced subgraphs of ogbn-arxiv. Memory is the process RSS before and after construction.}
\label{tab:scalability}
\begin{tabular}{rrrrr}
\toprule
\#Nodes & \#Edges & Construction time (s) & Start memory (MB) & End memory (MB) \\
\midrule
1,000   & 1,333     & 0.0049 & 666.00 & 666.00 \\
5,000   & 13,530    & 0.0333 & 666.64 & 666.64 \\
10,000  & 44,175    & 0.0912 & 666.89 & 666.89 \\
25,000  & 353,582   & 0.6185 & 666.90 & 666.90 \\
50,000  & 669,889   & 1.1644 & 670.53 & 680.37 \\
169,343 & 1,157,799 & 2.2019 & 681.35 & 706.65 \\
\bottomrule
\end{tabular}
\end{table}

\textbf{Robustness to missing events. } In Table \ref{tab:scalability} we further evaluate MG-Diff under further increasing missing-event rates on AIR-GZ. To do so, we randomly remove $5\%, 10\%, \text{ or }20\%$ of the originally observed entries from the historical input of each test window, and instead replace the removed values with 0 in the standardized space. We see that MG-Diff consistently outperforms USTD across all corruption levels on MAE, RMSE, and CRPS; more importantly, it exhibits substantially stronger robustness as the corruption severity increases. Indeed from the clean setting to 20\% missing events, MG-Diff experiences only 18.3\%, 11.5\%, and 20.5\% relative degradations in MAE, RMSE, and CRPS respectively, whereas USTD suffers larger desegregations of of 27.1\%, 15.0\%, and 35.2\%. We see that similar trends tend to hold under both 5\% and 10\% missing rates. In other words, the gracefulness of this degradation follows directly from the model conditioning, as the Morse skeleton is derived from the graph structure and is much more stable than the raw signal. Missing entries corrupt the node observations, but the skeleton remains a compact summary of the persistent cycles and bottlenecks along which the signal propagates, so MG-Diff retains an intact structural prior even at 20\% missingness and can propagate information along the preserved skeleton to compensate for locally missing entries, whereas a topology-blind baseline such as USTD has only the increasingly corrupted observations to rely on. Again, we note this to be the empirical realization of Theorem \ref{lem:margin_preserves_skel}, where the conditioning signal is stable under input perturbations below the radius, so the topological prior is a source of robustness rather than a new sensitivity. Furthermore it explains why the advantage is most seen in CRPS (20.5\% vs.\ 35.2\% degradation at 20\%), as the skeleton encodes where alternative routes, and in that way predictive uncertainty live so the calibration of the predictive distribution benefits most from the preserved structure.

\begin{table}[H]
\centering
\caption{Robustness to missing events on AIR-GZ. Top: absolute performance under increasing missing-event rates. Bottom: relative degradation with respect to the clean setting.}
\label{tab:missing_events}
\resizebox{0.95\textwidth}{!}{
\begin{tabular}{lc|ccc|ccc}
\toprule
\multirow{2}{*}{Corruption} & \multirow{2}{*}{Rate} & \multicolumn{3}{c|}{MG-Diff (Ours)} & \multicolumn{3}{c}{USTD} \\
 & & MAE & RMSE & CRPS & MAE & RMSE & CRPS \\
\midrule
Clean          & 0\%  & {\bf 10.259} & {\bf 15.201} & {\bf 0.2452} & 10.725 & 15.713 & 0.2750 \\
Missing events & 5\%  & {\bf 10.449} & {\bf 15.458} & {\bf 0.2508} & 11.271 & 16.196 & 0.2938 \\
Missing events & 10\% & {\bf 10.815} & {\bf 15.837} & {\bf 0.2612} & 11.936 & 16.725 & 0.3174 \\
Missing events & 20\% & {\bf 12.132} & {\bf 16.945} & {\bf 0.2953} & 13.628 & 18.073 & 0.3718 \\
\midrule
\multicolumn{2}{l|}{Degradation at 5\%}  & $-1.85\%$  & $-1.69\%$  & $-2.28\%$  & $-5.09\%$  & $-3.07\%$  & $-6.84\%$ \\
\multicolumn{2}{l|}{Degradation at 10\%} & $-5.42\%$  & $-4.18\%$  & $-6.55\%$  & $-11.29\%$ & $-6.44\%$  & $-15.42\%$ \\
\multicolumn{2}{l|}{Degradation at 20\%} & $-18.26\%$ & $-11.47\%$ & $-20.46\%$ & $-27.07\%$ & $-15.02\%$ & $-35.22\%$ \\
\bottomrule
\end{tabular}}
\end{table}

\subsection{Morse guidance, a discussion. } We would like to discuss that MG-Diff targets two information flow over a fixed graph (forecasting) and graph regeneration (the process). For forecasting, signal propagates over a fixed topology, and the discrete Morse skeleton identifies the bottlenecks and topological events. Specifically, cycles (alternative routes), and junctions where flow concentrates are exactly what a forecaster must reconstruct to capture the signal distribution and its uncertainty. Therefore, MG-Diff is most helpful when the dynamics are shaped by these cycles and bottlenecks.  This can be observed through AIR-BJ (containing ring roads with high $\beta_1$) which case the largest CRPS gain, AIR-GZ (containing a dense, opaque skeleton) the smallest, and PEMS in between.

In the graph regeneration case, topology itself changes as the graph is built and so the Morse structure marks the topological events of that process (namely cycle birth/death, component merging/splinting). Conditioning helps when valid samples require finding proper events (e.g., molecular rings, meshes, or utility networks) and local edge statistics alone drift away from them. MG-Diff is least helpful for forecasting on tree-like graphs (where $\beta_1 \approx 0$ see Table \ref{tree_gen}), where $E^\text{crit}$ is empty and MoMoE collapses to one expert and MG-Diff reduces to standard conditional diffusion. In that case, a pure hierarchy is our weakest case, as hierarchy lives in the standard paired-edge forest rather than in $\beta_1$, and the gain appears hen the hierarchy coexists with cycles. Furthermore, the salience of cycle structure is quantified by $\beta_1$ and can be controlled by fixing a spanning tree on $N$ nodes ($\beta_1 = 0)$ and adding $r$ random chord edges yields $\beta_1 = r$ (since each cord adds exactly one independent cycle). 

\begin{table}[H]
\centering
\caption{
Graph generation performance on tree data}
\label{tree_gen}
\begin{tabular}{lccccc}
\hline
Method$\downarrow$ & Degree $\downarrow$  & Wavelet $\downarrow$  & Spec.$\downarrow$  & Clus. & Orbit $\downarrow$ \\
\hline
DiGress & 0.000374 &  0.00567 & 0.0175 & 0 & 0.000611 \\
DeFoG & \underline{0.000349} &  \underline{0.00517} &  \underline{0.0149} &  0 & \underline{0.000335} \\
BWFlow & {0.003180} &  0.01920 &  0.0300 &  0 & 0.000840
\\ \hline
{\bf MG-Diff (Ours)} & {\bf 0.000204} & {\bf 0.00500} &  {\bf 0.0112} & 0  &{\bf 0.000175}
\\
\hline
\end{tabular}
\end{table}

\section{Limitations}
\label{app: limitations}

We find that one of the main current limitations of the proposed ideas on the Morse skeleton is the lack of formal statistical inference. However, this is a fundamental question at the interface of low dimensional topology and mathematical statistics, and we leave it a standalone future research direction.

\section{Broader Impacts}\label{app: broader_impacts}
We believe that the proposed methodology will open a way for more principled topological uncertainty quantification, thereby leading to positive societal impacts in application requiring risk assessment. We do not envision any negative societal impacts. 

\section{Usage of LLM}\label{app: use_llm}
Large language models (LLMs) were only used for rephrasing sentences and correcting grammar, and were not used for designing experiments, generating results, or contributing to the scientific content of the paper.

\end{document}